\def \version { version 2 }
\title{%
Probabilistic Inference Using Generators  \\[16pt]
  \Large The \textit{Statues} Algorithm   \\[16pt] }
\author{Pierre Denis\thanks{independent scholar, Louvain-la-Neuve, Belgium -- pie.denis@skynet.be}}
\date{ \small \version -- \today }
\theoremstyle{definition}
\newtheorem{defn}{Definition}[section]
\newtheorem{thm}{Theorem}[section]
\newtheorem{prop}[thm]{Proposition}
\newtheorem{case}{Case}
\makeatletter\@addtoreset{case}{thm}\makeatother
\algnewcommand\algorithmicswitch{\textbf{switch}}
\algnewcommand\algorithmiccase{\textbf{case}}
\algnewcommand\algorithmicassert{\texttt{assert}}
\algnewcommand\Assert[1]{\State \algorithmicassert(#1)}%
\newcommand {\re} [1] { \Diamond #1 }
\newcommand {\rep} [1] { \re\,{\p{#1}} }
\newcommand {\bp} [1] {\widehat{#1}}
\newcommand {\tuple} [1] {[\,#1\,]}
\newcommand {\mixt} [1] {\rhd \bigl\{ #1 \bigl\}}
\newcommand {\dict} [1] {\bigl\{ #1 \bigl\}}
\newcommand {\utablere} [2] {#1 \unrhd #2}
\newcommand {\tablere} [2] {\utablere{#1}{\dict{#2}}}
\newcommand {\eq} [2] { #1 \hspace{-2.4pt}=\hspace{-2pt} #2 }
\newcommand {\efunc} [2] { \wideparen{#1}(#2) }
\newcommand {\egiven} [2] { #1 \obar #2 }
\newcommand {\etable} [2] { #1 \circledcirc \big\{ #2 \big\}}
\newcommand {\true} {\mathsf{true}}
\newcommand {\false} {\mathsf{false}}
\newcommand{\Lea} {Lea}
\newcommand{\mLea} {$\mu$Lea\;}
\newcommand {\dpd} [1] {\langle #1 \rangle}
\newcommand \given {\:\big\vert\:}
\newcommand {\dpdf} [1] {\bigl \{ \: #1 \: \bigl \}}
\newcommand {\pmf} [1] {\llbracket #1 \rrbracket}
\newcommand {\p} [1] { \mathsf{t}(#1) }
\newcommand {\Or} [3] {\bigvee\limits_{\substack{#1\\#2}} #3}
\newcommand {\PR} [1] { \Pr\big( #1 \big)}
\algrenewcommand\algorithmicprocedure{\textbf{generator}}
\algnewcommand\Yield{\State \textbf{yield} }
\begin{document}
    
\maketitle

\begin{abstract}
We present here a new probabilistic inference algorithm that gives exact results in the domain of discrete probability distributions. This algorithm, named the Statues algorithm, calculates the marginal probability distribution on probabilistic models defined as direct acyclic graphs. These models are made up of well-defined primitives that allow to express, in particular, joint probability distributions, Bayesian networks, discrete Markov chains, conditioning and probabilistic arithmetic. The Statues algorithm relies on a variable binding mechanism based on the generator construct, a special form of coroutine; being related to the enumeration algorithm, this new algorithm brings important improvements in terms of efficiency, which makes it valuable in regard to other exact marginalization algorithms.

After introduction of several definitions, primitives and compositional rules, we present in details the Statues algorithm. Then, we briefly discuss the interest of this algorithm compared to others and we present possible extensions. Finally, we introduce Lea and MicroLea, two Python libraries implementing the Statues algorithm, along with several use cases.  A proof of the correctness of the algorithm is provided in appendix.
\end{abstract}

{\bf Keywords:} probabilistic inference, probabilistic arithmetic, discrete probability distribution, probabilistic model, Bayesian network, marginalization, generator

\section{Introduction}

Probabilistic inference is a field that is receiving renewed attention with latest developments in AI and probabilistic programming. Nowadays, problems characterized by some uncertainty can be modeled using different approaches, formalisms, primitives and levels of expressiveness: joint probability distributions, Bayesian networks, Markov chains, hidden Markov models, probabilistic arithmetic, probabilistic logic, etc. Many of these approaches can be further refined according to the type of random variables, whether discrete, continuous or mixed. The modeling approach depends, at least, on the kind of problem at hand, the availability of data and the provision of efficient algorithms to perform the required probability calculations.

In order to perform actual problem resolution, each modeling approach has its own catalogue of algorithms, characterized by different merits and trade-offs -- see \cite{russell_norvig} or \cite{de_raedt} for an overview. Several algorithms produce exact results but may be limited practically by complexity barriers whilst other algorithms can deal with intractable problems by delivering approximate results. In the specific case of Bayesian networks (BN), exact algorithms include enumeration, belief-propagation, clique-tree propagation, variable elimination and clustering algorithms; on the other hand, approximate algorithms include rejection sampling, Gibbs sampling, Monte-Carlo Markov Chain (MCMC). About the computational complexity of probabilistic inference, it has been proved that the problem of inference in unconstrained BN is NP-hard \citep{cooper}. This statement has set some limitations on exact algorithms and probably gave momentum on the research for efficient approximate inference algorithms; however, those approximate algorithms were in turn acknowledged for similar complexity categorization \citep{dagum}. Beside the Bayesian reasoning domain, probabilistic arithmetic and, more generally, the study of deterministic functions applied on random variables ($+$, $-$,  $\times$, $/$, min, max, etc.) constitutes a research field on its own; we can cite at least convolution-based approaches \citep{williamson, williamson_downs,agrawal,evans} and discrete envelope determination (DEnv) \citep{berleant_1998,berleant_2003}.

These well-established algorithms, in their original formulations, are specialized for one single modeling approach. In particular, algorithms for probabilistic arithmetic do not handle Bayes reasoning or even simple conditioning; on the other hand, above-cited inference algorithms for BN do not handle arithmetic (e.g. the sum of two random variables of the BN, whether latent or observed). Also, many BN algorithms handle only \emph{observations} expressible as conjunctions of equalities; without extensions, these algorithms cannot treat the \emph{conditioning} in its generality, that is considering any boolean function of the BN variables as a possible assertion. In short, early probabilistic models and associated algorithms has been constrained by some compartmentalization. These limitations tend now to disappear with the advent of probabilistic programming (PP) and richer probabilistic models that can mix several approaches together. Creating new efficient algorithms --or improving existing ones-- to perform exact inference on these probabilistic models is incidentally an active domain of research.

Following this trend, the present paper introduces a new algorithm for the task of exact marginalization inference in the scope of discrete random variables. It is in essence a variant of the enumeration algorithm that provides important improvements for reducing both the number of paths to explore and the number of calculations to perform. The enabler of this algorithm is the \emph{generator} construct, a special case of coroutine \citep{knuth,saba}, which is available in several modern programming languages. The generators provide a great interest for combinatorial generation \citep{saba} even though these seem to be overlooked in computer science literature: generators are not widely used in published algorithms, for which the subroutine construct is prevalent. To the best of our knowledge, at the time of writing, no probabilistic inference algorithm using generators have been published yet.\footnote{The \emph{continuation} construct, originated from functional languages is another way to achieve coroutines. It is worth pointing out that "continuation passing style" (CPS) is used in marginalization algorithms of WebPPL, a modern probabilistic programming language based on JavaScript \citep{dippl}.}

The paper is organized as follows. Section \ref{sect:RV} introduces a probabilistic modeling framework using discrete random variables; this framework defines a set of primitives to build up probabilistic models; some examples provide substantiations that these primitives are rich enough to express (in particular) joint probability distributions, multivariate variables, conditioning, Bayesian networks, Markov chains and probabilistic arithmetic. Section \ref{sect:eval} sets up the problem of marginalization inference as calculating the probability distribution of a derived random variable; the concept of \emph{p}-expression is defined as a general data structure to represent dependencies between random variables as a direct acyclic graph (DAG); then, the Statues algorithm is presented in details. Section \ref{sect:discussion} discusses the salient points of the Statues algorithm compared to some well-established algorithms. Section \ref{sect:extensions} describes possible extensions. Section \ref{sect:lea} introduces the Lea and MicroLea libraries, two implementations of the Statues algorithm, along with several use case snippets. Three appendixes are provided: appendix \ref{sect:generators} makes a short introduction to generators, appendix \ref{sect:implem_design} gives hints on implementation and appendix \ref{sect:proof} provides a proof of the correctness of the algorithm.

\section{Probabilistic modeling framework} \label{sect:RV}

We shall model randomness using discrete random variables with a finite domain. We do not put any restriction on the domains provided that these are discrete and finite: these can be numbers, matrices, symbols, booleans, tuples, functions, propositions, etc\footnote{The term "random variable" is \textit{stricto senso} specific to real number domains. This limitation is deliberately set aside here for the sake of generality. To be rigorous, we should use the term "random element", which comes from the pioneering work of \citet{frechet} and which subsumes the definition of a random variable. Also, we deliberately avoid the mathematical formalism of probability spaces ($\Omega, \mathcal{F},\mathcal{P})$ even if the present framework could be expressed using this formalism.}. Also, it is \emph{not} required to have an order relationship on the domain; such requirement is compulsory to calculate the cumulative distribution function but such function is not needed in our approach. In the context of the present paper, we shall abbreviate the object characterized above as a "random variable" or simply "RV".

In the following subsections, we shall introduce the different types of RVs, as primitives to build up probabilistic models. We shall distinguish \emph{elementary} RVs, which are defined on their own, and \emph{derived} RVs, which are defined in terms of other RVs.

\subsection{Elementary random variables}

\begin{defn}
An \emph{elementary random variable} is a random variable with a given finite domain and characterized by a given prior probability mass function.
\end{defn}

Elementary RVs are the most basic RVs. They require specifying prior probability for each possible value of their domains. Note that, since we constrain the domain of RV to be discrete and finite, an elementary RV can be called also a \emph{categorical distribution}. The probability mass function (pmf) shall obey Kolmogorov axioms: the individual probabilities shall be nonnegative and the sum of all probabilities over the domain shall be 1. We exclude the Poisson and hypergeometric distributions, which are discrete but not finite; such distributions could however be approximated, for example, by considering only the finite set of values having a probability above a given threshold and normalizing the probabilities to have a total of 1. Continuous random variables are excluded but their probability density functions can be approximated through discretization; several methods exist for this purpose, with known shortcomings \citep{berleant_1998,agrawal}.

An example of elementary RV is the result obtained by flipping a fair coin. We can model this by defining RV $F$ with dom$(F) \triangleq \{ \mathsf{tail},\mathsf{head} \} $ and a uniform pmf defined using the following notation -- borrowed from \cite{williamson}:
$$ F \sim \dpdf{ (\mathsf{tail} , \tfrac{1}{2}),(\mathsf{head} , \tfrac{1}{2}) } $$

Another example is the value $D$ got after throwing a fair die:
$$ D \sim \dpdf{(\text{1},\tfrac{1}{6}),(\text{2},\tfrac{1}{6}),(\text{3},\tfrac{1}{6}),(\text{4},\tfrac{1}{6}), (\text{5},\tfrac{1}{6}), (\text{6},\tfrac{1}{6}) } $$

For the sake of simplicity, we assume here that all pmf are defined by extension (as above), even if other formulation could be handled without much difficulties. Also, we shall forbid values with a null probability and duplicate values; this can be done easily by a \emph{condensation} treatment \citep{kaplan}, which removes elements with null probabilities and merges equal elements together while adding their probabilities. For instance the following non-condensed pmf $\dpdf{ (\mathsf{tail} , \tfrac{1}{2}),(\mathsf{head},\tfrac{3}{8}),(\mathsf{head},\tfrac{1}{8}),(\mathsf{tie},0) }$ is equivalent to the above-defined $F$ pmf.


It is important to avoid confusion between a RV and the pmf that characterizes it. In the following, a letter with a hat (e.g. $\bp{a}$) refers to a given pmf; by prepending a diamond on a given pmf, we designate an elementary RV characterized by this pmf. So, $\re{\bp{a}}$ is a RV having $\bp{a}$ as pmf or, for short,
$$ \re{\bp{a}} \; \sim \; \bp{a} $$
Thanks to this formalism, the elementary RV $F$ seen above could equivalently be defined as
$$ F \coloneqq \re{\dpdf{(\mathsf{tail} , \tfrac{1}{2}) , (\mathsf{head} , \tfrac{1}{2})}} $$

As a special case, we admit any elementary RV having a domain of one unique element; such RV is then certain and has a probability of 1. For instance, the usual $\pi$ number can be represented by the RV $\re{\dpdf{(\pi,1)}}$. Even if there is no randomness in such dummy RV, this assimilation shall simplify our inference algorithm.

Let us stress that any two distinct elementary random variables are independent by definition. Each occurrence of the diamond notation creates a brand new independent RV, even if applied on the same pmf. For example, let us consider the pmf $ c \coloneqq \dpdf{ (\mathsf{tail} , \tfrac{1}{4}),(\mathsf{head} , \tfrac{3}{4}) }$ and the two boolean RVs defined as $A \coloneqq \re{c}$ and $B \coloneqq \re{c}$. Then, $A$ and $B$ represent two independent events, e.g. two throws of the same biased coin. To emphasize this fact, we can compare the defined RVs together: the equality $A = A$ is always true but the equality $A = B$ has only a probability $\tfrac{5}{8}$ to be true since $\Pr(A=B) = \tfrac{1}{4}.\tfrac{1}{4} + \tfrac{3}{4}.\tfrac{3}{4}$. This topic will be elaborated later, by defining the concepts of referential consistency (\ref{sect:ref_const}) and functional RVs (\ref{sect:func_RV}).

Since we admit any domain for our elementary RVs, two special cases are worth mentioning: boolean RVs and joint probability distributions.
 
\subsubsection{Boolean random variables}

\begin{defn}
A RV $C$ is defined as \emph{boolean} iff the domain of $C$ contains no other values than booleans $\{\true,\false\}$.
\end{defn}

For convenience, we shall use the notation $\p{p}$ to represent a pmf with a given probability $p$ to be true:
$$ \p{p} \triangleq \dpdf{(\true , p) ,\: (\false , 1-p)} $$

For instance, the elementary RV defining that a fair die shows the value 4 can be notated $\rep{\tfrac{1}{6}}$. Note that, as trivial special cases, we can write the  relations $\true \sim \p{1}$ and $\false \sim \p{0}$. The classical $\Pr(A)$ notation representing the probability of occurrence of a given event $A$ can then be defined as the inverse of the above-defined notation, i.e.
$$ \Pr(\rep{p}) \triangleq p $$

\subsubsection{Joint probability distributions} \label{sect:jpd}
The concept of elementary RV allows defining joint probability distributions (also known as multivariate distributions). The way to proceed consists in defining a RV with a set of tuples for domain; each tuple represents a possible outcome and each element of the tuple represents a given attribute (or measure) of this outcome. For example, here is a joint probability distribution linking the weather and someone's mood:
\begin{align*}
J \coloneqq \re{ \dpdf{ & (\tuple{\mathsf{rainy},\mathsf{sad}}, 0.20), \\
                        & (\tuple{\mathsf{rainy},\mathsf{happy}}, 0.10), \\
                        & (\tuple{\mathsf{sunny},\mathsf{sad}}, 0.05), \\
                        & (\tuple{\mathsf{sunny},\mathsf{happy}}, 0.65)} }
\end{align*}

Joint probability distributions allow modeling interdependence between random phenomena. This is the case in the example above since, in particular, the joint probability $\Pr(J=\tuple{\mathsf{sunny},\mathsf{happy}}) = 0.65 $ is not equal to the product of marginal probabilities $\Pr(J_1=\mathsf{sunny}).\Pr(J_2=\mathsf{happy}) = (0.05+0.65).(0.1+0.65) = 0.525$. Such construction is of course not suited when the number of outcomes or attribute become large; in section \ref{sect:table_RV}, we shall see how to model Bayesian networks, which allows trading off generality with compactness.

\subsection{Derived random variables} \label{sect:derived_RV}

Beside elementary RV, a random variable may also be defined in terms of other random variables.

\begin{defn}
A \emph{derived random variable} is a random variable that is defined by a given deterministic dependency on a given finite set of random variables.
\end{defn}

The basic idea of this recursive definition is that elementary RV can be used to define derived RV that, in turn, can be used to define other derived RV and so on up to elementary RVs. Of course, the terms "deterministic dependency" used in the definition is vague; the precise definition of these terms shall be elaborated throughout the present section. Note that, contrarily to elementary RV, the pmf of a derived RV is not given a priori, it shall result from a calculation. The sole requirement that we will put on the dependency is that it must allow calculating the exact pmf in a finite time, from the known pmf of the underlying elementary RVs.

We shall define the dependency of a derived RV as belonging to one of the four types \emph{tuple}, \emph{functional}, \emph{conditional} and \emph{table}. These are detailed in the following subsections, after introducing the concept of referential consistency.

\subsubsection{Referential consistency} \label{sect:ref_const}

Before covering the types of derived RVs, it is important to state a general rule that they shall obey: when defining a derived RV $Y$ depending on a given RV $X$, the value randomly chosen for each occurrence of $X$ shall be the same. To give a simple example, consider an elementary RV $X$ having $\{0,1\}$ as domain; if we define the RV $Y \coloneqq X+X$, then occurrence of $Y$ and the two occurrences of $X$ refer to the same outcome, which is the unique drawing of 0 or 1; therefore, the domain of $Y$ is $\{0,2\}$. We shall refer to this constraint as \emph{referential consistency}\footnote{Actually, this statement would have been automatically granted if we had used definitions in terms of probability spaces, since the random variables refer to the same outcome $\omega$ of the sample space $\Omega$. We judge that it is better to make this statement explicit anyway because, as we will see, it puts important constraints on the representation of the probabilistic model and on the inference algorithm that treats it.}. Many examples showing the importance of the concept will be given throughout next sections.

Note that this constraint is closely linked to the concept of \emph{stochastic memoization} found at least in Church \citep{goodman} and WebPPL \citep{dippl}. The two concepts actually enforce the same consistency constraint. The difference lies, to the best of our knowledge, in the fact that referential consistency applies on \emph{exact} probabilistic inference, whereas stochastic memoization applies on \emph{approximate} probabilistic inference (e.g. MCMC).

\subsubsection{Tuple random variables} \label{sect:tuple_RV}
The first type of derived RV is defined by grouping a given set of RVs into one tuple.
\begin{defn}
Be $n$ random variables $X_1, \;\ldots\; , X_n$ with $n \ge 1$. The RV $T$ defined as the tuple
$$ T \coloneqq \tuple{X_1, \;\ldots\; , X_n} $$
is called a \emph{tuple RV}.
\end{defn}
For instance, we can define a 2-tuple RV $T$ that is made up of two elementary RV having Bernoulli distributions:
\begin{align*}
B_1 & \coloneqq \re{\dpdf{(0,\tfrac{1}{2}), \; (1,\tfrac{1}{2})}} \\
B_2 & \coloneqq \re{\dpdf{(0,\tfrac{3}{4}), \; (1,\tfrac{1}{4})}} \\
T & \coloneqq \tuple{B_1,B_2}
\end{align*}
The pmf of $T$ can be calculated by enumeration:
$$ T \sim \dpdf{(\tuple{0,0},\tfrac{3}{8}), \; (\tuple{0,1},\tfrac{1}{8}), \; (\tuple{1,0},\tfrac{3}{8}), \; (\tuple{1,1},\tfrac{1}{8})} $$

Note that a tuple is defined as a sequence of elements, so the order of these elements is significant. Let us note for instance that swapping the inner RV in the tuple definition results in another distribution:
$$ U \coloneqq \tuple{B_2,B_1} $$
$$ U \sim \dpdf{(\tuple{0,0},\tfrac{3}{8}), \; (\tuple{0,1},\tfrac{3}{8}), \; (\tuple{1,0},\tfrac{1}{8}), \; (\tuple{1,1},\tfrac{1}{8})} $$

A rather contrived example is given when a RV appears twice in the same tuple:
$$ V \coloneqq \tuple{B_2,B_2} $$
Then, the referential consistency forces the two elements to be the same; the pmf is then
$$ V \sim \dpdf{(\tuple{0,0},\tfrac{3}{4}), \;  (\tuple{1,1},\tfrac{1}{4})} $$

Note that a tuple RV containing elementary RV (as seen here) is not equivalent to an elementary RV containing tuples (as seen in \ref{sect:jpd}, for joint probability distributions). In both cases, the domain is a set of tuples; however, a tuple RV is a derived RV and it cannot be used to \emph{specify} a joint probability distribution. Note also that, by definition, the empty tuple $\tuple{}$, is \emph{not} a tuple RV: it is an elementary RV having a probability of 1.

For non-empty tuples, we shall adopt the following LISP-like notation
$ \tuple{H \centerdot T} $ to represent a tuple with $H$ as first element (the "head") and $T$ as a tuple with remaining elements (the "tail"). So, the tuple RV $\tuple{B_1,B_2}$ defined above could be written as $\tuple {B_1 \centerdot \tuple{ B_2 \centerdot \tuple{}}}$.

\subsubsection{Functional random variables} \label{sect:func_RV}
The second type of derived RV is defined by application of a function on other RVs.
\begin{defn}
Be a random variables $X$ and an unary function $f$ which domain includes the domain of $X$. The RV $Y$ defined as 
$$Y \coloneqq f(X)$$
is called a \emph{functional RV}.
\end{defn}
Let us stress that $f$ is meant here to be a \emph{pure} function, that is deterministic and without side-effect: once the value of argument RV is defined ($X = x$), the value of $Y$ is uniquely defined ($Y = f(x)$). A functional RV can use any algorithm, provided that it can evaluate the result in a finite time, whatever the value given in argument. $n$-ary functions with $n > 1$ can easily be converted to unary functions by packing the arguments into a tuple RV \footnote{For instance, a 2-ary function $g$ shall be converted into a unary function $g'$, such that $g'(\tuple{X,Y}) \triangleq g(X,Y)$.}. Such treatment may seem odd at this stage but we shall see later that it makes the inference algorithm simpler.

Functional RVs cover, among others, a large set of basic mathematical operations (we assume in the following that $N$, $X$, $Y$, $Z$ have numerical domains, and $A$, $B$ have boolean domains):

\begin{itemize}
\item arithmetic: $X+Y$, $X-Y$, $X.Y$, $-X$, $\sqrt{X}$, $X^Y$, etc 
\item comparison: $X = Y$, $X \neq Y$, $X < Y$, $X \le Y$, etc 
\item logical: $\overline{A}$, $A \wedge B$, $A \vee B$, $A \Rightarrow B$, $A \Leftrightarrow B$, etc 
\end{itemize}
and any combinations of these operations, like
$$F \coloneqq \left(N \ge 3 \right) \wedge \left(X^N + Y^N = Z^N \right)$$
which use standard function composition. Note that, to strictly conform to our definition, the value 3 is here considered as an elementary RV giving 3 with a probability 1; also, the infix subexpressions shall be translated using unary functions as explained above, viz.
$$F \coloneqq  and( \tuple{ge(\tuple{N,3}), eq( \tuple{ add(\tuple{pow(\tuple{X,N}),pow(\tuple{Y,N})}), pow(\tuple{Z,N})})}) $$

Let us come back on the rule of referential consistency (\ref{sect:ref_const}). To exemplify the idea, let us consider the following dummy functional RVs: $(X-X)$ is certainly $0$, $(X+X=2X)$ is certainly true, $(X + Y < Y + X)$ is certainly false, etc; also, assuming that the RV's domains are natural numbers, the RV $F$ defined above is certainly false, as stated by the last Fermat theorem proved by \citet{wiles}! To complete the topic, note that the referential consistency holds also if intermediate RVs are defined; for example, if we define $ S \coloneqq (X+Y)^2 $, $U \coloneqq X^2 + Y^2 $ and $ V \coloneqq S - U - 2XY $, then $V$ is certainly $0$. We can remark that the lack of referential consistency is referred with the terms "dependency error" in \citet{williamson} and \citet{williamson_downs}. In contrast to these authors who investigate how these errors can be bounded, we shall outlaw here any such dependency error. As we shall see, the referential consistency is essential in our approach: it enables, amongst others, conditioning and Bayesian inference.

Beside the afore-mentioned common mathematical functions, we could add many other useful functions: checking the membership of an element in a given set, taking the minimum/maximum element of a tuple, summing the elements of a tuple, getting the attribute of an object, etc. Among these functions, the indexing of a given tuple $t$ is worth to mention. As an illustration, let us define $extract(\tuple{t,i})$ as the function giving the $i^{th}$ element of $t$; reconsidering the joint probability distribution $J$ seen in section \ref{sect:jpd}, we see that weather and mood can be defined as functional RVs, respectively $J_1 \coloneqq extract(\tuple{J,1})$ and $J_2 \coloneqq extract(\tuple{J,2})$. Then, the calculations showing the interdependencies between these two RVs can be redone relying on referential consistent and noting that
\begin{align*}
\Pr(\eq{J_1}{\mathsf{sunny}} \wedge \eq{J_2}{\mathsf{happy}}) & \; = \; \Pr(\eq{J}{\tuple{\mathsf{sunny},\mathsf{happy}}}) \\
  & \; \neq \; \Pr(\eq{J_1}{\mathsf{sunny}}) \Pr(\eq{J_2}{\mathsf{happy}})
\end{align*}

\subsubsection{Conditional random variables} \label{sect:cond_RV}
The third type of derived RV is defined by filtering the values of one given RV according to a condition expressed in a given boolean RV.
\begin{defn}
Be the RV $X$ and the boolean RV $E$. The RV $C$ defined as $X$ under the condition $E$ is noted as
$$ C \coloneqq X \given E $$
and is called a \emph{conditional RV}.
\end{defn}

The idea is here to build a new RV from an existing one $X$, with the assurance that the possible values of $X$ are such that the given condition $E$ is true. $E$ could represent an evidence, an assumption or a constraint; $E$ has its own prior probability to be true but, in the present context, it is assumed that it \emph{is} certainly true. Note that it is wrong to consider that $X$ \emph{changes} when evidence $E$ is provided; actually, $X$ keeps its definition unchanged whatever it may happen; $C$ is just a new RV, which is meant to capture some evidence absent from $X$'s definition.

Although not required, the interesting cases happen of course when $X$ and $E$ are dependent each from each other; this occurs if the evidence is a functional RV referring to the conditioned variable  -- i.e. $ X \given h(\tuple{\ldots\;,X,\;\ldots}) $ -- or, more generally, if the evidence and conditioned variables are both functional RVs referring to the same RV -- e.g. $ f(\tuple{\ldots\;,Y,\;\ldots}) \given g(\tuple{\ldots,Y,\ldots})$. As a matter of fact, referential consistency on "shared" RVs like $Y$ is essential for the semantic of conditional RV.

For example, let us define $D_1$ and $D_2$ as the respective values of two fair dice and $D$ as the sum of these two values:
\begin{align*}
& \bp{d} \coloneqq \dpdf{(\text{1},\tfrac{1}{6}),(\text{2},\tfrac{1}{6}),(\text{3},\tfrac{1}{6}),(\text{4},\tfrac{1}{6}), (\text{5},\tfrac{1}{6}), (\text{6},\tfrac{1}{6}) } \\
& D_1  \coloneqq \re{\bp{d}} \\
& D_2  \coloneqq \re{\bp{d}} \\
& D  \coloneqq D_1 + D_2 
\end{align*}
and let us assume that we know, by any means, that first die shows 1 and that the dice total is greater than 5: the conditional RV for the dice total is then written
$$ D \given D_1 = 1 \wedge D > 5 $$ 
which is characterized by the pmf $ \dpdf{(\text{6},\tfrac{1}{2}),(\text{7},\tfrac{1}{2}) } $. To make a link with the classical concept of conditional probability, we simply need to transform the above-defined RV so that it becomes a boolean conditional RV; this can be obtained (in particular) by using a functional RV with an equality, e.g.
$$ \Pr(D = 6 \given D_1 = 1 \wedge D > 5) = \tfrac{1}{2}$$
As another example, we could now assume to have evidences on derived RV and query the explaining RVs: the following conditional RV
$$ D_1 \given D \le 3 $$ 
is characterized by the pmf $ \dpdf{(\text{1},\tfrac{2}{3}),(\text{2},\tfrac{1}{3}) } $. For this instance of causal inference, tuple RVs can bring up explanatory values, by revealing details of each atomic case; for instance, the pmf of the previous RV can easily be understood by tuples giving each die value and their sum:
$$ \tuple{D_1,D_2,D} \given D \le 3 $$ 
which has pmf $ \dpdf{(\tuple{1,1,2},\tfrac{1}{3}),(\tuple{1,2,3},\tfrac{1}{3}),(\tuple{2,1,3},\tfrac{1}{3}) } $. Here is a last example, which is more involved:
$$ D_1 \given D \in \{2,3,12\} \vee \lvert D_1-D_2 \rvert \ge 5 $$
which has pmf $ \dpdf{(\text{1},\tfrac{1}{2}),(\text{2},\tfrac{1}{6}),(\text{6},\tfrac{1}{3}) } $. As stated before, we see in all these examples the importance of referential consistency for getting the correct pmf.

A valid conditional RV $ X \given E $ requires that $X$ can produce at least one value verifying the condition expressed in $E$. This may be violated if $E$ is unfeasible. For instance, $ D_1 > 3 \given D_2 = D $ is invalid since $D_2 = D$ is certainly false (the sum of dice values is strictly greater than any die's). Let us point out that a statement like $ \Pr(D_1 > 3 \given D_2 = D) = 0 $ is not only wrong, it is actually meaningless: since we claim something that is contradictory, \emph{no} probability can be calculated. We shall see that such conditional RV shall be rejected as invalid by our algorithm.

Many probabilistic algorithms constrain evidence conditions to be \emph{observations}, which are equalities of the form $X = x$ or a conjunction of such equalities $X_1 = x_1 \wedge \ldots \wedge X_n = x_n $. Conditional RVs, as defined here, subsume this approach: they cover a far broader class of evidence conditions, for which the usual observations are just special cases. The sole constraint is to be able to express the evidences as a boolean function applying on some RVs; beyond equalities and conjunctions, this includes inequalities, negations, disjunctions, membership, etc.

\subsubsection{Table random variables} \label{sect:table_RV}

The fourth type of derived RV is defined by selecting a random variable in a lookup table based on the value taken by another RV.

\begin{defn}
Be a RV $C$ such that dom$(C) \triangleq \{c_1, \ldots, c_n\} $ with  $n \ge 1$ and be $n$ RV $X_1, \;\ldots\; , X_n$. The RV $T$ depending of $C$ such that, for any $i$,
$$C = c_i \;\; \implies \;\; T = X_i$$
is noted
$$ T \coloneqq \tablere {C} {c_1:X_1, \;\ldots\;, c_n:X_n} $$
and is called a \emph{table RV}.
\end{defn}

The order of RVs in the table is irrelevant. The table RVs allow defining conditional probability tables (CPT), which are used in Bayesian networks. Consider for example the well-known example of "Rain-Sprinkler-Grass" BN. We define three boolean RVs: $R$ represents whether it is raining, $S$ represents whether the sprinkler is on and $G$ represents whether the grass is wet. These RV have mutual dependencies as illustrated in the diagram \ref{fig:rsg_bn} (with the usual conventions of probabilistic graphical models):

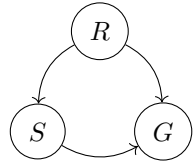
\begin{figure}[H]
\centering
\begin{tikzcd} [column sep=0.2em, row sep=0.8em, cells={nodes={circle,draw}}]
 & R \ar[ddl, bend right] \ar[ddr, bend left] & \\
 && \\
 S \ar[rr, bend right] && G
\end{tikzcd}
\caption{The Rain-Sprinkler-Grass Bayesian network}
\label{fig:rsg_bn}
\end{figure}

$R$ has a prior probability 0.20; the other probabilities and dependencies are quantified using CPTs: $S$'s probability depends of the weather: if it is raining the probability of $S$ is 0.01, otherwise it is 0.40; $G$ depends of both the weather and the grass state; the probabilities for $G$ depending of the values of tuple RV $\tuple{R,S}$ are: $\tuple{\false,\false}: 0.00 ,  \tuple{\true ,\false}: 0.80 , \tuple{\false,\true}: 0.90, \tuple{\true ,\true}: 0.99$. This BN can be modeled as follows: 
\begin{align*}
 R \coloneqq \rep{0.20} & \\
 S \coloneqq \tablere {R} { \true  &: \rep{0.01} , \\
                            \false &: \rep{0.40} } \\
 G \coloneqq \tablere {\tuple{R,S}}{
                                \tuple{\false,\false} &: \false , \\
                                \tuple{\true ,\false} &: \rep{0.80} , \\ 
                                \tuple{\false,\true}  &: \rep{0.90} , \\ 
                                \tuple{\true ,\true}  &: \rep{0.99} }
\end{align*}

Let us examine some queries we could make on this model using conditional RVs (since there are only boolean RVs, the $\Pr$ notation can be used): according to the definition of the table RV seen above, $ \Pr(S \given R) = 0.01$, $ \Pr(S \given \overline{R}) = 0.40$ and $ \Pr(G \given R \wedge \overline{S}) = Pr(G \given \eq{\tuple{R,S}}{\tuple{\true,\false}}) = 0.80$. Of course, the above results are just consistency check of the CPT, bringing no new information. As in any BN, the added value appear when evaluating $\Pr(G \given R) = 0.8019$ (forward chaining) or $\Pr(R \given G) = 0.3577$ (Bayesian inference). 

The table RVs allow modeling any CPT. Note that the number of entries shall be exactly equal to the cardinal of the domain of $C$. This can be cumbersome if this domain is large, e.g. if the condition is a tuple having many inner RVs (the domain of $C$ is the cartesian product of these RVs, provided that they are mutually independent). However, in several CPT, such as those having the property of \emph{contextual independence} \citep{pearl_1982,poole_zhang_2011}, redundancies can be avoided. To take an example, let us revisit the model above by assuming now that the probability to find the grass wet ($G'$) given that the sprinkler is on ($S$) equals 0.95, whatever it rained or not ($R$). Using the approach above naively, this probability 0.95 should be repeated on the last two clauses. Now, we can avoid such redundancy by defining the following cascaded tables construct
\begin{align*}
G' \coloneqq \tablere {S} { 
    \false :  \tablere {R} { \false &: \false ,     \\
                             \true  &: \rep{0.80} } \\ 
     \true :  \rep{0.95} }
\end{align*}
By avoiding redundancies in CPT, hence limiting the table size, it is easy to extrapolate the dramatic simplification gained on larger models. In the section dedicated to possible extensions (\ref{sect:extensions}), we shall present an extra type of derived RV, called \emph{mixture RV}, which offers an alternate way to express a CPT that also leverages contextual independence.

Another application of the table RV is the modeling of discrete-time Markov chains (DTMC). Consider for example the "Weather" DTMC represented in the following graph:
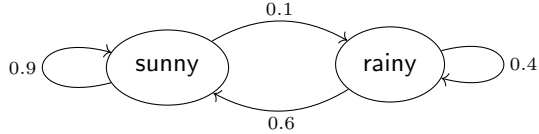
\begin{figure}[H]
\centering
\begin{tikzcd} [column sep=4em, cells={nodes={ellipse,minimum width=1.2cm,minimum height=1.0cm,draw}}]
 \mathsf{sunny} \ar[loop left,"0.9"] \ar[r, bend left,"0.1"] & \mathsf{rainy} \ar[l, bend left,"0.6"] \ar[loop right,"0.4"]
\end{tikzcd}
\caption{A discrete-time Markov chain}
\end{figure}

Assuming that the initial state $W_0$ at $t_0$ is sunny with probability 0.2,
$$ W_0 \coloneqq \re{\dpdf{(\mathsf{sunny},0.2), (\mathsf{rainy},0.8)}} $$
the future states can be modeled by table RVs $W_k$, defined by the following recurrence:
\begin{align*}
 W_{k+1} \coloneqq \tablere {W_k} {\mathsf{sunny} &: \re{\dpdf{(\mathsf{sunny} , 0.9) , (\mathsf{rainy} , 0.1)}}, \\
               \mathsf{rainy}  &: \re{\dpdf{(\mathsf{sunny} , 0.6) , (\mathsf{rainy} , 0.4)}}}
\end{align*}

\section{Exact marginalization inference} \label{sect:eval}

So far, we have defined different types of random variables, which could be interdependent. We have seen that the probabilistic primitives and compositional rules allow defining a large set of probabilistic models. The pmf of elementary RVs are known by definition. The pmf of derived RVs are initially unknown: the goal is now to find a general marginalization algorithm to calculate the pmf of any derived RV. This is the purpose of the Statues algorithm that will be presented in the present section.

The examples seen so far demonstrate that the evaluation of the pmf of a given RV can generally \emph{not} be performed by a simple recursive evaluation, as done for example in usual arithmetic. This constraint includes in particular arithmetic expressions like $X(X+Y)$ and all non-trivial conditional RV like $X \given X \le Y$. Actually, simple recursive evaluation is valid only if inner RVs are independent, that is if each RV occurs only once in the RV expression under evaluation. To obtain correct results in any case, the referential consistency shall be enforced on the top-level RV expression, which shall be considered as a whole. This calls for a dedicated structure for representing the RV dependencies.

\subsection{Representing RVs as \emph{p}-expressions} \label{sect:pex_dag}

The Statues algorithm requires as input a \emph{p-expression}, a structured object that defines the exact dependencies between random variables up to the elementary ones. For any given RV $X$, the associated \emph{p}-expression is noted $\dpd{X}$. If $X$ is a derived RV, then $\dpd{X}$ shall be decomposed into sub- \emph{p}-expressions up to elementary RVs, which are the atomic expressions. The table below provides the notations for \emph{p}-expressions primitives depending on the type of RV. 

\begin{center}
\begin{tabular}{r | c | c}
type & random variable $X$ & \emph{p}-expression $\dpd{X}$\\
\hline
&&\\[-8pt]
elementary & $A$ & $\pmf{A}$ \\
tuple & $\tuple{H \centerdot T}$ & $ \dpd{H} \otimes  \dpd{T} $ \\
functional & $f(X)$ & $ \efunc {f} {\dpd{X}}$ \\
conditional & $X \given E$ & $ \egiven{\dpd{X}}{\dpd{E}} $ \\
table & $\tablere {C} {c_1:X_1, \;\ldots\;, c_n:X_n}$ & $ \etable{\dpd{C}}{c_1:\dpd{X_1}, \;\ldots\;, c_n:\dpd{X_n} }$
\end{tabular}
\captionof{table}{\emph{p}-expressions by type of RV}
\label{tab:pex_types}
\end{center}


For the case of an elementary RV $A$, $\pmf{A}$ is meant to designate the pmf of $A$; this definition can be captured by the following relationship, for any pmf $\bp{a}$
$$ \pmf{\re{\bp{a}}\;} \triangleq \bp{a} $$
Any single value $v$ that is certain (i.e. non-random) is then associated to \emph{p}-expression $\pmf{v}$; this covers for example constant numerical values $\pmf{0}$, $\pmf{\pi}, ... \;$, booleans $\pmf{\true},\pmf{\false}$, as well as the empty tuple $\pmf{\tuple{}}$. For the case of a non-empty tuple RV, the rule is to recursively decompose the tuple into head element $H$ and tail tuple $T$; this means that the tuple RV $\tuple{ X_1, \;...\; , X_n }$ shall be associated, after iterations of the rule, to the \emph{p}-expression $X_1 \otimes \;...\; \otimes X_n$.\footnote{To be rigorous, the actual \emph{p}-expression should be $X_1 \otimes \;\ldots\; \otimes X_n \otimes \pmf{\tuple{}}$ since the empty tuple is the very last tail of any non-empty tuple (see \ref{sect:tuple_RV}). For the sake of conciseness, we shall omit the last operand in the present text and figures.} For the functional RV cases, it is important to make a distinction between notations $f$ and $\wideparen{f}$: the former is a usual unary function, the latter is just a part of the \emph{p}-expression $\efunc {f} {\dpd{X}}$ representing a distribution of values obtained by applying function $f$ to each possible value of $X$; also, as explained earlier, we only consider unary functions without losing generality: any $n$-ary function has a unary counterpart function that takes a $n$-tuple as argument.


Derived \emph{p}-expressions form recursive structures that can be represented as direct acyclic graphs (DAG). Figure \ref{fig:pex} shows the graphical convention used to represent the different types of \emph{p}-expression seen in table \ref{tab:pex_types} above. Note that the arrow direction, from parent node $P$ to child node $C$, is meant to represent that $P$ \emph{depends of} $C$.\footnote{One may deplore that this is the exact opposite of the convention used in probabilistic graphical models (see figure \ref{fig:rsg_bn}). Actually, we adopt here a point of view that is more suited for an algorithm: arrows represent references, as these are drawn for example in the trees representing arithmetic expressions.}

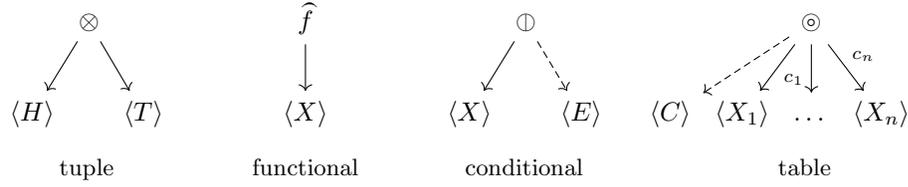
\begin{figure}[H]
    \centering
  \begin{subfigure} [h]{0.20\textwidth}
    \centering
    \begin{tikzcd} [column sep=0.05em]
        & \otimes \arrow[ld] \arrow[rd]&  \vphantom{\wideparen{f}} \\
        \dpd{H} &   & \dpd{T}
    \end{tikzcd}
    \caption*{tuple}
  \end{subfigure}
  \begin{subfigure}[h]{0.20\textwidth}
    \centering
    \begin{tikzcd} [column sep=0.05em]
       \wideparen{f} \arrow[d]  \\
       \dpd{X}
    \end{tikzcd} 
    \caption*{functional}
  \end{subfigure}
  \begin{subfigure}[h]{0.20\textwidth}
    \centering
    \begin{tikzcd} [column sep=0.05em]
      & \obar \ar[ld] \ar[rd,dashed] &  \vphantom{\wideparen{f}} \\
     \dpd{X} &                              & \dpd{E}
    \end{tikzcd} 
    \caption*{conditional}
  \end{subfigure}
  \begin{subfigure}[h] {0.20\textwidth}
    \centering
    \begin{tikzcd} [column sep=0.05em]
      && \circledcirc \vphantom{\wideparen{f}} \ar[lld,dashed] \arrow[ld,"c_1"] \arrow[d] \arrow[rd,"c_n"]&  \\
      \dpd{C} &\dpd{X_1} &  \ldots \vphantom{X}             & \dpd{X_n}
    \end{tikzcd} 
    \caption*{\hspace{5.em}table}
  \end{subfigure}
 \caption{DAG representation of \emph{p}-expressions}
 \label{fig:pex}
 \end{figure}

In the simplest cases, each RV occurs only once and the graph is a simple tree. For instance, consider the functional RV $X$ representing the fact that the added values of two dice is greater or equal to 6:

$$ D \coloneqq D_1 + D_2$$
$$ X \coloneqq D \ge 6$$

Then, the associated \emph{p}-expressions are:

$$ \dpd{D} =  \efunc {add} {\pmf{D_1} \otimes \pmf{D_2}}  $$
$$ \dpd{X} = \efunc {ge} {\dpd{D} \otimes \pmf{6}}  $$

which can be represented by the following tree:

\begin{figure}[H] 
\centering
\begin{tikzcd} [column sep=0.2em, row sep=0.8em]
&\dpd{X} = \hspace{-1em}& \wideparen{ge} \ar[d] & \\
&& \otimes \ar[ld] \ar[rd] & \\
& \wideparen{add} \ar[d] && \pmf{6}  \\
& \otimes \ar[ld] \ar[rd] &&   \\
\pmf{D_1} && \pmf{D_2} &
\end{tikzcd}
\caption{$\dpd{X}$ as a DAG}
\label{fig:pex-1}
\end{figure}
Now, consider the conditional RV $Y$:
$$ Y \coloneqq  D \given D \ge 6 \wedge D_2 \le 4 $$
The associated \emph{p}-expression is:
$$ \dpd{Y} = \egiven{\dpd{D}} {\efunc {and} {\dpd{X} \otimes \efunc{le}{\pmf{D_2} \otimes \pmf{4}}}} $$
By expanding this \emph{p}-expression, we see that $\dpd{D}$ and $\pmf{D_2}$ are referred twice. The associated graph is not a tree but a DAG (that is a more general type of graph).
\begin{figure}[H]
\centering
\begin{tikzcd} [column sep=0.2em, row sep=0.8em]
\dpd{Y} = \hspace{-1em}& \obar \ar[ddddd] \ar[rrrd,dashed] &&&&& \\
&&&& \wideparen{and} \ar[d] && \\
&&&& \otimes \ar[lld] \ar[rddd] && \\
&& \wideparen{ge} \ar[d] &&&& \\
&& \otimes \ar[ld] \ar[rd] &&&& \\
& \wideparen{add} \ar[d] && \pmf{6} &  & \wideparen{le} \ar[d] & \\
& \otimes \ar[ld] \ar[rd] &&  & \hphantom{\pmf{D_2}} & \otimes \ar[llld] \ar[rd] & \\
\pmf{D_1} && \pmf{D_2} &&&& \pmf{4}
\end{tikzcd}
\caption{$\dpd{Y}$ as a DAG}
\label{fig:pex-2}
\end{figure}

As a last example, here is the DAG for the table RV $S$ depending of elementary RV $R$ in the BN seen in \ref{sect:table_RV}.

\begin{figure}[H]
\centering
\begin{tikzcd} [column sep=0.6em, row sep=2.4em]
  && \dpd{S} = & \circledcirc \ar[llld,dashed] \arrow[ld,"\true"] \arrow[rd,"\false"]&  \\
      \pmf{R} && \pmf{\rep{0.01}} && \pmf{\rep{0.40}}
\end{tikzcd}
\caption{$\dpd{S}$ as a DAG}
\label{fig:pex-3}
\end{figure}
The DAG of table RV $G$ depending of RVs $R$ and $S$
shall not be represented here because too convoluted. Note that the four plain arcs should have been labelled with tuples $\tuple{\false,\false}$, $\tuple{\false,\true}$, etc.

\subsection{The Statues algorithm} \label{sect:statues_algo}

As we have seen, our probabilistic models are formalized as \emph{p}-expressions (\emph{pex} in the following); each such model is a DAG linking several \emph{p}-expressions together. The terminal pex correspond to elementary RV, which are defined by given pmf (also known as "prior probabilities"). 
The aim of the Statues algorithm is to calculate the exact probability mass function of a given pex.
The name \emph{Statues} is borrowed from the popular children's game of the same name\footnote{Other names include "Red Light, Green Light" (US), "Grandmother's Footsteps" (UK), "1-2-3, Soleil !" (France), "1-2-3, Piano !" (Belgium) and "Annemaria Koekkoek !" (Netherlands).}. The analogy with the algorithm should hopefully be clearer after the explanations given below.

The Statues algorithm uses a construction called \emph{generator}, which is a special case of coroutine \citep{knuth,saba}. Generators are available in several modern programming languages (e.g. C\#, Python, Ruby, Lua, Go, Scheme, ...), whether natively or as libraries. To state it in simple words, a generator is a special form of coroutine, which can suspend its execution to yield some object towards the caller and which can be resumed as soon as the caller has treated the yielded object. The generators are particularly well suited for combinatorial generation \citep{saba}. The reader may refer to appendix \ref{sect:generators} for a short introduction to generators and the related syntax used here.

For detailing the algorithm, we shall use the term \emph{atom} in the context of a given RV $X$ to designate a couple $(v,p)$ made up of a value $v$ and a probability $p$; an atom relates to a particular event that does not overlap with events related to other atoms. Such condition makes it possible to add without error the probabilities of atoms in a condensation treatment; more precisely, if we collect the $n$ atoms $(v,p_1), (v,p_2), ...,  (v,p_n)$ for value $v$, then $\Pr(X=v) = \Sigma _{i=1}^n p_i$. For instance, when throwing two fair dice, the probability to get the total 3 can be obtained by collecting the two atoms $(\tuple{1,2},\frac{1}{36})$ and $(\tuple{2,1},\frac{1}{36})$ for the tuple RV, then converting them to atoms $(3,\frac{1}{36})$ and $(3,\frac{1}{36})$ for the sum RV ; these two probabilities can then be added together, giving the expected result $\frac{1}{36} + \frac{1}{36} = \frac{1}{18}$.

The other important concept used in the algorithm is the \emph{binding}. At any stage of the execution, any given pex is either bound or unbound. At start-up, all pexes are unbound, which means that they have not yet been assigned a value. When a pex is required to browse the values of its domain, each yielded value is bound to the pex until the next value is yielded; when there are no more values, the pex is unbound. Once a pex is bound, it yields the bound value for any subsequent occurrence of this pex; the fact that a bound value is immobile for a while explains that it can be likened to a statue in the afore-mentioned game.

The Statues algorithm is made up of three parts. The entry-point is the subroutine $\textproc{marg}$, which takes a given \emph{p}-expression $d$ as argument and returns the marginalized pmf. This subroutine is not recursive but relies on $\textproc{genAtoms}$ and $\textproc{genAtomsByType}$ generators, which are mutually recursive. The calling graph is given on the figure \ref{fig:call_graph}.

\begin{figure} [H]
\centering
\begin{tikzcd} [row sep=3em]
\textproc{marg} \arrow[d, bend left, "\textit{calls}"] \\
\textproc{genAtoms} \arrow[d, bend left, "\textit{calls if unbound pex}"] \\
\textproc{genAtomsByType} \arrow[u, bend left, "\textit{calls if derived pex}"]
\end{tikzcd}
\caption{call graph of Statues algorithm}
\label{fig:call_graph}
\end{figure}
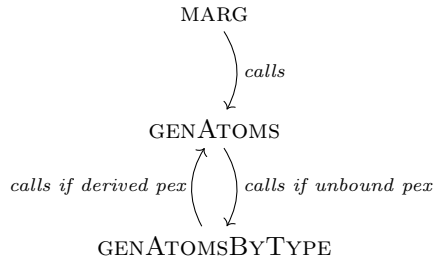

We shall present the algorithm in a top-down manner.  The entry-point $\textproc{marg}$ subroutine is given in algorithm  \ref{algo:statues1}.

\begin{algorithm} 
\caption{Statues algorithm -- part 1: $\textproc{marg}$ subroutine (entry-point)} \label{algo:statues1}
\begin{algorithmic}[1]
\Function{marg}{$d$}
\State $\beta \gets \{\}$ \Comment {init global binding store}
\State $a \gets \{\}$ \Comment {init unnormalized pmf}
\For {$(v,p) \gets \textproc{genAtoms}(d)$} \Comment {collect atoms}
	\If {$\nexists \; a[v]$}
		\State $a[v] \gets 0$ 
	\EndIf
	\State $a[v] \gets a[v] + p$ \Comment {condense pmf}
\EndFor
\If {$a = \{\}$}  \Comment {pmf is empty: error}
	\State \textbf{halt} with error
\EndIf
\State $s \gets \sum \limits_{(v,p) \in a} p $ \Comment {normalize pmf}
\State \Return $\dpdf {(v,\dfrac{p}{s}) \: \big\vert \: (v,p) \in a}$ 
\EndFunction
\end{algorithmic}
\end{algorithm}

$\textproc{marg}$ takes the given pex $d$ to be evaluated as argument. It invokes the $\textproc{genAtoms}$ generator and collects the atoms yielded one by one (line 4). We will see soon that $\textproc{genAtoms}$ uses the global binding store $\beta$, an associative array that is initially empty (line 2). Using the associative array $a$, the condensation (lines 5-8) treats atoms containing the same value so that they are merged together, by summing their probabilities. Once the $\textproc{genAtoms}$ generator is exhausted, a check verifies that at least one atom has been received (line 10), otherwise an error is reported and the subroutine halts (remember that we forbid empty pmf, as seen in \ref{sect:cond_RV}, this may occur if the evaluated pex is conditional while the given condition is impossible). The final step (line 13-14) normalizes the distribution $a$ to ensure that the probabilities of the pmf sum to 1 \footnote{It can be shown that the probability sum may differ from 1 only if the evaluated pex is conditional; actually, the performed division is closely related to the formula of conditional probability $\Pr(A \given C) \triangleq \dfrac {\Pr(A \wedge C)} {\Pr(C)}$. }. The pmf is then returned as a set of couples $(v_i,\Pr(d=v_i))$. 

\begin{algorithm} [H]
\caption{Statues algorithm -- part 2: $\textproc{genAtoms}$ generator } \label{algo:statues2}
\begin{algorithmic}[1]
\Procedure{genAtoms}{$d$}
	\If {$\exists \; \beta[d]$}			\Comment {$d$ is bound}
		\Yield $(\beta[d],1)$           \Comment {yield unique atom to caller}
	\Else								\Comment {$d$ is unbound}
		\For {$(v,\; p) \gets \textproc{genAtomsByType}(d)$}
			\State $\beta[d] \gets v$	\Comment {(re)bind $d$ to value $v$}
			\Yield $(v,\; p)$              \Comment {yield atom to caller}
		\EndFor
		\State delete $\beta[d]$		\Comment {unbind $d$}
	\EndIf
\EndProcedure
\end{algorithmic}
\end{algorithm}

The $\textproc{genAtoms}$ generator (algorithm  \ref{algo:statues2}) uses the binding store $\beta$ to check whether, in the current stage of the algorithm, the given pex is bound or not. If the given pex is not bound (lines 5-9), which is the case at least for the very first call on this pex, then $\textproc{genAtomsByType}$ is called and each atom yielded is bound to the pex before yielded in turn to the $\textproc{genAtoms}$'s caller. If the pex is bound (line 3), then the atom yielded is the bound value with probability 1; this behavior is actually the crux of the algorithm because it enforces the referential consistency.

\begin{algorithm} 
\caption{Statues algorithm -- part 3: $\textproc{genAtomsByType}$ generator} \label{algo:statues3}
\begin{algorithmic}[1]
\Procedure{genAtomsByType}{$d$} 
\\
\Switch{$d$}
\\
\Case {$\pmf{\re{\bp{a}}}$}			\Comment {$d$ is an elementary pex } 
		\For {$(v,\; p) \in \bp{a}$}
			\Yield $(v,\; p)$
		\EndFor
\EndCase
\\
\Case {$\efunc{f}{x}$}			\Comment {$d$ is a functional pex}
	\For {$(v,\; p) \gets \textproc{genAtoms}(x)$}
		\Yield $(f(v),\; p)$
	\EndFor
\EndCase
\\
\Case {$h \otimes t$}			\Comment {$d$ is a tuple pex}
	\For {$(v,\; p) \gets \textproc{genAtoms}(h)$}
	\For {$(s,\; q) \gets \textproc{genAtoms}(t)$}	
		\Yield $(\tuple{v \centerdot s},\; p . q)$
	\EndFor
	\EndFor
\EndCase
\\
\Case {$\egiven{x}{e}$}		\Comment {$d$ is a conditional pex}
	\For {$(v,\; p) \gets \textproc{genAtoms}(e)$}
		\If {$v$}
			\For {$(s,\; q) \gets \textproc{genAtoms}(x)$}
				\Yield $(s,\; p . q)$
			\EndFor
		\EndIf
	\EndFor
\EndCase
\\
\Case {$c \circledcirc g$}			\Comment {$d$ is a table pex}
	\For {$(v,\; p) \gets \textproc{genAtoms}(c)$}
		\For {$(s,\; q) \gets \textproc{genAtoms}(g[v])$}
			\Yield $(s,\; p . q)$
		\EndFor
	\EndFor
\EndCase

\EndSwitch
\\
\EndProcedure
\end{algorithmic}
\end{algorithm}

The $\textproc{genAtomsByType}$ generator (algorithm \ref{algo:statues3}) is the last part of the Statues algorithm. It yields the atoms according to the semantic of each type of pex. The dispatching is presented here as a pattern matching $\textbf{switch}$ construct although other constructs are feasible (see appendix \ref{sect:implem_design}). In the case of elementary pex, the treatment is simple and non-recursive. In the case of derived pex, the dependent pex shall be accessed by calling $\textproc{genAtoms}$ on them; this shall cause recursive calls, yielding atoms and updating the current bindings.
\begin{itemize}
\item
For elementary pex (lines 5-8), the atoms are simply the ones found in the pmf. 
\item
For functional pex $\efunc{f}{x}$ (lines 10-13), the treatment consists in applying the given function $f$ on the values of yielded atoms. As explained before, only unary functions are accepted; $n$-ary functions are emulated by functions having $n$-tuples as domain; then, the processing of tuple pex (see below) performs the required combinatorial on arguments.
\item
For tuple pex $h \otimes t$ (lines 15-20), the treatment consists in evaluating the combinatorial between head and tail values. Note that tuples having two or more elements are handled through recursive calls. The recursion halts when reaching the empty tuple, which is treated in the elementary pex case (i.e. singleton with probability 1).
\item
For conditional pex $\egiven{x}{e}$ (lines 22-29), the atoms of condition pex $e$ are collected one by one; for each atom containing the value $\true$, the treatment goes on and collects the atoms of the conditioned pex $x$. The atoms containing the value $\false$ are simply skipped; this bypass is important because it makes a pruning that prevents wasteful treatments: only the bindings verifying the given condition $e$ are retained.
\item
For table pex $c \circledcirc g$ (lines 31-36), the $g$ operand represents an associative array value-to-pex $\{c_1:x_1, \;\ldots\;, c_n:x_n \}$. The atoms of the key pex $c$ are collected one by one; for the value $v$ of each atom, the associated pex $g[v]$ is retrieved and the related atoms are collected in turn. 
\end{itemize}

To get a true understanding of the algorithm, one has to remember that $\textproc{genAtoms}$ and $\textproc{genAtomsByType}$ are \emph{not} subroutines returning a list of atoms; these are generators working cooperatively and yielding atoms one by one. At each yield, new bindings are created or removed. For instance, in the treatment of the conditional pex in $\textproc{genAtomsByType}$, the outer $\textbf{for}$ loop in line 23 makes some bindings that acts on inner $\textbf{for}$ loop in line 25: then, only atoms compatible with these bindings are yielded. Also, during algorithm execution, two generators ($\textproc{genAtoms}$ and $\textproc{genAtomsByType}$) are created for each node of the DAG; all these generators live together, the flow of control being changed at each \textbf{yield} instruction.

The correctness of the Statues algorithm is proved in appendix \ref{sect:proof}.

\subsection{Examples of execution}

To demonstrate how this algorithm works practically, we shall consider a couple of toy problems and trace the key steps of the execution (more involved use cases will be given in section \ref{sect:lea}).

\paragraph{Example 1}

We define a model that adds two Bernoulli RV $B_1$ and $B_2$, with respective probabilities $\tfrac{2}{3}$ and $\tfrac{1}{4}$.
\begin{align*}
B_1 & \coloneqq \re{\dpdf{(0,\tfrac{1}{3}), \; (1,\tfrac{2}{3})}} \\
B_2 & \coloneqq \re{\dpdf{(0,\tfrac{3}{4}), \; (1,\tfrac{1}{4})}} \\
S & \coloneqq B_1 + B_2
\end{align*}
The DAG of $\dpd{S}$ is displayed hereafter.
\begin{figure}[H]
\centering
\begin{tikzcd} [column sep=0.2em, row sep=0.8em]
& \wideparen{add} \ar[d] & \\
& \otimes \ar[ld] \ar[rd] & \\
\pmf{B_1} && \pmf{B_2}
\end{tikzcd}
\caption{$\dpd{S}$ as a DAG}
\end{figure}
The pmf of $S$ is calculated by invoking $\textproc{marg}(\dpd{S})$. The following table shows the sequence of steps executed by the algorithm. A step is defined by all the actions made by the main generator $\textproc{genAtoms}$ to yield a new atom (line 4 of $\textproc{marg}$). Each row shows some key data present or exchanged at a given step. The first two columns show the value bound on $\pmf{B_1}$ and $\pmf{B_2}$ during the given step. The remaining columns, labeled $C_{\hookrightarrow P}$ show atoms yielded by node $C$ to a parent node $P$ during the given step; this atom is the one yielded at line 7 of $\textproc{genAtoms}(C)$. The rightmost column $\wideparen{add}_{\hookrightarrow}$ shows the atom yielded by the main generator $\textproc{genAtoms}$: it is collected in $\textproc{marg}$, which is the final action of the step.

\begin{center}
\footnotesize
\renewcommand{\arraystretch}{1.8}
\begin{tabular}{r || c  c |  l  l  l  l }
 & $\pmf{B_1}$ & $\pmf{B_2}$ & $\pmf{B_1}_{\hookrightarrow \otimes}$ & $\pmf{B_2}_{\hookrightarrow \otimes}$ & $\otimes_{\hookrightarrow \wideparen{add}}$ & $\wideparen{add}_{\hookrightarrow}$ \\
\hline
\hline
\#1 & $0$ & $0$ & $(0,\tfrac{1}{3})$ & $(0,\tfrac{3}{4})$ & $(\tuple{0,0},\tfrac{1}{4})$ & $(0,\tfrac{1}{4})$  \\
\hline
\#2 & $0$ & $1$ & $(0,\tfrac{1}{3})$ & $(1,\tfrac{1}{4})$ & $(\tuple{0,1},\tfrac{1}{12})$ & $(1,\tfrac{1}{12})$ \\
\hline
\#3 & $1$ & $0$ & $(1,\tfrac{2}{3})$ & $(0,\tfrac{3}{4})$ & $(\tuple{1,0},\tfrac{1}{2})$ & $(1,\tfrac{1}{2})$  \\
\hline
\#4 & $1$ & $1$ & $(1,\tfrac{2}{3})$ & $(1,\tfrac{1}{4})$ & $(\tuple{1,1},\tfrac{1}{6})$ & $(2,\tfrac{1}{6})$
\end{tabular}
\captionof{table}{Trace of Statues algorithm on $\dpd{S}$}
\end{center}
Here are some explanations on this trace table.
When starting $\textproc{marg}(\dpd{S})$, generators $\textproc{genAtoms}$ /  $\textproc{genAtomsByType}$ will be created for each node of the DAG, in a top-down order until reaching the elementary pex $\pmf{B_1}$ and $\pmf{B_2}$. At step \#1, the embedded loops of tuple pex (line 16 of $\textproc{genAtomsByType}$) receive $(0,\tfrac{1}{3})$ yielded by $\pmf{B_1}$ and $(0,\tfrac{3}{4})$ yielded by $\pmf{B_2}$; it aggregates these into $(\tuple{0,0},\tfrac{1}{4})$, which is yielded to $\wideparen{add}$; then, the addition is applied on the received tuple (line 12 of $\textproc{genAtomsByType}$) and $(0,\tfrac{1}{4})$ is yielded to $\textproc{marg}$. At the steps \#2, \#3 and \#4, the embedded loops of tuple pex keep on enumerating all the combinations of values for $B_1$ and $B_2$, yielding atoms to $\wideparen{add}$ and following the same processing path as explained before; eventually, three more atoms are yielded to $\textproc{marg}$, viz. $(1,\tfrac{1}{12})$, $(1,\tfrac{1}{2})$ and $(2,\tfrac{1}{6})$. After completion of step \#4, the generators are exhausted; they terminate one after the other in the following order: $\pmf{B_2}$, $\pmf{B_1}$, $\otimes$ and $\wideparen{add}$. During this process, $\textproc{marg}$ has made the condensation of the four received atoms (i.e. merging the two $1$) into the associative array $a = \{ (0,\tfrac{1}{4}), (1,\tfrac{7}{12}), (2,\tfrac{1}{6}) \}$. The last step consists in normalizing $a$ to get the final pmf: each probability is divided by the sum $\tfrac{1}{4} + \tfrac{7}{12} + \tfrac{1}{6} = 1$ (in the present case, this is in unneeded because $a$ is already normalized). Eventually, the algorithm produces the final result
$$
 S \sim \dpdf{(0,\tfrac{1}{4}), (1,\tfrac{7}{12}), (2,\tfrac{1}{6})}
$$
which is correct. Note that the binding mechanism has been useless in this simple example: actually, we could have the same result by skipping the $\textproc{genAtoms}$ generator, e.g. replacing all the calls to it by calls to $\textproc{genAtomsByType}$. This is due to the fact that each RV appears no more than once in the queried RV; the referential consistency is then trivially verified. So, in the present example, the Statues algorithm merely performs a classical convolution. We shall see in the following examples the very role of the binding done by $\textproc{genAtoms}$.



\paragraph{Example 2}
To demonstrate the handling of referential consistency, we consider here the addition of a RV with itself:
\begin{align*}
R & \coloneqq B_1 + B_1
\end{align*}
where $B_1$ is defined as in the previous example. The DAG of $\dpd{R}$ is displayed hereafter.

\begin{figure}[H]
\centering
\begin{tikzcd} [column sep=0.2em, sep=0.8em]
\wideparen{add} \ar[d] \\
\otimes \ar[d,bend left] \ar[d,bend right] \\[1.4em]
\pmf{B_1}
\end{tikzcd}
\caption{$\dpd{R}$ as a DAG}
\end{figure}

Since the Statues algorithm enforces referential consistency, we shall legitimately expect that $R$ is equivalent to $2B_1$. Here is the trace of the execution of $\textproc{marg}(\dpd{R})$ following the same convention as before.

\begin{center}
\footnotesize
\renewcommand{\arraystretch}{1.8}
\begin{tabular}{r || c | l  l  l  l }
 & $\pmf{B_1}$ & $\pmf{B_1}_{\hookrightarrow \otimes}$ & $\pmf{B_1}_{\hookrightarrow \otimes}$ & $\otimes_{\hookrightarrow \wideparen{add}}$ & $\wideparen{add}_{\hookrightarrow}$ \\
\hline
\hline
\#1 & $0$ & $(0,\tfrac{1}{3})$ & $(0,1)$ & $(\tuple{0,0},\tfrac{1}{3})$  & $(0,\tfrac{1}{3})$  \\
\hline
\#2 & $1$ & $(1,\tfrac{2}{3})$ & $(1,1)$ & $(\tuple{1,1},\tfrac{2}{3})$  & $(2,\tfrac{2}{3})$
\end{tabular}
\captionof{table}{Trace of Statues algorithm on $\dpd{R}$}
\end{center}

In contrast with the previous example, the embedded loops of tuple pex (line 17 of $\textproc{genAtomsByType}$) both refer to the same pex, namely $\pmf{B_1}$; the outer loop receives the two atoms as before but the inner loop receives only one atom, containing the value bound at current step with the probability 1. The explanation lies in the binding mechanism, in the outer loop, $\pmf{B_1}$ is not yet bound (line 5 of $\textproc{genAtoms}$) while in the inner loop,  $\pmf{B_1}$ is bound (line 3 of $\textproc{genAtoms}$). Hence, the tuple pex yields only atoms with tuples having same inner values (i.e. $\tuple{0,0}$ and $\tuple{1,1}$). The final result is
$$
 R \sim \dpdf{(0,\tfrac{1}{3}), (2,\tfrac{2}{3})}
$$
that is the same as $2B_1$, as expected.

\paragraph{Example 3}
We shall elaborate example 1 to demonstrate conditional RV by querying the model under some given condition. Suppose we know (by whatever means) that the sum $S$ does not exceed 1; we want to get the pmf of $B_1$ given this evidence. This query can be modeled by a new RV $Q$ defined as follows:
\begin{align*}
Q & \coloneqq B_1 \given S \le 1
\end{align*}
which corresponds to the following DAG:
\begin{figure}[H]
\centering
\begin{tikzcd} [column sep=0.2em, row sep=0.8em]
\obar \ar[ddddd] \ar[rrd,dashed] &&& \\
&& \wideparen{le} \ar[d] & \\
&& \otimes \ar[ld] \ar[rd]& \\
& \wideparen{add} \ar[d] && \pmf{1} \\
& \otimes \ar[ld] \ar[rd] && \\
\pmf{B_1} && \pmf{B_2} &
\end{tikzcd}
\caption{$\dpd{Q}$ as a DAG}
\end{figure}

Here is the trace of the execution of $\textproc{marg}(\dpd{Q})$ following the same convention as example 1's.


\begin{center}
\footnotesize
\renewcommand{\arraystretch}{1.8}
\begin{tabular}{r || c  c |  l  l  l  l  l  l }
 & $\pmf{B_1}$ & $\pmf{B_2}$ & $\pmf{B_1}_{\hookrightarrow \otimes}$ & $\pmf{B_2}_{\hookrightarrow \otimes}$ & $\wideparen{add}_{\hookrightarrow \otimes}$ & $\wideparen{le}_{\hookrightarrow \obar}$ & $\pmf{B_1}_{\hookrightarrow \obar}$ & $\obar_{\hookrightarrow}$ \\
\hline
\hline
\#1 & $0$ & $0$ & $(0,\tfrac{1}{3})$ & $(0,\tfrac{3}{4})$ & $(0,\tfrac{1}{4})$ & $(\true,\tfrac{1}{4})$ & $(0,1)$ & $(0,\tfrac{1}{4})$ \\
\hline
\#2 & $0$ & $1$ & $(0,\tfrac{1}{3})$ & $(1,\tfrac{1}{4})$ & $(1,\tfrac{1}{12})$ & $(\true,\tfrac{1}{12})$ & $(0,1)$ & $(0,\tfrac{1}{12})$ \\
\hline
\#3 & $1$ & $0$ & $(1,\tfrac{2}{3})$ & $(0,\tfrac{3}{4})$ & $(1,\tfrac{1}{4})$ & $(\true,\tfrac{1}{2})$ & $(1,1)$ & $(1,\tfrac{1}{2})$ \\
\hline
\#4 & $1$ & $1$ & $(1,\tfrac{2}{3})$ & $(1,\tfrac{1}{4})$ & $(2,\tfrac{1}{6})$ & $(\false,\tfrac{1}{6})$ & $\;\;-$ & $\;\;-$
\end{tabular}
\captionof{table}{Trace of Statues algorithm on $\dpd{Q}$}
\end{center}

Since the root node is the condition pex, the first processing is the evaluation of the condition defined on the $\wideparen{le}$ node of the DAG (line 23 of $\textproc{genAtomByType}$). This shall cause the same processing as we have seen for example 1. What differs is that the atoms containing the sum are yielded to $\wideparen{le}$, which compares with value 1; the result $\true$/$\false$ is yielded to the $\obar$ node, which decides to continue the processing only if the $\true$ value is received. This happens for the first three steps; in such cases, a new generator is called on $\pmf{B_1}$ and, since $\pmf{B_1}$ is bound, the bound value is yielded with probability 1. For the step \#4, the condition is evaluated to $\false$, so nothing is yielded by the $\obar$ node for this step. During this process, $\textproc{marg}$ has made the condensation of the three received atoms into the associative array $a = \{ (0,\tfrac{1}{3}), (1,\tfrac{1}{2}) \}$. The last step consists in normalizing $a$ to get the final pmf:
$$ Q \sim \dpdf{(0,\tfrac{2}{5}), (1,\tfrac{3}{5})}$$
which is correct and, incidentally, different from the pmf of $B_1$: this shows that the given evidence does bring information on top of our prior beliefs.

We have seen in this last example how the treatment of conditional pex $\dpd{x} \obar \dpd{e}$ works: at each step, the evaluation of condition $\dpd{e}$ performs some bindings; for the steps where the condition is $\true$, $\dpd{x}$ is evaluated in turn \emph{taking into account these bindings}; so, the yielded values are guaranteed to verify the condition; for the steps where the condition is $\false$, $\dpd{x}$ is not queried, which makes a pruning in the search.

The three examples seen above are very basic use cases of the Statues algorithm. Actually, this algorithm is able to treat correctly far more involved probabilistic problems, in particular, all the examples given in sections \ref{sect:derived_RV} and section \ref{sect:lea}.

\section{Discussion} \label{sect:discussion}

As stated before, the Statues algorithm belongs to the category of exact probabilistic algorithms. The correctness of the algorithm is established by a proof (appendix \ref{sect:proof}); this proof uses invariants, formal specifications, propositional logic and basic probability theory. Beside this proof (and well before it), good confidence on the correctness has been gained through informal reasoning --~as sketched in the previous section~-- and, above all, by verifying the matching with results of problems found in the literature (see section \ref{sect:lea}).

The Statues algorithm, at its very heart, explore all possible paths or "possible worlds" \citep{de_raedt} compatible with given query. Without much surprise, it is limited by the NP-hard nature of inference on unconstrained BN \citep{cooper}. However, it performs far more efficiently than a naive inference by enumeration. We give hereafter three reasons to support this assertion. Firstly, since models and queries are DAG, the variables that do not impact the query at hand (i.e. those that are not are reachable by a directed path from the root query node) are not considered in the calculation; there is then a \emph{de facto} elimination of unused variables. Secondly, when looking for possible paths, the treatment of the conditional pex performs a pruning of the branches that do not comply with the given evidence; in many cases, this prevents wasteful calculations. Let us mention that a possible extension can even improve the pruning when the evidence is a conjunction of conditions (see the \emph{multi-conditional RV} in section \ref{sect:extensions}); this just requires a slight adaptation of the algorithm. Finally, since the binding done by $\textproc{genAtoms}$ is done for every RV, whether elementary or derived, it has the virtue of memoizing on the fly the results of functional pex, avoiding to redo the same calculation over and over. For instance suppose that, among a large set of RVs, a variable $D$ is defined as $D \coloneqq \sqrt{X^2 + Y^2}$; even if $D$ is used at multiple places of the query like in the expression $D^2 - U \times V \given (A \le D) \wedge (D \le B)$, the values of $D$ will be calculated \emph{only once} for each pair of values $[X,Y]$ and not for each combination of $[X,Y,A,B,U,V]$. This memoization is allowed without restriction since the functional pexes use, by definition, pure deterministic functions.

Due to the usage of generators, the execution model of the Statues algorithm is quite singular considering the large majority of algorithms based on subroutines. During algorithm execution, each pex involved in the evaluated query give rise to two generators, namely $\textproc{genAtoms}$ and $\textproc{genAtomsByType}$. These generators live together and their call graph mimics the query DAG, the yielded atoms being passed through the arcs of the DAG, from child node to parent node. As we have seen, each $\textproc{genAtomsByType}$ node performs a very simple treatment where probabilities are multiplied together. The collecting of atoms and their condensation are done at one place only, the root of the query DAG, that is the $\textproc{marg}$ subroutine (hence, the only place where probabilities are summed together). Unlike other exact algorithms, the Statues algorithm works with simple objects (the atoms) and simple operations on them (multiplication and sum of raw probabilities). In particular, there is no such things as \emph{factor} and \emph{pointwise product} like in variable elimination, there is no DAG transformation like in clustering algorithms. On the question of space complexity, the Statues algorithm is then expected to provide benefits comparable to those of the cutset conditioning method \citep{pearl_1988}.

Since the Statues algorithm is an exact probabilistic inference algorithm, let us briefly discuss the general merits and liabilities of calculating exact probabilities. As stated before, any exact probabilistic inference algorithm is limited in practice by the intractability of many problems, including large or densely connected BN; for such intractable problems at least, approximate algorithms like MCMC provide a fallback. Despite this constraint, the exact algorithms remain very useful for a number of reasons -- beside their exactness! Firstly, several problems \emph{can} be solved exactly in an acceptable time; this covers, at the very least, many sparsely connected BNs and the example cases used for education. Secondly, exact algorithms offer the opportunity to represent probabilities in different manners, beyond the prevailing floating-point numbers; probability representation as fractions enables perfect accuracy of results, tackling usual -- and annoying -- rounding errors. Additionally, symbolic computation is made possible by defining probabilities with variable symbols instead of numbers (e.g. $p, q, \;\ldots$) and by coupling the algorithm with a symbolic computation system; the output of $\textproc{marg}$ is then a pmf made up of probability expressions (e.g. $p^2(1-q)$). Such approach using probability symbols instead of numbers is useful when the same query is made over and over on the same complex model, with only varying prior probability values: the query result may be compiled offline once for all into an arithmetic expression (taking maybe a long processing time), then the resulting expression can be evaluated many times using fast arithmetic computations, with different input probability values.

Further research is definitely needed to factually assess the assets and liabilities of the Statues algorithm among the existing probabilistic inference algorithms. This includes at least the following research tracks:
\begin{itemize}
\item to make an objective comparison of the expressiveness of the underlying probabilistic framework with those used in other systems,
\item to study the complexity of the algorithm, both for space and time aspects, and to put these results in perspective with other comparable algorithms.
\end{itemize}

\section{Possible extensions} \label{sect:extensions}

We present here several possible extensions or improvements that can be added to the statues algorithm and its underlying framework.

The algorithm presents a drawback compared to other algorithms handling BN. If given evidences can be, fully or partially, expressed as a conjunctions of equalities on some elementary RVs, like $(X = x) \wedge (Y = y) \wedge ... $, then there is some waste of time for browsing X, Y, ... domains, evaluating equalities and eventually binding them to the sole values $x, y, ...$ Such wasteful process can be avoided easily by a pre-treatment before calling the $\textproc{marg}$ subroutine: elementary random variables like X, Y, ... may be bound explicitly and unconditionally on their respective values $x, y, ...$. Then, once $\textproc{marg}$ has completed, one can choose to unbind these variables or not, depending whether these bindings need to stand for next queries.

We have seen that our probabilistic models are made up of five building blocks, namely the RV types; these are summarized in table \ref{tab:pex_types} and treated individually by the $\textproc{genAtomsByType}$ generator. Although these five types have a broad scope in probabilistic modeling, it is possible, and even advisable, to add new pex types in order to enrich the expressiveness or allow for better algorithm performance. We shall briefly introduce hereafter three new types of RV. Their addition in our system is feasible without much difficulties: it would just require new notations and the adaptation of $\textbf{case}$ clauses in the $\textproc{genAtomsByType}$ generator, the rest of the algorithm remaining strictly unchanged.

The first new type of RV is called the \emph{multi-conditional RV}. The idea is to express a conjunction of given conditions as a sequence of boolean RVs: $X \given \tuple{C_1,\;\ldots\;,\;C_n}$. Actually, such conjunction is already expressible with the conditional RV presented in section \ref{sect:cond_RV} since the condition can be a functional RV defined by a logical AND: $X \given C_1  \wedge \ldots \wedge C_n$; however, with such approach, the Statues algorithm shall browse all the possible cases, even in the stages where some condition $C_i$ is bound to $\false$. The multi-conditional RV generalizes the conditional RV, allowing for optimizing the evaluation by pruning as soon as a condition of the given conjunction is bound to $\false$. Note that processing time can be saved by putting the most constraining conditions first or allowing assignment of priorities, as done in WebPPL \citep{dippl}. Similarities are worth pointing out here with constraint satisfaction problems (CSP) and the field of constraint programming.

The second new type of RV is called the \emph{multi-functional RV}. It is a generalization of the functional RV (see section \ref{sect:cond_RV}): instead of being defined with one single function, it accepts a RV having a set of functions as domain. So, a multi-functional RV not only randomizes the argument but it randomizes also the function to apply on this argument.

The third and last new type of RV is called  \emph{mixture RV}. Unlike, the two previous new types, the mixture RV is not a generalization of any RV presented so far. Basically, it is a RV choosing its value from a given set of RVs, which are equiprobable. A notation for such RV could be:
$$ \mixt{X_1, \;\ldots\; , X_n}$$
The most basic usage is to model "bag of dice" processes (first, draw a die from the bag, then, throw this die). A more advanced usage of mixture RV regards the CPT used in Bayesian networks. In section \ref{sect:table_RV}, we have seen that a CPT can be modeled as a table RV and that possible contextual independence can be leveraged to avoid redundancy through cascaded table RVs. Mixture RVs offer an alternate way to model CPT: each inner element is a conditional RV expressing a clause associating a condition with a resulting RV. By using this approach, several cases can be merged in the same clause. To illustrate the approach, here is how the revisited model $G'$ presented in \ref{sect:table_RV} can be modeled using mixture RV.
\begin{align*}
G' \coloneqq \rhd \bigl\{\hspace{16pt} \false \hspace{4pt} & \given \overline{R} \wedge \overline{S},\\
 \rep{0.80} & \given R \wedge \overline{S},\\
 \rep{0.95} & \given S \bigl\}
\end{align*}
Such kind of construct could prove to be effective also when the decision logic is more naturally expressed by conditions than by a lookup table. This happens in particular when the decision RV is numerical and can be divided in non-overlapping ranges; the following example assumes that $X$ is such RV, which influences a boolean RV $Y$:
\begin{align*}
Y \coloneqq \rhd \bigl\{ \rep{0.90}  & \given X < 30, \\
 \rep{0.80} & \given 30 \le X \; \wedge \;  X < 75,\;\\
 \rep{0.40} & \given 75 \le X \bigl\}
\end{align*}

As explained for cascaded table RVs, the present approach is sensible to avoid redundancies on large CPT. Note however that the algorithm shall have to evaluate all conditions one by one, which could be more demanding than in the table RV approach. Therefore, it is advisable to consider mixture RVs only for CPT where the number of redundant clauses is big enough. Mixing the two approaches within the same BN is of course feasible.

\section{Implementation -- Lea and MicroLea libraries} \label{sect:lea}

The Statues algorithm has been successfully implemented in the Python programming language \citep{vanrossum,psf}, namely in the Lea and MicroLea libraries that are introduced below. Python is well suited for the task because it natively supports generators \citep{pep255,saba}, dictionaries, operator-overloading and OO. Also, the "duck-typing" nature of Python \citep{foord} provides \emph{de facto} support to any RV domain, provided that it has a hashing function (for performing condensation through Python dictionaries).

We provide in appendix \ref{sect:implem_design} some general suggestions about the implementation of the Statues algorithm, whatever the programming language chosen.

\subsection{Lea library} \label{sect:lea_lib}

The prime implementation in Python is an open-source library called Lea \citep{lea}. Lea is fully workable, comprehensive and well documented; also it encompasses all the extensions presented in section \ref{sect:extensions}. 

It is worth pointing out that Lea, up to its version 2, stores probabilities as integer weights instead of commonly used floating-point numbers, as suggested in section \ref{sect:discussion}; this enables unlimited precision but the implementation requires special care when mixing distributions with different weight sums. Version 3 of Lea let the user choose between different types to represent probabilities, including fractions, decimal and floating-point numbers. Also, putting objects representing variable names in place of numerical probabilities brings up the capability to do symbolic computation, as discussed in section \ref{sect:discussion}; for this purpose, Lea 3 uses SymPy, a Python package dedicated to symbolic computation \citep{sympy}.

Let us mention that the understanding of the core marginalization algorithm is hard because \Lea's implementation contains several optimizations and extraneous functions, as standard indicators, information theory, random sampling, etc.; also, beside the Statues algorithm, Lea implements an approximation algorithm based on Monte-Carlo rejection sampling.

\subsection{MicroLea library} \label{sect:microlea_lib}

To help the understanding of the core algorithm, we have developed from scratch another open-source Python library: MicroLea, abbreviated as \mLea \citep{microlea}. \mLea is much smaller and much simpler than Lea: it focuses on the Statues algorithm and not more; also, it represents probabilities in the classical way, using floating-point numbers. The names of classes and methods match exactly the terminology used in the present paper. \mLea has a limited functionality and usability compared to Lea's but it is well suited to study how the Statues algorithm works.

As a short introduction to $\mu$Lea, we shall model the Rain-Sprinkler-Grass BN seen in section \ref{sect:table_RV} and we shall perform various queries on it. Here are the statements to instantiate this BN in $\mu$Lea:

\begin{samepage}
\begin{Verbatim}[fontsize=\small,xleftmargin=1cm]
from microlea import *

rain = ElemPex.bool(0.20)
sprinkler = TablePex( rain,
                    { True : ElemPex.bool(0.01),
                      False: ElemPex.bool(0.40)} )
grass_wet = TablePex( TuplePex(sprinkler, rain ),
                            { (False    , False): False,
                              (False    , True ): ElemPex.bool(0.80),
                              (True     , False): ElemPex.bool(0.90),
                              (True     , True ): ElemPex.bool(0.99)} )
\end{Verbatim}
\end{samepage}

Note that \mLea makes automatic conversion of fixed values into elementary pexes, when needed; this is why we can write \texttt{False} in place of \texttt{ElemPex.bool(0)} in the first entry of \texttt{grass\_wet}.

From these definitions, \mLea allows making several queries for which the $\textproc{marg}$ subroutine is called implicitly. Since Python is an interpreted language, the BN model can be queried in an interactive session, which is handy for experimenting. The method \texttt{given} builds a conditional pex from the boolean pex passed in argument; the operator-overloading is used to build functional pexes behind the scene for logical operators NOT (\texttt{\~}), AND (\texttt{\&}) and OR (\texttt{|}). Each query returns a pmf; when the pmf is boolean, the convenience function \texttt{P} is useful to extract the probability of \texttt{true}. Lines displaying the returned objects are indicated by a \texttt{\# ->} prefix. 
\begin{Verbatim}[fontsize=\small,xleftmargin=1cm,commandchars=\\\{\}]
sprinkler
\textit{# -> \{False: 0.6780, True: 0.3220\}}
P(sprinkler)
\textit{# -> 0.32200000000000006}
P(rain & sprinkler & grass_wet)
\textit{# -> 0.00198}
P(grass_wet.given(rain))
\textit{# -> 0.8019000000000001}
P(rain.given(grass_wet))
\textit{# -> 0.35768767563227616}
\end{Verbatim}
To check the consistency of these results, it is possible to retrieve the very last calculated probability thanks to the following expressions, which check respectively the definition of conditional probability and the Bayes' theorem:
\begin{Verbatim}[fontsize=\small,xleftmargin=1cm,commandchars=\\\{\}]
P(rain & grass_wet) / P(grass_wet)
\textit{# -> 0.35768767563227616}
P(grass_wet.given(rain)) * P(rain) / P(grass_wet)
\textit{# -> 0.35768767563227616}
\end{Verbatim}
Other relationships, including the axioms of probability and the chain rule, can be verified similarly in $\mu$Lea. Note that these relationships do not appear explicitly in the Statues algorithm; these are emerging properties of this algorithm.

As detailed before, functional pexes allow expressing more complex queries or evidences:
\begin{Verbatim}[fontsize=\small,xleftmargin=1cm,commandchars=\\\{\}]
P(rain.given(grass_wet & ~sprinkler))
\textit{# -> 1.0}
P(rain.given(~grass_wet | ~sprinkler))
\textit{# -> 0.27889355229430157}
P((rain | sprinkler).given(~grass_wet))
\textit{# -> 0.12983575649903917}
P((rain == sprinkler).given(~grass_wet))
\textit{# -> 0.87020050034444}
\end{Verbatim}

As an academic exercise, we can easily build the full joint probability distribution of the BN by using the tuple pex; this gives the probability of each atomic state of the three variables taking their interdependence into account:
\begin{Verbatim}[fontsize=\small,xleftmargin=1cm,commandchars=\\\{\}]
TuplePex(rain,sprinkler,grass_wet)
\textit{# -> \{(False, False, False): 0.4800, (False, True, False): 0.0320,}
\textit{      (False, True, True): 0.2880, (True, False, False): 0.0396,}
\textit{      (True, False, True): 0.1584, (True, True, False): 0.0000,}
\textit{      (True, True, True): 0.0020\}}
\end{Verbatim}
One can notice that there are only 7 entries in this joint probability distribution instead of the $2^3=8$ expected: this is due to the fact that the case \texttt{(False, False, True)} is impossible (see CPT of \texttt{grass\_wet}). Using such technique, it is possible to derive any joint probability distribution, whether full or partial, of any BN. This may provide useful clues to understand returned results.\footnote{In particular, one can build any intermediate \emph{factor}, as calculated by the variable elimination algorithm.}

To provide an example involving numerical RV, let us extend the BN with a device indicating a random value from 0 to 4; a CPT defines the pmf depending on the state of the grass:
\begin{Verbatim}[fontsize=\small,xleftmargin=1cm]
measure = TablePex( grass_wet,
                  { True : ElemPex({2: 0.125, 3: 0.375, 4: 0.500 }),
                    False: ElemPex({0: 0.500, 1: 0.375, 2: 0.125 })})
\end{Verbatim}
On this basis, we can freely mix booleans, numerical values and comparison operators in the same query:
\begin{Verbatim}[fontsize=\small,xleftmargin=1cm,commandchars=\\\{\}]
measure
\textit{# -> \{0: 0.2758, 1: 0.2069, 2: 0.1250, 3: 0.1681, 4: 0.2242\}}
measure.given(~rain)
\textit{# -> \{0: 0.3200, 1: 0.2400, 2: 0.1250, 3: 0.1350, 4: 0.1800\}}
P((measure <= 2).given(~rain))
\textit{# -> 0.685}
P(~rain.given(measure <= 2))
\textit{# -> 0.9018089662521034}
\end{Verbatim}
Finally, from the \texttt{measure} variable, we can derive a normalized value ranging from -1.0 to 1.0 and check the consistency with previous results:
\begin{Verbatim}[fontsize=\small,xleftmargin=1cm,commandchars=\\\{\}]
norm_measure = (measure-2.) / 2.
norm_measure.given(~rain)
\textit{# -> \{-1.0: 0.3200, -0.5: 0.2400, 0.0: 0.1250, 0.5: 0.1350, 1.0: 0.1800\}}
P(~rain.given(norm_measure <= 0.))
\textit{# -> 0.9018089662521034}
\end{Verbatim}

As a last example, we present a job scheduling problem with tasks having uncertain durations. There are 3 tasks to schedule: A, B and C. There is only one precedence constraint: task B shall not be started before the end of task A; we assume that there is enough resources to execute two tasks in parallel.

\begin{figure}[H]
\centering
\begin{tikzcd} [column sep=2.0em, row sep=0.2em]
                               & \framebox[2cm]{\centering {task A}} \ar[r] & \framebox[2cm]{\centering {task B}} \ar[rd] & \\
start \ar[ru] \ar[rd]  &                &                 & end \\
                               & \framebox[2cm]{\centering {task C}}  \ar[rru] & & 
\end{tikzcd}
\caption{Job scheduling example}
\end{figure}
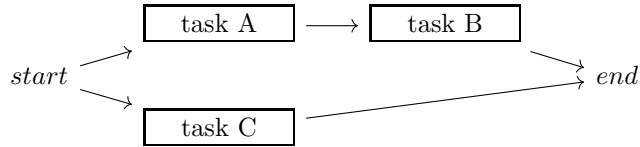

Durations of tasks A and B are characterized by known pmf (see below); duration of task C is conditioned by three possible scenarii, viz. CONSERVATIVE / EVOLUTIVE / DISRUPTIVE, each having a known probability to happen; the duration of task C is then modeled as a CPT giving a specific pmf for each scenario. We want then to calculate the shortest makespan and the total effort spent to complete the 3 tasks. 

\begin{Verbatim}[fontsize=\small,xleftmargin=1cm]
d_A = ElemPex({3: 0.1, 4: 0.8, 5: 0.1})
d_B = ElemPex({2: 0.5, 3: 0.5})
s = ElemPex({"CONSERVATIVE": 0.6, "EVOLUTIVE": 0.3, "DISRUPTIVE": 0.1})
d_C = TablePex(s,{"CONSERVATIVE": ElemPex({2: 0.7, 3: 0.3}),
                  "EVOLUTIVE"   : ElemPex({3: 0.5, 4: 0.5}),
                  "DISRUPTIVE"  : ElemPex({7: 0.2, 8: 0.7, 9: 0.1 })})
makespan = FuncPex(max,TuplePex(d_A+d_B,d_C)) 
efforts = d_A + d_B + d_C
\end{Verbatim}

The makespan has been defined by evaluating the duration of the critical path; this uses a functional pex that applies Python's \texttt{max} function on the two possible paths. From this probabilistic job scheduling model, one can now make several queries to calculate probability distributions of makespan and efforts, possibly integrating new information, assumptions or constraints.

\begin{Verbatim}[fontsize=\small,xleftmargin=1cm,commandchars=\\\{\}]
makespan
\textit{# -> \{5: 0.0450, 6: 0.4050, 7: 0.4240, 8: 0.1160, 9: 0.0100\}}
makespan.given(s == "CONSERVATIVE")
\textit{# -> \{5: 0.0500, 6: 0.4500, 7: 0.4500, 8: 0.0500\}}
makespan.given(s != "CONSERVATIVE")
\textit{# -> \{5: 0.0375, 6: 0.3375, 7: 0.3850, 8: 0.2150, 9: 0.0250\}}
efforts.given((s == "DISRUPTIVE") & (efforts <= 14))
\textit{# -> \{12: 0.0183, 13: 0.2294, 14: 0.7523\}}
makespan.given(efforts == 8)
\textit{# -> \{5: 0.0803, 6: 0.9197\}}
\end{Verbatim}

Although not conventional, backward reasoning may be done also to infer explanations from posterior  measures, assuming that some causal variables (namely, the scenario and/or specific task durations) remain uncertain.

\begin{Verbatim}[fontsize=\small,xleftmargin=1cm,commandchars=\\\{\}]
s.given((makespan <= 7) & (efforts <= 9))
\textit{# -> \{CONSERVATIVE: 0.8556, EVOLUTIVE: 0.1444\}}
s.given(makespan == 9)
\textit{# -> \{DISRUPTIVE: 1.0000\}}
TuplePex(d_A,d_B,d_C).given((makespan == 6) & (s == "CONSERVATIVE"))
\textit{# -> \{(3, 3, 2): 0.0778, (3, 3, 3): 0.0333, (4, 2, 2): 0.6222, (4, 2, 3): 0.2667\}}
TuplePex(d_A,d_B,d_C).given((makespan == 5) & (s == "CONSERVATIVE"))
\textit{# -> \{(3, 2, 2): 0.7000, (3, 2, 3): 0.3000\}}
\end{Verbatim}

We see in all these examples that the different pex types can be used and composed together to make expressive probabilistic models and queries, following the idea of probabilistic programming. There are many other examples and use cases provided on \mLea and Lea project pages (the syntax and output format in Lea are slightly different). These examples cover, among others, probabilistic arithmetic, which integrate seamlessly with conditional probabilities and Bayesian reasoning.

\section{Conclusions}

In the present paper, we have introduced a framework, namely the \emph{p}-expressions, that is meant to cover several probabilistic modeling techniques for discrete random variables having a finite domain. In essence, this framework provides primitives to define probabilistic models as direct acyclic graphs capturing the dependencies between random variables up to given prior probability mass functions. We have shown through examples that this formalism appears to be rich enough to model probabilistic arithmetic, conditioning, discrete-time Markov chains and Bayesian networks.

We have then presented a new algorithm, the Statues algorithm, which makes exact marginalization inference on those models. This algorithm relies on a special binding mechanism that uses recursive generators. Some simple examples have been provided to show this algorithm in action. In the last part, we have presented the Lea and MicroLea libraries, which are two implementations of the Statues algorithm in Python. The usage of MicroLea has been demonstrated in a set of nontrivial examples, including the definition of a Bayesian network and the treatment of advanced queries on it.

The merits and liabilities of the Statues algorithm have been shortly discussed, as well as possible extensions. The algorithm handles only discrete elements and it does not overcome the computational limitations of exact probabilistic inference. However, one of its interests in the perspective of probabilistic programming resides in its ability to address a set of problems traditionally handled by different specialized probabilistic modeling approaches. Also, it provides several  gains of efficiency compared to other exact algorithms, notably through its pruning and memoization features. For the algorithm's inner machinery, the binding mechanism based on recursive generators has proven to be elegant and powerful to handle the dependencies between random variables. The interface of the algorithm is formally specified and its correctness is proved (appendix \ref{sect:proof}). Despite these promising results, further work is needed to factually assess its assets/liabilities among other probabilistic inference algorithms.

\section{Acknowledgments}

The author warmly thanks Nicky van Foreest for reviewing the first version of the present paper and for providing fruitful advices to improve it. The author is grateful to Fr\'ed\'eric and Marie-Astrid Buelens for their wise recommendations about writing a scientific paper. The author thanks Gilles Scouvart, Nicky van Foreest, Zhibo Xiao, Noah Goodman, Paul Moore, Thomas Laroche and Guy Lalonde for their feedback, support, suggestions or contributions provided for the Lea library. The author thanks Guy Van den Broeck for having endorsed him to submit on arXiv/cs.MS.

This research did not receive any specific grant from funding agencies in the public, commercial, or not-for-profit sectors.

\newpage
\bibliographystyle{authordate1}
\bibliography{statues_algorithm_v2}

\newpage

\begin{appendices}

\section{Programming with generators} \label{sect:generators}

The concept of \emph{generator} in programming is linked to those of \emph{subroutine} and \emph{coroutine}. We assume that the prevalent notion of subroutine (known also as subprogram, function or procedure) does not require further explanation. \emph{Coroutines} are generalizations of subroutines that allow for multiple entry points, that can yield multiple times, and that resume their execution when called again \citep{saba}. \emph{Generators} are special cases of coroutines in the sense that they are constrained to yield values to the caller only.
 
For introducing the idea practically, let us consider the following example where we define a generator $(\textproc{genMessages}$) and a subroutine ($\textproc{main}$) that calls this generator.

\begin{algorithm}[H]
\caption{ basic example of generator } 
\begin{algorithmic}[1]

\Procedure{genMessages}{\,}
	\Yield $\texttt{"1, 2, 3!"}$
	\For {$msg \gets \tuple{\texttt{"Red light!"}, \; \texttt{"Green light!"}}$}
		\Yield $msg$
	\EndFor
\EndProcedure
\\
\Function{main}{\,}
    \State display $\texttt{"starting..."}$
	\For {$msg \gets \textproc{genMessages()}$}
		\State display $\texttt{"received: "}, msg$
	\EndFor
    \State display $\texttt{"end"}$
\EndFunction

\end{algorithmic}
\end{algorithm}

When calling $\textproc{main}$, the $\textproc{genMessages}$ is called and it yields three textual messages, one by one, to $\textproc{main}$'s loop. Just after each $\textbf{yield}$ statement, $\textproc{genMessages}$ freezes and gives back the control to $\textproc{main}$; at each \textbf{for} loop iteration of $\textproc{main}$, the generator $\textproc{genMessages}$ is resumed just after the $\textbf{yield}$. Here is the output:
\\

\noindent
$\texttt{starting...}$ \\
$\texttt{received: 1, 2, 3!}$ \\
$\texttt{received: Red light!}$ \\
$\texttt{received: Green light!}$ \\
$\texttt{end}$ \\

This trace clearly shows that the generator and its caller work in close cooperation, the  execution being interleaved, the control flow going back and forth between the two. This contrasts with a usual subroutine, which returns one single result to the caller. Note that a practical indicator to identify a generator is the presence of \textbf{yield} statements in its body.

To elaborate the idea, let us present a less contrived example, which demonstrates the use of recursive generators. Consider the following problem:

\begingroup
\noindent \leftskip5em \rightskip4em
-- Given two integers $n$ and $k$ such that $0 \le k \le n$, which are the binary words of $n$ bits having exactly $k$ bits equal to one?
\par
\endgroup

To give an example of results that we expect, here is the list of 4 bits-long words having exactly 2 ones ($n = 4$, $k = 2$): 
$$\tuple{0,0,1,1},\tuple{0,1,0,1},\tuple{0,1,1,0},\tuple{1,0,0,1},\tuple{1,0,1,0},\tuple{1,1,0,0}$$
A recursive algorithm to solve this problem for $(n,k)$ consists in $1^{\circ}$ assuming that the first bit is "0", make a recursive call with $(n-1,k)$ to concatenate with words of length $n-1$ and $k$ occurrences of "1", then $2^{\circ}$ assuming that the first bit is "1", make a recursive call with $(n-1,k-1)$ to concatenate with words of length $n-1$ and $k-1$ occurrences of "1"; the recursion eventually stops  when reaching the trivial case $n=0$: if $k = 0$ then an empty word is yielded, otherwise (if $k \neq 0$) then nothing is yielded since there cannot exist any bit in an empty word. The generator $\textproc{genWords}$ below formalizes this algorithm; the subroutine $\textproc{main}$ calls $\textproc{genWords}$ for solving the actual problem instance given above.

\begin{algorithm}[H]
\caption{ example of recursive generator  } 
\begin{algorithmic}[1]

\Procedure{genWords}{$n,k$}
	\If {$n = 0$}
		\If {$k = 0$}
			\Yield $\tuple{}$
		\EndIf
	\Else 
		\For {$w \gets \textproc{genWords}(n-1,k)$}
			\Yield $\tuple{0 \centerdot w}$
		\EndFor
		\For {$w \gets \textproc{genWords}(n-1,k-1)$}
			\Yield $\tuple{1 \centerdot w}$
		\EndFor
	\EndIf
\EndProcedure
\\
\Function{main}{\,}
	\For {$w \gets \textproc{genWords}(4,2)$}
		\State display $w$
	\EndFor
\EndFunction

\end{algorithmic}
\end{algorithm}
Invoking the $\textproc{main}$ subroutine displays the list of 6 words given above, in the same order. The execution of a recursive generator is very different from the execution of a usual recursive subroutine. In the present case, at each iteration of $\textproc{main}$'s loop, we have a chain of 5 active $\textproc{genWords}$ generators, each of which being blocked just after a \textbf{yield} statement.

\newpage
\section{Hints on implementation} \label{sect:implem_design}

The Statues algorithm is expressed in a highly abstract way, in order to be short and language-neutral. We aim here to provide some suggestions for implementing this algorithm in an actual program. We assume that the target programming language natively supports recursive generators and associative arrays\footnote{also known as (\emph{hash}) \emph{maps}, \emph{hash tables}, \emph{dictionaries} or \emph{symbol tables}}, to ease the pmf condensation at least; if this is not the case, equivalent constructs shall be implemented or installed from add-on packages.

\begin{enumerate}
\item If the programming language supports object orientation, it is advisable to define the different types of pex in dedicated classes. These classes should inherit from one abstract class representing any pex. The DAG structure can be captured using the \emph{composite} design pattern \citep{gof}. Then, following the \emph{template method} design pattern, $\textproc{marg}$ and $\textproc{genAtoms}$ can be defined as methods of the pex abstract class; the $\textproc{genAtomsByType}$ generator is then distributed into short methods in each pex concrete subclass, so the big $\textbf{switch}$ statement can be avoided.

\item The referential consistency, a key aspect of the algorithm, requires that pex instances are assembled in a DAG without duplication of instances (see section \ref{sect:pex_dag}). In any pex, each occurrence of a name shall refer to the same object. This may be achieved easily by using references or pointers.

\item There are several ways to implement the binding store $\beta$, which is a mutable object that must be shared by all pexes. As suggested by the algorithm, $\beta$ could be implemented by using an associative array accessible as a global variable (the keys could be, for example, variable names or references to pex instances). If an OO approach is chosen (as sketched above), this binding store could be removed: the bound value could be stored as an attribute at the level of each pex instance; a flag or a special dummy value should then be defined to represent the "no value bound" state. Note however that all these approaches entail that the routines involved in the algorithm are not reentrant, hence these are not suited to evaluate in parallel several queries sharing elements. A solution to enable such parallelism is to pass the binding store as argument to the routines.\footnote{This last approach is quite similar to Church's, with the concept of \emph{environment} \citep{goodman}}. This approach is elaborated in the proof provided in appendix \ref{sect:proof}, which uses an updated algorithm (see \ref{sect:algo_bis}).

\item In some cases, functional pex can cause errors (e.g. division by zero), which prevents resuming the calculation. In many language, this may raise an exception that halts the processing. In such situation, depending on the implementation of the binding store, it may be important to unbind all pexes that are currently bound. The $\textbf{finally}$ block, present in several languages, can be useful for unbinding pex in the $\textproc{genAtoms}$ generator.

\item If operator overloading is supported by the programming language, the expressiveness of functional pexes may be greatly improved by redefining infix operators for arithmetic ($+,-,*,/,\,\ldots$), comparison ($==,<,<=,\ldots$) and logic (not, and, or, \ldots).

\item The pmf for elementary pexes do not need a direct access through dictionary. A compact data structure allowing sequential access is sufficient for the algorithm.

\item A lot of functional pexes cover 2-ary functions. The performance of the algorithm can be slightly improved by having a dedicated functional pex, which does not rely on tuple RVs to make the combinatorics. Such case shall be handled by a dedicated case in the $\textproc{genAtomsByType}$ generator, with two embedded \texttt{for} loops.

\item The binding/unbinding performed by $\textproc{genAtoms}$ is unconditional. In several cases however, such handling is superfluous; actually, this treatment could be skipped in two cases: for singleton elementary pexes and for any pex which is referred only once in the evaluated pex. An optimized implementation could then detect such pexes in a preprocessing and flag them to skip their binding.
\end{enumerate}

For the readers eager to see concrete application of these hints in Python, let us mention that the \mLea implementation (section \ref{sect:microlea_lib}) follows the points 1, 2, 3, 4 (binding value attribute), 5 and 6 presented above. The Lea implementation (section \ref{sect:lea_lib}) follows all the eight hints but, as already explained, is harder to understand.

\newpage
\section{Proof of algorithm correctness} \label{sect:proof}
 
We provide in the present appendix a proof of the correctness of the Statues algorithm. This proof uses propositional logic and basic probability theory.

\subsection{Conventions and definitions}

In the following, an uppercase latin letter, like $X, Y, ...$ , represents a random variable (as defined in \ref{sect:RV}), a lowercase latin letter, like $x, y, ...$ , represents a value or a pex (as defined in \ref{sect:pex_dag}) and a lower case greek letter, like $\alpha, \lambda, ...$ represents a logical proposition. By convention, the equality operator has precedence over logical connectors; for instance, the proposition $\alpha \wedge \; X = x \; \wedge \beta$ is meant for $\alpha \wedge (X = x) \wedge \beta$. The following notation
$$ \Or{i}{\text{cond}(i)}{\lambda_i} $$
is defined as a disjunction of a subset of $n$ propositions $\{\lambda_i\}$, each of which is present only if some condition cond$(i)$ is true. If $n=0$ or if cond$(i)$ is false for any $i$, then this expression is defined to be equivalent to $\false$. 

We shall use the following definitions.
\begin{defn} \label{DEF_MUTEX}
The set of propositions $ \{ \lambda_i \} $ are \emph{mutually exclusive} iff
$$ \forall \; i, j : \quad \lambda_i \wedge \lambda_j \quad \Rightarrow \quad i = j$$
\end{defn}

\begin{defn}
A \emph{binding assertion} $\alpha$ is a logical proposition of the form
$$ \alpha \; \triangleq \; \bigwedge_{i} \big( X_i = x_i \big) $$
for any set of $n$ random variables $X_i$ and any set of $n$ values $x_i \in \text{dom}(X_i)$ such that this conjunction is not contradictory, i.e. $\PR{\alpha} > 0$. A state expressing no binding (in particular, at algorithm start-up) is indicated by the binding assertion ${\alpha = \true}$.
\end{defn}

\begin{defn}
Be a binding assertion $\alpha$ and a pex $d \triangleq \dpd{X}$ for some random variable $X$.  We say that $\alpha$ \emph{binds} $d$ (or $\alpha$ \emph{binds} $X$) iff $X$ appears in some equality present in $\alpha$. We have then 
$$ \alpha \quad \Rightarrow \quad X = x $$
for some value $x \in \text{dom}(X)$.
\end{defn}

With such definitions, the state of the binding store, i.e. the binding of any subset of random variables at any given state of the execution of the algorithm, can be expressed as a binding assertion.

\subsection{Adding invariants to the algorithm} \label{sect:algo_bis}

The Statues algorithm involves one entry-point subroutine (\textproc{marg}) and two generators \textproc{genAtoms} and \textproc{genAtomsByType}. To establish a formal proof of the correctness of \textproc{marg}, we need first to specify formally what is meant to be correct for \textproc{marg} and, incidentally, the same for \textproc{genAtoms} and \textproc{genAtomsByType}. Such specifications are difficult to provide on the algorithm in the form presented in \ref{sect:statues_algo} because generators cannot be specified as pre- and post-conditions as done for usual subroutines; also, these generators use a binding store, namely $\beta$, that does not appear in their signature and which is updated at each step of execution. To ease the specifications of the generators and the proof of their correctness, we shall first rewrite the algorithm to make the binding explicit at each step of the execution.\footnote{As stated in appendix  \ref{sect:implem_design}, this rewriting may be interesting also to obtain a true \emph{functional programming} style, removing the need of any mutable object.} Below, we have augmented each of the two generators \textproc{genAtoms} and \textproc{genAtomsByType} presented in algoritms \ref{algo:statues2} and \ref{algo:statues3}, so that the binding assertions are made explicit by means of invariants. First, an argument $\alpha$ has been added to each generator's signature: it indicates the binding assertion that holds during the whole execution of that generator. Secondly, each yielded atom contains a third element, which is a new binding assertion $\lambda$ to add to the current binding assertion $\alpha$; this $\lambda$ assertion holds until the next \textbf{yield} statement of the generator or until its end.

\begin{algorithm} [H]
\caption{Statues algorithm -- part 2 (rewritten): $\textproc{genAtoms}$ generator } 
\label{algo:statues2b}
\begin{algorithmic}[1]
	\Procedure{genAtoms}{$d,\alpha$}   \Comment {assuming $d \triangleq \dpd{Y}$}
	\If {$ \alpha \Rightarrow Y = v $}		\Comment {$d$ is bound}
		\Yield $(v,\; 1,\; \text{true})$   \Comment {yield unique atom to caller}
	\Else								\Comment {$d$ is unbound}
		\For {$(v,\; p,\; \sigma) \gets \textproc{genAtomsByType}(d,\alpha)$}
			\Yield $(v,\; p,\; \sigma \wedge Y = v)$       \Comment {yield atom to caller}
		\EndFor
	\EndIf
\EndProcedure
\end{algorithmic}
\end{algorithm}

Here are some explanations on the specific example of algorithm \ref{algo:statues2b} to help understanding of these invariant notations. The given binding condition $\alpha$ is true on the whole execution of \textproc{genAtoms} (from line 2 to 8), even if the control is passed back to the caller at \textbf{yield} statements (lines 3 and 6); this assertion is enforced in line 5, by passing $\alpha$ to \textproc{genAtomsByType}. The $\sigma$ present in each yielded atom is a binding assertion giving possible new binding(s) made by \textproc{genAtomsByType}; the $\sigma$ assertion is an invariant that is true during the current loop iteration, i.e. the line 6; hence, the atom yielded on line 6 contains $\sigma \wedge Y = v$, which encompasses the current binding done by \textproc{genAtomsByType} and the current binding made on $Y$ by \textproc{genAtoms}.

\begin{algorithm} [H]
\caption{Statues algorithm -- part 3 (rewritten): $\textproc{genAtomsByType}$ generator } 
\label{algo:statues3b}
\begin{algorithmic}[1]
\Procedure{genAtomsByType}{$d,\alpha$} 
\\
\Switch{$d$}
\\
\Case {$\pmf{\re{\bp{a}}}$}			\Comment {$d$ is an elementary pex } 
		\For {$(v,\; p) \in \bp{a}$}
			\Yield $(v,\; p,\; \true)$
		\EndFor
\EndCase
\\
\Case {$\efunc{f}{x}$}			\Comment {$d$ is a functional pex}
	\For {$(v,\; p,\; \lambda) \gets \textproc{genAtoms}(x,\alpha)$}
		\Yield $(f(v),\; p,\; \lambda)$
	\EndFor
\EndCase
\\
\Case {$h \otimes t$}			\Comment {$d$ is a tuple pex}
	\For {$(v,\; p,\; \lambda) \gets \textproc{genAtoms}(h,\alpha)$}
	\For {$(s,\; q,\; \mu) \gets \textproc{genAtoms}(t,\alpha \wedge \lambda)$}	
		\Yield $(\tuple{v \centerdot s},\; p.q ,\; \lambda \wedge \mu)$
	\EndFor
	\EndFor
\EndCase
\\
\Case {$\egiven{x}{e}$}		\Comment {$d$ is a conditional pex}
	\For {$(v,\; p,\; \lambda) \gets \textproc{genAtoms}(e,\alpha)$}
		\If {$v$}
			\For {$(s,\; q,\; \mu) \gets \textproc{genAtoms}(x,\alpha \wedge \lambda)$}
				\Yield $(s,\; p.q ,\; \lambda \wedge \mu )$
			\EndFor
		\EndIf
	\EndFor
\EndCase
\\
\Case {$c \circledcirc g$}			\Comment {$d$ is a table pex}
	\For {$(v,\; p,\; \lambda) \gets \textproc{genAtoms}(c,\alpha)$}
		\For {$(s,\; q,\; \mu) \gets \textproc{genAtoms}(g[v],\alpha \wedge \lambda)$}
			\Yield $(s,\; p.q,\; \lambda \wedge \mu)$
		\EndFor
	\EndFor
\EndCase

\EndSwitch
\\
\EndProcedure
\end{algorithmic}
\end{algorithm}

To be consistent, the \textproc{marg} subroutine (algorithm \ref{algo:statues1}) shall be slightly updated also: the binding condition $\true$ shall be added as \textproc{genAtoms} argument; this is meant to declare that there is no binding at start-up; this replaces  the initialization of empty global store $\beta$ in the initial algorithm.


\begin{algorithm} 
\caption{Statues algorithm -- part 1 (rewritten): $\textproc{marg}$ subroutine} \label{algo:statues1b}
\begin{algorithmic}[1]
\Function{marg}{$d$}
\State $a \gets \{\}$ \Comment {init unnormalized pmf}
\For {$(v,p,\lambda) \gets \textproc{genAtoms}(d,\true)$} \Comment {collect atoms}
	\If {$\nexists \; a[v]$}
		\State $a[v] \gets 0$ 
	\EndIf
	\State $a[v] \gets a[v] + p$ \Comment {condense pmf}
\EndFor
\If {$a = \{\}$}  \Comment {pmf is empty: error}
	\State \textbf{halt} with error
\EndIf
\State $s \gets \sum \limits_{(v,p) \in a} p $ \Comment {normalize pmf}
\State \Return $\dpdf {(v,\dfrac{p}{s}) \: \big\vert \: (v,p) \in a}$ 

\EndFunction
\end{algorithmic}
\end{algorithm}

\subsection{Formal specifications}

In order to establish the proof of correctness, we have to provide a specification formalizing the pre- and post-conditions of the \textproc{marg} subroutine, the entry-point of the algorithm.

\begin{defn} \label{MA_DEF}
Be a pex $d \triangleq \dpd{Y}$ for some random variable $Y$. The subroutine \textproc{marg}$(d)$ is correct iff it terminates \\
-- either by reporting an error if dom$(Y)$ is empty \\
-- or by returning a pmf $\bp{r}$ such that the following two conditions hold:
\vspace{-12pt}
\begin{addmargin}[4em]{0em}
\begin{flalign*}
& \left\{
    \begin{array}{ll}
 \text{dom}(\bp{r}) \; = \; \text{dom}(Y) \\[6pt]
 \forall \; y \in \text{dom}(Y) : \quad \bp{r}[y] \; = \; \PR{Y = y}
    \end{array}
\right. &
\end{flalign*}
\end{addmargin}
\end{defn}

Since the \textproc{marg} subroutine calls \textproc{genAtoms}, which itself calls \textproc{genAtomsByType}, these two generators shall also be formally specified. Note that, contrarily to the specification given above, the following specifications are not meant to have an interpretation outside of the present proof; these are just technical statements that aim at establishing the proof.

\begin{samepage}
\begin{defn} \label{GA_DEF}
Be a pex $d \triangleq \dpd{Y}$ for some random variable $Y$ and be a binding assertion $\alpha$. The generator \textproc{genAtoms}$(d,\alpha)$ is correct iff it yields $n$ atoms $(v_i,\; p_i,\; \lambda_i)$ such that the following six conditions hold:
\vspace{-12pt}
\begin{addmargin}[4em]{0em}
\begin{flalign*}
& \left\{
    \begin{array}{ll}
 \textbf{GA1}. \;\; \textproc{genAtoms}(d,\alpha) \text{ terminates.} \\[6pt]
 \textbf{GA2}. \;\; \forall \; i: \;\; \alpha \wedge \lambda_i \text{ is a binding assertion} \\[6pt]
 \textbf{GA3}. \;\; \big\{ \alpha \wedge \lambda_i \big\} \text{ are mutually exclusive} \\[6pt]
 \textbf{GA4}. \;\; \forall \; i: \;\; \PR{\lambda_i \given \alpha} \; = \; \dfrac{p_i}{\sum_k p_k} \\[10pt]
 \textbf{GA5}. \;\; \forall \; y: \;\; \alpha \; \wedge \; Y = y \quad \Rightarrow \quad  \Or{i}{v_i=y}{\lambda_i} \\[6pt]
 \textbf{GA6}. \;\; \forall \; i: \;\; \alpha \wedge \lambda_i \quad \Rightarrow \quad Y = v_i    
    \end{array}
\right. &
\end{flalign*}
\end{addmargin}
\end{defn}
\end{samepage}

\begin{defn}  \label{GABT_DEF}
Be a pex $d \triangleq \dpd{Y}$ for some random variable $Y$ and be a binding assertion $\alpha$ not binding $Y$. The generator \textproc{genAtomsByType}$(d,\alpha)$ is correct iff it yields $n$ atoms $(v_i,\; p_i,\; \sigma_i)$ such that the following five conditions hold:
\vspace{-12pt}
\begin{addmargin}[4em]{0em}
\begin{flalign*}
& \left\{
    \begin{array}{ll}
 \textbf{GABT1}. \;\; \textproc{genAtomsByType}(d,\alpha) \text{ terminates.} \\[6pt]
 \textbf{GABT2}. \;\; \forall \; i: \;\; \alpha \wedge \sigma_i \wedge Y = v_i \text{ is a binding assertion} \\[6pt]
 \textbf{GABT3}. \;\; \big\{ \alpha \wedge \sigma_i \wedge Y = v_i \big\} \text{ are mutually exclusive} \\[6pt]
 \textbf{GABT4}. \;\; \forall \; i: \;\; \PR{\sigma_i \wedge Y = v_i \given \alpha} \; = \; \dfrac{p_i}{\sum_k p_k} \\[10pt]
 \textbf{GABT5}. \;\; \forall \; y: \;\; \alpha \; \wedge \; Y = y \quad \Rightarrow \quad  \Or{i}{v_i=y}{\sigma_i} 
    \end{array}
\right. &
\end{flalign*}
\end{addmargin}
\end{defn}

In the previous definitions, the index $i$ is meant to cover the range $[1,n]$ or the empty range if $n = 0$; in GA5 and GABT5, $y$ is meant for any value, whether belonging to dom$(Y)$ or not. These definitions cover the case $n = 0$, where no atom is yielded. In such case, the conditions GA2, GA3, GA4, GA6, GABT2, GABT3 and GABT4 are trivially true;
in such case also, the conditions GA5 and GABT5 have a specific consequence, which shall be detailed in proposition \ref{prop:MA}. Note that conditions GA4 and GABT4, which involve conditional probability on binding assertion $\alpha$, are well defined in any case since $\alpha$ is non-contradictory by definition.

\subsection{Proof}

\noindent
\textbf{Outline of the proof}. Proving the correctness of the Statues algorithm consists in proving that the entry-point subroutine \textproc{marg} is correct (proposition \ref{prop:MA}). This in turn requires proving that \textproc{genAtoms} generator is correct (proposition \ref{prop:GA}) and that \textproc{genAtomsByType} generator is correct (proposition \ref{prop:GABT}); since these generators are mutually recursive, this last proof is done by induction. To balance the complexity, all these proofs use subsidiary propositions, which could be considered as lemmas: propositions \ref{prop:COND} and \ref{prop:MUTEX}, as very general statements, and propositions \ref{prop:GABT_L0} to \ref{prop:GA_TO_GABT_TA}, as statements specific to the afore-mentioned generators. The order of presentation follows a "bottom-up" approach, which is consistent with the mathematical practice. However, for having an overall view of the proof, it may be useful for the reader to go backward, starting from proposition \ref{prop:MA} and following a "top-down" approach.
\\\\
We first state hereafter two general propositions: the first one is a generalization of the conditional probability formula, the second one establishes the conservation of mutual exclusiveness when adding conjunctions.

\begin{prop}  \label{prop:COND}
For any propositions $\alpha,\; \lambda,\; \mu$ such that $\alpha \wedge \lambda$ is not contradictory,
\begin{equation} \PR{\lambda \wedge \mu \given \alpha} \; = \; \PR{\lambda \given \alpha} \PR{\mu \given \alpha \wedge \lambda}\end{equation}
\end{prop}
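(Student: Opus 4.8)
The plan is to prove this by direct expansion using the elementary definition of conditional probability, $\PR{A \given B} = \PR{A \wedge B} / \PR{B}$, which is applicable provided the conditioning event has nonzero probability. So the first step is to check well-definedness of all three conditional probabilities appearing in the statement. The hypothesis that $\alpha \wedge \lambda$ is not contradictory means precisely that $\PR{\alpha \wedge \lambda} > 0$, so $\PR{\mu \given \alpha \wedge \lambda}$ is defined. Moreover, since $\alpha \wedge \lambda \Rightarrow \alpha$, monotonicity of probability gives $\PR{\alpha} \ge \PR{\alpha \wedge \lambda} > 0$, so $\PR{\lambda \given \alpha}$ and $\PR{\lambda \wedge \mu \given \alpha}$ are defined as well.

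Once well-definedness is secured, I would expand the right-hand side. Writing out the two factors,
\begin{equation*}
\PR{\lambda \given \alpha} \, \PR{\mu \given \alpha \wedge \lambda} \; = \; \frac{\PR{\alpha \wedge \lambda}}{\PR{\alpha}} \cdot \frac{\PR{\alpha \wedge \lambda \wedge \mu}}{\PR{\alpha \wedge \lambda}} .
\end{equation*}
The common factor $\PR{\alpha \wedge \lambda}$ cancels (here again the non-contradiction hypothesis is what licenses the cancellation), leaving $\PR{\alpha \wedge \lambda \wedge \mu} / \PR{\alpha}$. Recognizing the numerator as $\PR{\alpha \wedge (\lambda \wedge \mu)}$ and applying the definition of conditional probability one last time identifies this quotient as $\PR{\lambda \wedge \mu \given \alpha}$, which is exactly the left-hand side.

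There is no genuinely hard step here: the proposition is simply the chain rule for two conjuncts, and the entire content is an algebraic identity among three ratios of probabilities. The only point requiring care — and the only place the hypothesis is used — is ensuring that every denominator is strictly positive so that the conditional probabilities exist and the cancellation is legitimate; this is the reason the statement assumes $\alpha \wedge \lambda$ (rather than merely $\alpha$) to be non-contradictory. I would therefore foreground that positivity argument at the outset and treat the computation itself as a one-line cancellation.
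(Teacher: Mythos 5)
Your proof is correct and takes essentially the same route as the paper's: both expand the conditional probabilities as ratios via $\PR{A \given B} = \PR{A \wedge B}/\PR{B}$ and cancel, the only cosmetic difference being that you work from the right-hand side to the left while the paper goes the other way. Your explicit positivity check (that $\PR{\alpha \wedge \lambda} > 0$ forces $\PR{\alpha} > 0$ by monotonicity, so every denominator is legitimate) is in fact slightly more careful than the paper, which leaves this implicit.
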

\begin{proof}
Be the propositions $\alpha,\; \lambda,\; \mu$ such that $\alpha \wedge \lambda$ is not contradictory. Using the formula of conditional probability, we can derive
\begin{align}
\PR{\lambda \wedge \mu \given \alpha} \; & = \; \frac{\PR{\lambda \wedge \mu \wedge \alpha}}{\PR{\alpha}} \; = \; \frac{\PR{\alpha \wedge \lambda} \PR{\mu \given \alpha \wedge \lambda}}{\PR{\alpha}} \nonumber \\
& = \; \frac{\PR{\lambda \wedge \alpha}}{\PR{\alpha}} \PR{\mu \given \alpha \wedge \lambda} \; = \; \PR{\lambda \given \alpha} \PR{\mu \given \alpha \wedge \lambda}
\end{align}
\end{proof}

\begin{prop} \label{prop:MUTEX}
For any set of $n$ mutually exclusive propositions $ \{ \lambda_i \} $ and for any set of $n$ propositions $ \{ \mu_i \} $, the set of conjunctions $ \{ \lambda_i \wedge \mu_i \} $ are mutually exclusive.
\end{prop}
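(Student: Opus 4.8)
The plan is to unfold Definition \ref{DEF_MUTEX} for the family of conjunctions $\{\lambda_i \wedge \mu_i\}$ and reduce the required implication to the hypothesis already assumed on $\{\lambda_i\}$. Concretely, to prove that the conjunctions are mutually exclusive I must show that, for all indices $i$ and $j$, the proposition $(\lambda_i \wedge \mu_i) \wedge (\lambda_j \wedge \mu_j)$ entails $i = j$. The key observation is that the $\mu$ conjuncts play no role at all: they can simply be discarded, leaving exactly the antecedent of the hypothesis.

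First I would fix arbitrary indices $i$ and $j$ and assume the antecedent $(\lambda_i \wedge \mu_i) \wedge (\lambda_j \wedge \mu_j)$. By conjunction elimination (monotonicity of $\wedge$), this entails $\lambda_i \wedge \lambda_j$, since dropping conjuncts only weakens a proposition. Next I would invoke the mutual exclusiveness of $\{\lambda_i\}$: by Definition \ref{DEF_MUTEX}, $\lambda_i \wedge \lambda_j \Rightarrow i = j$. Chaining the two implications gives $(\lambda_i \wedge \mu_i) \wedge (\lambda_j \wedge \mu_j) \Rightarrow i = j$, which is precisely the definition of mutual exclusiveness for $\{\lambda_i \wedge \mu_i\}$.

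The hard part is essentially nonexistent: the argument is pure propositional logic, resting only on the fact that $\alpha \wedge \beta$ entails $\alpha$. No probabilistic content, and nothing about the specific structure of binding assertions, is required here (indeed the $\mu_i$ are arbitrary propositions, which is exactly why they can be ignored). The one point to state carefully is that the implication in Definition \ref{DEF_MUTEX} is read as logical entailment at the level of propositions, so that the weakening step is legitimate; once this reading is in place the conclusion is immediate, and the degenerate case $n = 0$ holds vacuously.
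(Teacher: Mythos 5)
Your proof is correct and follows essentially the same route as the paper's: both arguments drop (or regroup away) the $\mu$ conjuncts from $(\lambda_i \wedge \mu_i) \wedge (\lambda_j \wedge \mu_j)$ to obtain $\lambda_i \wedge \lambda_j$ and then apply Definition \ref{DEF_MUTEX} for $\{\lambda_i\}$ to conclude $i = j$. Your explicit remarks on reading the implication as entailment and on the vacuous case $n=0$ are fine additions but do not change the substance.
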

\begin{proof}
Be a set of $n$ mutually exclusive propositions $ \lambda_i $ and be a set of $n$ propositions $ \mu_i $. Be the indexes $i$ and $j$. Using definition \ref{DEF_MUTEX}, we can derive
\begin{align}
(\lambda_i \wedge \mu_i) \wedge (\lambda_j \wedge \mu_j) \quad  & \Rightarrow \quad (\lambda_i \wedge \lambda_j) \wedge (\mu_i \wedge \mu_j) \nonumber \\
 & \Rightarrow \quad i = j
\end{align}
Hence, $ \{ \lambda_i \wedge \mu_i \} $ are mutually exclusive.
\end{proof}

We are now equipped to establish several propositions that eventually prove that \textproc{genAtomsByType} and \textproc{genAtoms} are correct with regard to their specifications given in \ref{GABT_DEF} and \ref{GA_DEF}.

\begin{prop} \label{prop:GABT_TO_GA}
For any pex $d$ and for any binding assertion $\alpha$, if $\alpha$ binds $d$ or if \textproc{genAtomsByType}$(d,\alpha)$ is correct, then \textproc{genAtoms}$(d,\alpha)$ is correct.
\end{prop}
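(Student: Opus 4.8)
The plan is to split on which branch of \textproc{genAtoms} (algorithm \ref{algo:statues2b}) is executed, since the choice of branch is governed \emph{precisely} by whether $\alpha$ binds $d \triangleq \dpd{Y}$: the test $\alpha \Rightarrow Y = v$ on line 2 holds exactly when $\alpha$ binds $Y$. I would therefore verify the six conditions GA1--GA6 of definition \ref{GA_DEF} separately in each branch, reading the yielded atoms directly off the code, and in each case using whichever disjunct of the hypothesis is available.

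First I would treat the case where $\alpha$ binds $d$. Here the algorithm takes the \textbf{if} branch and yields the single atom $(v,\,1,\,\true)$, so $n = 1$, $v_1 = v$, $p_1 = 1$, $\lambda_1 = \true$, and no property of \textproc{genAtomsByType} is needed. GA1 is immediate (a single \textbf{yield}, no loop); GA2 holds because $\alpha \wedge \lambda_1 = \alpha$ is a binding assertion by assumption; GA3 is vacuous for one atom; GA4 reads $\PR{\true \given \alpha} = p_1/\sum_k p_k$, i.e. $1 = 1/1$; and GA6 is exactly the branch hypothesis $\alpha \Rightarrow Y = v_1$. The only condition needing care is GA5: for $y = v$ the disjunction is $\lambda_1 = \true$ and the implication is trivial, whereas for $y \neq v$ the premise $\alpha \wedge Y = y$ is contradictory (since $\alpha \Rightarrow Y = v$ forces $Y = v \wedge Y = y$), so the empty disjunction $\false$ is implied vacuously.

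Next I would treat the case where $\alpha$ does not bind $d$. Then the first disjunct of the antecedent is false, so the disjunctive hypothesis forces the second: \textproc{genAtomsByType}$(d,\alpha)$ is correct, which is legitimate since its precondition (``$\alpha$ not binding $Y$'') is exactly what this case assumes. This yields GABT1--GABT5 for the atoms $(v_i,\,p_i,\,\sigma_i)$, and the \textbf{else} branch re-emits each as $(v_i,\,p_i,\,\sigma_i \wedge Y = v_i)$, so the \textproc{genAtoms} assertions are $\lambda_i = \sigma_i \wedge Y = v_i$. Most conditions then transcribe directly: GA1 from GABT1 (a finite loop over a terminating generator), GA2 from GABT2, GA3 from GABT3, GA4 from GABT4, and GA6 holds because $Y = v_i$ is literally a conjunct of $\lambda_i$. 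I expect GA5 to be the one genuine obstacle, as GABT5 only supplies $\alpha \wedge Y = y \Rightarrow \Or{i}{v_i = y}{\sigma_i}$ and I must upgrade each $\sigma_i$ to $\lambda_i = \sigma_i \wedge Y = v_i$. The key observation is that every index $i$ in the disjunction satisfies $v_i = y$, so under the premise $Y = y$ the extra conjunct $Y = v_i$ is automatically met; hence a disjunct $\sigma_i$ selected by GABT5 immediately delivers the corresponding $\lambda_i$. Combining both cases establishes GA1--GA6 and thus the correctness of \textproc{genAtoms}$(d,\alpha)$.
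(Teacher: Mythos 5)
Your proof is correct and takes essentially the same approach as the paper's: the same case split on whether $\alpha$ binds $d$ (equivalently, which branch of \textproc{genAtoms} executes), with the same condition-by-condition verification of GA1--GA6, including the same key step for GA5 in the unbound case, namely distributing the premise $Y=y$ into the disjunction supplied by GABT5 so that each $\sigma_i$ upgrades to $\sigma_i \wedge Y = v_i$. If anything, you are slightly more explicit than the paper on two minor points: the vacuous truth of GA5 when $y \neq v$ in the bound case, and the remark that the unbound case is precisely what licenses invoking the correctness of \textproc{genAtomsByType}, whose specification presupposes $\alpha$ not binding $Y$.
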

\begin{proof}
Be a pex $d \triangleq \dpd{Y}$ for some random variable $Y$ and be a binding assertion $\alpha$. According to algorithm \ref{algo:statues2b}, there are two cases to examine, depending whether $\alpha$ binds $Y$ or not.
\begin{case}
If $\alpha$ binds $Y$, that is \; $\alpha \Rightarrow Y = v$ \; for some value $v$, then \textproc{genAtoms}$(d,\alpha)$ yields the sole atom $(v,\; 1,\; \true)$ (see line 3 of algorithm \ref{algo:statues2b}). As required by the specification \ref{GA_DEF}, there are six statements to verify: GA1, GA2, GA3, GA4, GA5 and GA6.
\\[4pt]
-- \textbf{GA1}: \textproc{genAtoms}$(d,\alpha)$ terminates since no loop is executed.
\\[4pt]
-- \textbf{GA2}: $\alpha \wedge \true $ is a binding assertion, by assumption.
\\[4pt]
-- \textbf{GA3}: $\big\{ \alpha \wedge \true \big\}$ is trivially mutually exclusive since it is a singleton.
\\[4pt]
-- \textbf{GA4}: $\PR{\true \given \alpha} \; = \; \dfrac{1}{1} $ is trivially verified.
\\[4pt]
-- \textbf{GA5}: Be a value $y$.
The implication to prove is
\begin{equation} \label{GA5_1}
\alpha \; \wedge \; Y = y \quad \Rightarrow \quad  \Or{i}{v=y}{\true}
\end{equation}
There are two cases to verify:
\begin{addmargin}[1em]{0em}
- if $y = v$, then (\ref{GA5_1}) becomes  $ \alpha \wedge Y = v \; \Rightarrow \; \true$; \\
- if $y \neq v$, then (\ref{GA5_1}) becomes $ \alpha \wedge Y = y \; \Rightarrow \; \false $.\\
\end{addmargin}
\vspace{-10pt}
In both cases, the statement (\ref{GA5_1}) is verified due to the condition $\alpha \Rightarrow Y = v$ stated in the present case.
\\[4pt]
-- \textbf{GA6}: The implication $\alpha \wedge \true \; \Rightarrow \; Y = v$ is trivially verified since it is the condition stated in the present case.
\end{case}
\begin{case}
If $\alpha$ does not bind $Y$, then, according to lines 5-7 of algorithm \ref{algo:statues2b}, \textproc{genAtomsByType}$(d,\alpha)$ yields $(v_i,\; p_i,\; \sigma_i)$. Since we assume here that this generator is correct, this set of atoms verify the statements GABT1, GABT2, GABT3, GABT4 and GABT5. Then, \textproc{genAtoms}$(d,\alpha)$ yields $(v_i,\; p_i,\; \sigma_i \wedge Y = v_i)$. Again, there are six statements to verify: GA1, GA2, GA3, GA4, GA5 and GA6.
\\[4pt]
-- \textbf{GA1}: \textproc{genAtoms}$(d,\alpha)$ terminates since GABT1 ensures that the executed loop terminates.
\\[4pt]
-- \textbf{GA2}, \textbf{GA3}, \textbf{GA4}: these three statements are trivially verified since, in the present case, these are equivalent respectively to GABT2, GABT3, GABT4, which are verified by assumption.
\\[4pt]
-- \textbf{GA5}: Be a value $y$. Using GABT5, we can derive
\begin{align}
\alpha \; \wedge \; Y = y \quad & \Rightarrow \quad \big( \Or{i}{v_i=y}{\sigma_i} \; \big)  \; \wedge \;  Y = y \nonumber \\
& \Rightarrow \quad \Or{i}{v_i=y}{\big( \sigma_i \;  \wedge \;  Y = v_i\big)}
\end{align}
-- \textbf{GA6}: Be an index $i$. The implication $\alpha \wedge \sigma_i \wedge Y = v_i \; \Rightarrow \; Y = v_i$ is trivially verified.
\end{case}
\end{proof}

The following five propositions establish the correctness of ${\textproc{genAtomsByType}}$  (algorithm \ref{algo:statues3b}) for each of the five types of pex. For the four derived pex types, the correctness of \textproc{genAtoms} is assumed. These propositions shall then be used in the proof by induction stating the unconditional correctness of ${\textproc{genAtomsByType}}$ (proposition \ref{prop:GABT}). For tuple, conditional and table pex, the algorithm \ref{algo:statues3b} executes two embedded loops: the iterations on the outer loop shall be numbered by the index $i$ and the iterations on the inner loop shall be numbered by the index pair $(i,j)$.

\begin{prop} \label{prop:GABT_L0}
For any elementary pex $d$ and for any binding assertion $\alpha$ not binding $d$, ${\textproc{genAtomsByType}(d,\alpha)}$ is correct.
\end{prop}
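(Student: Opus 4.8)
The plan is to verify directly the five conditions GABT1--GABT5 of Definition \ref{GABT_DEF} for the elementary branch of \textproc{genAtomsByType} (lines 5--8 of algorithm \ref{algo:statues3b}). Writing $d \triangleq \dpd{Y}$ with $Y = \re{\bp{a}}$, that branch yields exactly one atom $(v_i,\, p_i,\, \true)$ for each pair $(v_i,p_i) \in \bp{a}$. Because the pmf is condensed, the values $v_i$ are pairwise distinct, each $p_i > 0$, and $\sum_i p_i = 1$; moreover every $\sigma_i = \true$, so each condition to check reduces to a statement about $\alpha \wedge Y = v_i$.

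First I would dispatch the three bookkeeping conditions. GABT1 holds since the loop ranges over the finite set $\bp{a}$. GABT3 holds because for $i \neq j$ the values differ, so $Y = v_i \wedge Y = v_j$ is contradictory and Definition \ref{DEF_MUTEX} is satisfied vacuously. For GABT5 I would split on $y$: if $y = v_i$ for some $i$ then the disjunction $\Or{i}{v_i = y}{\true}$ contains $\true$ and the implication is immediate; if $y \notin \text{dom}(Y)$ then $\PR{Y = y} = 0$, so $\alpha \wedge Y = y$ is contradictory and the implication collapses to $\false \Rightarrow \false$, the disjunction being empty hence $\false$ by the convention fixed for the $\bigvee$ notation.

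The crux is GABT2 and GABT4, which both rest on a single fact: that the elementary RV $Y$ is independent of the binding assertion $\alpha$. Since $\alpha$ does not bind $Y$, and since the value of a fresh elementary RV is drawn independently of all other elementary RVs (hence of any binding assertion in which $Y$ does not occur), conditioning on $\alpha$ leaves the prior of $Y$ unchanged, i.e. $\PR{Y = v_i \given \alpha} = \PR{Y = v_i} = p_i$. Together with $\sum_k p_k = 1$ this yields GABT4 directly. The same independence gives $\PR{\alpha \wedge Y = v_i} = \PR{\alpha}\, p_i > 0$ (using $\PR{\alpha} > 0$ and $p_i > 0$), so $\alpha \wedge Y = v_i$ is a non-contradictory conjunction of equalities, which is GABT2.

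I expect the independence step to be the only delicate point: it is where both the elementary nature of $Y$ and the hypothesis that $\alpha$ does not bind $Y$ are genuinely used, and it is what converts the syntactic ``not binding'' assumption into the probabilistic identity $\PR{Y = v_i \given \alpha} = p_i$. The remaining verifications are purely formal once this identity is in hand.
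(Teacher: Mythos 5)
Your proof is correct and follows essentially the same route as the paper's: the same five-condition verification, with finiteness of the pmf for GABT1, distinctness of the condensed values for GABT3, the same case split on whether $y \in \text{dom}(Y)$ for GABT5, and the identity $\PR{Y = v_i \given \alpha} = p_i$ driving both GABT2 and GABT4. The only difference is one of emphasis: where you justify that identity by the mutual independence of elementary RVs, the paper simply asserts it from the hypothesis that $\alpha$ does not bind $Y$ (and invokes Proposition \ref{prop:MUTEX} for GABT3), so your write-up is, if anything, slightly more explicit at exactly the step where the syntactic ``not binding'' hypothesis is converted into a probabilistic statement.
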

\begin{proof}
Be an elementary pex $d \triangleq \dpd{X}$ for some elementary random variable $X$. According to lines 6-8 of algorithm \ref{algo:statues3b}, the ${\textproc{genAtomsByType}(d,\alpha)}$ generator yields $(v_i,\; p_i,\; \true)$ for the $n$ elements of the pmf, with $n \ge 1$. As required by the specification \ref{GABT_DEF}, there are five statements to verify: GABT1, GABT2, GABT3, GABT4 and GABT5.
\\[4pt]
-- \textbf{GABT1}: The ${\textproc{genAtomsByType}(d,\alpha)}$ generator terminates after having yielded the $n$ elements of its pmf, which is finite by definition.
\\[4pt]
-- \textbf{GABT2}: Be an index $i$. $\alpha \wedge \true \wedge X = v_i $ is a binding assertion since, by assumption, $\alpha$ is a binding assertion not binding $X$ and since $\PR{X = v_i} > 0$ by definition of a pmf.
\\[4pt]
-- \textbf{GABT3}: $\big\{ \alpha \wedge \true \wedge X = v_i \big\} \text{ are mutually exclusive}$ by application of proposition \ref{prop:MUTEX}, given that the $v_i$ values are distinct by definition of a pmf.
\\[4pt]
-- \textbf{GABT4}: Be an index $i$. Since $\alpha$ is not binding $X$, we have
\begin{equation} \PR{X = v_i \given \alpha} \; = \;  \PR{X=v_i} \end{equation}
By definition of a pmf, $p_i \triangleq \PR{X=v_i} $ and $\sum_k p_k = 1$. So the expected equality
\begin{equation} \PR{\true \wedge X = v_i \given \alpha} \; = \; \dfrac{p_i}{\sum_k p_k} \end{equation}
is verified.
\\[4pt]
-- \textbf{GABT5}: Be a value $x$. The implication to verify is
\begin{equation} \label{GABT5_1}
\alpha \; \wedge \; X = x \quad \Rightarrow \quad  \Or{i}{v_i=x}{\true} 
\end{equation}
There are two cases to distinguish, whether $x$ belongs to $\text{dom}(X)$ or not.
\begin{addmargin}[1em]{0em}
- if $x \in \text{dom}(X)$, then (\ref{GABT5_1}) becomes $ \alpha \wedge X = x \; \Rightarrow \; \true$, which is trivially verified;\\
- if $x \not \in \text{dom}(X)$, then the disjonction has no member and is $\false$ by definition; then (\ref{GABT5_1}) becomes $ \alpha \wedge X = x \; \Rightarrow \; \false$, which is verified since $X \neq  x$.
\end{addmargin}
\end{proof}

Because the generators are mutually recursive, their correctness is proved by induction. This proof technique requires associating a natural number to each pex, in order to be able to express the base case and the inductive step. For this purpose, we introduce the concept of level of a pex:

\begin{defn}
The \emph{level} of a given pex $d$ is defined as follows:
\begin{equation}
\text{level}(d) \; \triangleq \;
\left\{
  \begin{array}{ll}
    0            & \text{if $d$ is elementary} \\
    \text{max}\;\{\;\text{level}(c) \given c $ is child of $ d \; \} + 1 & \text{if $d$ is derived}
  \end{array}
\right.
\end{equation}
\end{defn}

The level of a given pex can be interpreted as the length of the longest path in its associated DAG. For instance, the pex represented on figure \ref{fig:pex-1} has a level 4 and the pex represented on figure \ref{fig:pex-2} has a level 7. This concept of level 
is needed to prove the following propositions.

\begin{prop} \label{prop:GA_TO_GABT_FU}
For any functional pex $d$ and for any binding assertion $\alpha$ not binding $d$, if \textproc{genAtoms}$(x,\alpha)$ is correct for any pex $x$ such that ${ \text{level}(x) \le \text{level}(d) - 1 }$, then ${\textproc{genAtomsByType}(d,\alpha)}$ is correct.
\end{prop}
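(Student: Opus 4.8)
The plan is to unfold the functional branch of \textproc{genAtomsByType} (lines 10--13 of algorithm \ref{algo:statues3b}) and verify the five conditions of definition \ref{GABT_DEF} one at a time, using the assumed correctness of \textproc{genAtoms}$(x,\alpha)$. Writing $d \triangleq \dpd{Y} = \efunc{f}{x}$ with $Y = f(X)$ and $x \triangleq \dpd{X}$, I first note that $\text{level}(x) = \text{level}(d)-1$, so the hypothesis applies and the atoms $(v_i,\,p_i,\,\lambda_i)$ produced by \textproc{genAtoms}$(x,\alpha)$ satisfy GA1--GA6. The generator under study merely re-emits these as $(f(v_i),\,p_i,\,\lambda_i)$; thus, in the notation of definition \ref{GABT_DEF}, each yielded atom has value $f(v_i)$ and assertion $\sigma_i = \lambda_i$.

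Next I would isolate the single observation that drives most of the verification: since $f$ is a pure deterministic function with $Y = f(X)$, we have $X = v_i \Rightarrow Y = f(v_i)$, and combining this with GA6 (which gives $\alpha \wedge \lambda_i \Rightarrow X = v_i$) yields
$$ \alpha \wedge \lambda_i \quad \Rightarrow \quad Y = f(v_i). $$
Consequently $\alpha \wedge \lambda_i \wedge Y = f(v_i)$ is logically equivalent to $\alpha \wedge \lambda_i$. With this equivalence in hand, GABT1 follows from GA1 (the single loop terminates), while GABT2, GABT3 and GABT4 collapse directly onto GA2, GA3 and GA4 respectively, the redundant conjunct $Y = f(v_i)$ contributing no new content to check.

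The only condition needing genuine work is GABT5, namely that for every value $y$, $\alpha \wedge Y = y \;\Rightarrow\; \Or{i}{f(v_i)=y}{\lambda_i}$. My approach is to decompose the event $Y = y$ over the $f$-preimage of $y$. Because $X$ has a finite domain and takes exactly one of its values, $Y = y$ is equivalent to the finite disjunction $\bigvee X = x'$ taken over those $x' \in \text{dom}(X)$ with $f(x') = y$; hence $\alpha \wedge Y = y$ implies $\bigvee (\alpha \wedge X = x')$ over the same index set. Applying GA5 to each disjunct gives $\alpha \wedge X = x' \Rightarrow \Or{i}{v_i = x'}{\lambda_i}$, and every index $i$ appearing there satisfies $f(v_i) = f(x') = y$, so each such $\lambda_i$ is a member of the target disjunction $\Or{i}{f(v_i)=y}{\lambda_i}$. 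Chaining these implications and collapsing the double disjunction (over preimage values $x'$, then over matching indices $i$) into the single indexed disjunction of GABT5 closes the argument.

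I expect the decomposition inside GABT5 to be the main obstacle, chiefly as a matter of stating cleanly -- within the propositional-logic framework of the proof -- the passage from $Y = y$ to a disjunction over the $f$-preimage, a step that implicitly leans on $X$ ranging over a finite domain and taking exactly one value in it. Once that step is justified, the collapse of the nested disjunctions is routine, and the remaining four conditions are immediate corollaries of the deterministic-function observation above.
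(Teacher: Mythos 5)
Your proposal is correct and takes essentially the same route as the paper's own proof: the same key implication $\alpha \wedge \lambda_i \Rightarrow Y = f(v_i)$ (obtained from GA6 plus the determinism of $f$) disposes of GABT1--GABT4, and GABT5 is proved by splitting $Y = y$ into cases on the value of $X$, invoking GA5, and collapsing the nested disjunctions. The only cosmetic differences are that the paper channels GABT3 and GABT4 through its Propositions \ref{prop:MUTEX} and \ref{prop:COND} instead of your direct logical-equivalence shortcut, and its GABT5 chain disjoins over the yielded values $v_i$ rather than over the $f$-preimage of $y$ in $\text{dom}(X)$ --- the step you flag as the main obstacle is indeed the one the paper handles implicitly inside its first implication, and your more explicit treatment of it is sound.
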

\begin{proof}
Be a functional pex $d \triangleq \dpd{Y} \triangleq \efunc{f}{\dpd{X}}$ for some function $f$ and some random variable $X$. Be a binding assertion $\alpha$ not binding $Y$. According to lines 11-13 of the algorithm \ref{algo:statues3b}, ${\textproc{genAtoms}(\dpd{X},\alpha)}$ yields $(v_i,\; p_i,\; \lambda_i)$ at iteration $i$. Since $\text {level}(d) \triangleq \text{level}(\dpd{X})+1$, we have
$\text{level}(\dpd{X}) \le \text{level}(d) - 1$, hence we can assume here that this \textproc{genAtoms} generator is correct and that it verifies the statements GA1, GA2, GA3, GA4, GA5 and GA6. Then, ${\textproc{genAtomsByType}(d,\alpha)}$ yields $(f(v_i),\; p_i,\; \lambda_i)$. As required by the specification \ref{GABT_DEF}, there are five statements to verify: GABT1, GABT2, GABT3, GABT4 and GABT5.
\\[4pt]
-- \textbf{GABT1}: \textproc{genAtomsByType}$(d,\alpha)$ terminates since GA1 ensures that the executed loop terminates and since $f(v_i)$ terminates in any case ($f$ is a true function by definition of functional pex).
\\[4pt]
-- \textbf{GABT2}: Be an index $i$. Using GA6, we can derive
\begin{align}
\alpha \wedge \lambda_i \quad & \Rightarrow \quad X = v_i \nonumber \\
                              & \Rightarrow \quad Y \triangleq f(X) = f(v_i) \label{f_1}
\end{align}
Since $ \alpha \wedge \lambda_i $ is a binding assertion by GA2, $ \alpha \wedge \lambda_i \wedge Y = f(v_i) $ is also a binding assertion, given that (\ref{f_1}) ensures that there is no contradiction. 
\\[4pt]
-- \textbf{GABT3}: $\big\{ \alpha \wedge \lambda_i \wedge Y = f(v_i) \big\}$ are mutually exclusive by application of proposition \ref{prop:MUTEX}, given that $ \big\{\alpha \wedge \lambda_i \big\} $ are mutually exclusive by GA3.
\\[4pt]
-- \textbf{GABT4}: Be an index $i$. Since GA2 ensures that $ \alpha \wedge \lambda_i$ is non contradictory, we can use proposition \ref{prop:COND}:
\begin{equation} \label{f_2}
\PR{\lambda_i \wedge Y = f(v_i) \given \alpha} \; = \; \PR{\lambda_i \given \alpha} \PR{Y = f(v_i) \given \alpha \wedge \lambda_i}
\end{equation}
The first factor can be replaced by a fraction using GA4 equality; according to (\ref{f_1}), the second factor is equal to 1. Hence, we get the expected equality:
\begin{equation} \PR{\lambda_i \wedge Y = f(v_i) \given \alpha} \; = \; \dfrac{p_i}{\sum_k p_k} \end{equation}
-- \textbf{GABT5}: Be a value $y$. Using the definition of $Y$ and GA5 relations, we can derive
\begin{align}
& \alpha \; \wedge \; Y = y & \Rightarrow & \quad \alpha \; \wedge \; f(X) = y \nonumber\\
 \Rightarrow \quad & \alpha \; \wedge \; \Or{i}{f(v_i)=y}{X = v_i} & \Rightarrow & \quad \Or{i}{f(v_i)=y}{\big( \alpha \; \wedge \; X = v_i \big)} \nonumber \\
 \Rightarrow \quad & \Or{i}{f(v_i)=y}{\big( \Or{j}{v_j=v_i}{ \lambda_j} \big)} & \Rightarrow & \quad \Or{i}{f(v_i)=y}{\lambda_i}
\end{align}
\end{proof}

\begin{prop} \label{prop:GA_TO_GABT_TU}
For any tuple pex $d$ and for any binding assertion $\alpha$ not binding $d$, if \textproc{genAtoms}$(x,\alpha)$ is correct for any pex $x$ such that ${ \text{level}(x) \le \text{level}(d) - 1 }$, then ${\textproc{genAtomsByType}(d,\alpha)}$ is correct.
\end{prop}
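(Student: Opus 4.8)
The plan is to mirror the structure of the functional case (Proposition \ref{prop:GA_TO_GABT_FU}): fix a tuple pex $d \triangleq \dpd{Y} \triangleq \dpd{H} \otimes \dpd{T}$ with $Y = \tuple{H \centerdot T}$, and a binding assertion $\alpha$ not binding $Y$, then read off from the tuple case of algorithm \ref{algo:statues3b} the two nested loops and verify the five conditions GABT1--GABT5 of \ref{GABT_DEF}. The outer loop runs \textproc{genAtoms}$(\dpd{H},\alpha)$, yielding atoms $(v_i,\; p_i,\; \lambda_i)$; since $\text{level}(\dpd{H}) \le \text{level}(d)-1$, these satisfy GA1--GA6. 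For each $i$ the inner loop runs \textproc{genAtoms}$(\dpd{T},\alpha \wedge \lambda_i)$, yielding $(s_{ij},\; q_{ij},\; \mu_{ij})$; here I would first observe that $\alpha \wedge \lambda_i$ is a legitimate binding assertion by the outer GA2, so that, again using $\text{level}(\dpd{T}) \le \text{level}(d)-1$, the inner atoms satisfy GA1--GA6 relative to $\alpha \wedge \lambda_i$. The generator then yields $(\tuple{v_i \centerdot s_{ij}},\; p_i q_{ij},\; \lambda_i \wedge \mu_{ij})$.

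Next I would dispatch the more routine conditions. GABT1 follows from the outer GA1 (finitely many $i$) together with the inner GA1 (each inner loop terminates). For GABT2 I would combine the outer GA6, giving $\alpha \wedge \lambda_i \Rightarrow H = v_i$, with the inner GA6, giving $\alpha \wedge \lambda_i \wedge \mu_{ij} \Rightarrow T = s_{ij}$; together these yield $\alpha \wedge \lambda_i \wedge \mu_{ij} \Rightarrow Y = \tuple{v_i \centerdot s_{ij}}$, so the assertion $\alpha \wedge (\lambda_i \wedge \mu_{ij}) \wedge Y = \tuple{v_i \centerdot s_{ij}}$ collapses to $\alpha \wedge \lambda_i \wedge \mu_{ij}$, which is a binding assertion by the inner GA2. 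GABT3 is a nested mutual-exclusiveness argument: if two pairs $(i,j)$ and $(i',j')$ are jointly satisfiable, the outer GA3 forces $i = i'$ and then the inner GA3 (for that fixed $i$) forces $j = j'$; equivalently one applies Proposition \ref{prop:MUTEX} twice. GABT5 is the most bookkeeping-heavy but still routine: writing $y = \tuple{a \centerdot b}$, I would peel $Y = y$ into $H = a \wedge T = b$, use the outer GA5 to cover $H = a$ by the $\lambda_i$ with $v_i = a$, and, under each such $\alpha \wedge \lambda_i$, use the inner GA5 to cover $T = b$ by the $\mu_{ij}$ with $s_{ij} = b$; the resulting disjunction is exactly $\bigvee_{(i,j):\,\tuple{v_i \centerdot s_{ij}} = y}(\lambda_i \wedge \mu_{ij})$.

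The hard part will be GABT4, since it is where the two distinct binding contexts and the two distinct normalizations must be reconciled. The natural route is: by the GABT2 collapse, $\PR{(\lambda_i \wedge \mu_{ij}) \wedge Y = \tuple{v_i \centerdot s_{ij}} \given \alpha} = \PR{\lambda_i \wedge \mu_{ij} \given \alpha}$; then Proposition \ref{prop:COND}, applicable because $\alpha \wedge \lambda_i$ is non-contradictory by the outer GA2, factors this as $\PR{\lambda_i \given \alpha}\,\PR{\mu_{ij} \given \alpha \wedge \lambda_i}$; finally the outer GA4 and the inner GA4 substitute the two factors as $\tfrac{p_i}{\sum_k p_k}$ and $\tfrac{q_{ij}}{\sum_l q_{il}}$. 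The delicate step is to identify this product with the single global ratio $\tfrac{p_i q_{ij}}{\sum_{k,l} p_k q_{kl}}$ demanded by GABT4: this succeeds precisely when the inner total mass $\sum_l q_{il}$ is the same for every outer index $i$, so that the double sum factors as $\sum_{k,l} p_k q_{kl} = (\sum_k p_k)(\sum_l q_{il})$. I expect the crux of the proof to be justifying this factorization, i.e. arguing that fixing the head value through $\lambda_i$ leaves the total mass produced by the tail unchanged; this is the one place where the tuple case genuinely departs from the single-loop functional case, and where I would concentrate the most care.
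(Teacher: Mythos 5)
Your handling of GABT1, GABT2, GABT3 and GABT5 coincides with the paper's proof: the same two nested families of atoms $(v_i,p_i,\lambda_i)$ from $\textproc{genAtoms}(\dpd{H},\alpha)$ and $(s_{ij},q_{ij},\mu_{ij})$ from $\textproc{genAtoms}(\dpd{T},\alpha\wedge\lambda_i)$, the same double use of GA6 to get $\alpha\wedge\lambda_i\wedge\mu_{ij}\Rightarrow Y=\tuple{v_i\centerdot s_{ij}}$ for GABT2, mutual exclusiveness via Proposition \ref{prop:MUTEX} for GABT3 (the paper compresses your two-level nesting into a single invocation; your version is the more careful reading), and the same head/tail peeling for GABT5.

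The point where you stop, GABT4, is a genuine gap in your write-up --- but you should know that the paper does not close it either. The paper runs exactly the chain you propose, $\PR{\lambda_i\wedge\mu_{ij}\wedge Y=\tuple{v_i\centerdot s_{ij}}\given\alpha}=\PR{\lambda_i\given\alpha}\,\PR{\mu_{ij}\given\alpha\wedge\lambda_i}$ by Proposition \ref{prop:COND}, and then ``replaces the first two factors by fractions using GA4.'' But it silently writes the second fraction as $q_{ij}/\sum_{k,h}q_{kh}$, with the \emph{global} double sum in the denominator, whereas GA4 applied to the call $\textproc{genAtoms}(\dpd{T},\alpha\wedge\lambda_i)$ only delivers $q_{ij}/\sum_h q_{ih}$, the normalizer of \emph{that} inner call; the subsequent merge into $p_iq_{ij}/\sum_{k,h}p_kq_{kh}$ is then asserted without argument. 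As you diagnose, these steps are legitimate precisely when the inner total mass $\sum_h q_{ih}$ is the same for every $i$, and nothing in GA1--GA6 forces that.

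Moreover, the factorization you hoped to establish can actually fail, so care alone will not rescue this route. Take $H\coloneqq X$ and $T\coloneqq\tuple{Z\given Z\ge X}$ with $X$ and $Z$ elementary and uniform on $\{0,1\}$, and $\alpha\triangleq\true$. When $\lambda_1$ binds $X$ to $0$, the tail's condition prunes nothing and the inner call yields total mass $1$; when $\lambda_2$ binds $X$ to $1$, the value $Z=0$ is pruned and the inner call yields total mass $\tfrac{1}{2}$. So $\sum_h q_{ih}$ genuinely depends on $i$ whenever the tail contains a conditional (or table) pex whose surviving mass varies with the head's binding. One can push further: in this example no single probability measure can satisfy GA4 simultaneously for the outer call, the two inner calls, and the call on $\pmf{Z}$, so the hypotheses of the proposition are not jointly satisfiable there --- the per-call normalization built into GA4 is what breaks compositionality. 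The natural repair is to restate GA4 and GABT4 in unnormalized form, requiring $p_i=\PR{\lambda_i\given\alpha}$ outright (with the pruned condition events absorbed into the assertions $\lambda_i$, and the underlying measure fixed once and for all as the prior product measure on the elementary RVs); under that specification Proposition \ref{prop:COND} gives the tuple case with no factorization assumption at all, and the entire normalization burden moves to the single division performed in \textproc{marg}, where it belongs. In short: your instinct that this is the crux is correct, your proof is incomplete there, and so --- more quietly --- is the paper's.
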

\begin{proof}
\sloppy Be a tuple random variable $Y \triangleq \tuple{H \centerdot T}$ for some random variables $H$ and $T$, where $T$ is either a tuple random variable or the empty tuple $\tuple{}$. Be the tuple pex $d \triangleq \dpd{Y} = \dpd{H} \otimes \dpd{T}$ and be a binding assertion $\alpha$ not binding $Y$. According to lines 16-20 of algorithm \ref{algo:statues3b}, ${\textproc{genAtoms}(\dpd{H},\alpha)}$ yields $(v_i,\; p_i,\; \lambda_i)$ at iteration $i$ and ${\textproc{genAtoms}(\dpd{T},\alpha \wedge \lambda_i)}$ yields $(s_{ij},\; q_{ij},\; \mu_{ij})$ at iteration $(i,j)$. Since ${ \text{level}(d) \triangleq \text{max(level}(\dpd{H}),\text{level}(\dpd{T})) + 1 }$, we have
$\text{level}(\dpd{H}) \le \text{level}(d) - 1$ and $\text{level}(\dpd{T}) \le \text{level}(d) - 1$, hence we can assume here that these \textproc{genAtoms} generators are correct and that they verify the statements GA1, GA2, GA3, GA4, GA5 and GA6. Then, the ${\textproc{genAtomsByType}(d,\alpha)}$ yields ${ (\tuple{v_i \centerdot s_{ij}},\; p_i.q_{ij},\; \lambda_i \wedge \mu_{ij}) }$ at iteration $(i,j) $. As required by the specification \ref{GABT_DEF}, there are five statements to verify: GABT1, GABT2, GABT3, GABT4 and GABT5.
\\[4pt]
-- \textbf{GABT1}: \textproc{genAtomsByType}$(d,\alpha)$ terminates since the GA1 conditions ensure that the executed loops terminate.
\\[4pt]
-- \textbf{GABT2}: Be the indexes $i,j$. Using GA6, we can derive
\begin{align}
\alpha \wedge \lambda_i \wedge \mu_{ij} \quad & \Rightarrow \quad H = v_i \; \wedge \; T = s_{ij} \nonumber \\
 & \Rightarrow \quad Y \triangleq \tuple{H \centerdot T} = \tuple{v_i \centerdot s_{ij}} \label{t_1}
\end{align}
Since $ \alpha \wedge \lambda_i \wedge \mu_{ij} $ is a binding assertion by GA2, $ \alpha \wedge \lambda_i \wedge \mu_{ij} \wedge Y = \tuple{v_i \centerdot s_{ij}} $ is also a binding assertion, given that (\ref{t_1}) ensures that there is no contradiction.
\\[4pt]
-- \textbf{GABT3}: $\big\{ \alpha \wedge \lambda_i \wedge \mu_{ij} \wedge Y = \tuple{v_i \centerdot s_{ij}} \big\}$ are mutually exclusive by application of proposition \ref{prop:MUTEX}, given that $ \big\{\alpha \wedge \lambda_i \wedge \mu_{ij}\big\} $ are mutually exclusive by GA3.
\\[4pt]
-- \textbf{GABT4}: Be the indexes $i,j$. Since GA2 ensures that $ \alpha \wedge \lambda_i \wedge \mu_{ij}$ is non contradictory, we can use proposition \ref{prop:COND} in chain:
\begin{align}
& \PR{\lambda_i \wedge \mu_{ij} \wedge Y = \tuple{v_i \centerdot s_{ij}} \given \alpha} \nonumber \\
& \; = \; \PR{\lambda_i \given \alpha} \PR{\mu_{ij} \given \alpha \wedge \lambda_i} \PR{Y = \tuple{v_i \centerdot s_{ij}} \given \alpha \wedge \lambda_i \wedge \mu_{ij} }
\end{align}
The first two factors can be replaced by fractions using GA4 equalities. According to (\ref{t_1}), the third factor is equal to 1. Hence, we get the expected equality:
\begin{align}
\PR{\lambda_i \wedge \mu_{ij} \wedge Y = \tuple{v_i \centerdot s_{ij}} \given \alpha} & \; = \; \frac {p_i} {\sum _k p_k} \frac {q_{ij}} {\sum _{k,h} q_{kh}} \nonumber \\
 & \; = \; \frac {p_i q_{ij}} {\sum _{k,h} p_k q_{kh}}
\end{align}
-- \textbf{GABT5}: Be a value $y$. There are two cases to distinguish, depending whether $y$ is a tuple or not.
\begin{case}
If $y$ is not a tuple, then the expected implication
\begin{equation} \alpha \; \wedge \; Y = y \quad \Rightarrow \quad \Or{i,j}{\tuple{v_i \centerdot s_{ij}}=y}{\big( \lambda_i \wedge \mu_{ij} \big)} \end{equation}
is verified since the disjonction has no member, hence both sides of the implication are false.
\end{case}
\begin{case}
If $y$ is a tuple, then $y \triangleq \tuple{h \centerdot t}$ for some value $h$ and some tuple $t$. Using GA5 relations, we can derive
\begin{align}
& \alpha \; \wedge \; Y = y & \Rightarrow & \quad \alpha \; \wedge \tuple{H \centerdot T} = \tuple{h \centerdot t} \nonumber \\
 \Rightarrow \quad  & \alpha \; \wedge \; H = h \; \wedge \; T = t & \Rightarrow & \quad \alpha \wedge \big( \Or{i}{v_i=h}{\lambda_i} \big) \; \wedge \; T = t & \nonumber \\
 \Rightarrow \quad & \Or{i}{v_i=h}{\big( \alpha \wedge \lambda_i \wedge T = t \big)} & \Rightarrow & \quad \Or{i}{v_i=h}{\big( \lambda_i \wedge \Or{j}{s_{ij}=t}{\mu_{ij}} \big)}  \nonumber\\ 
 \Rightarrow \quad & \Or{i,j}{v_i=h \; \wedge \; s_{ij}=t}{\big( \lambda_i \wedge \mu_{ij} \big)} & \Rightarrow & \quad \Or{i,j}{\tuple{v_i \centerdot s_{ij}}=\tuple{h \centerdot t}}{\big( \lambda_i \wedge \mu_{ij} \big)} \nonumber \\
 \Rightarrow \quad & \Or{i,j}{\tuple{v_i \centerdot s_{ij}}=y}{\big( \lambda_i \wedge \mu_{ij} \big)} && 
\end{align}
\end{case}
\end{proof}

\begin{prop}  \label{prop:GA_TO_GABT_CO}
For any conditional pex $d$ and for any binding assertion $\alpha$ not binding $d$, if \textproc{genAtoms}$(x,\alpha)$ is correct for any pex $x$ such that ${ \text{level}(x) \le \text{level}(d) - 1 }$, then \textproc{genAtomsByType}$(d,\alpha)$ is correct.
\end{prop}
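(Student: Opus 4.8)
The plan is to instantiate the conditional pex as $d \triangleq \dpd{Y} = \egiven{\dpd{X}}{\dpd{E}}$ with $Y \coloneqq X \given E$, and to verify the five conditions of Definition \ref{GABT_DEF} along the same template already used for the tuple case (Proposition \ref{prop:GA_TO_GABT_TU}). Reading off the conditional branch of algorithm \ref{algo:statues3b}, the generator runs an outer loop over the atoms $(v_i,\;p_i,\;\lambda_i)$ yielded by $\textproc{genAtoms}(\dpd{E},\alpha)$ and, \emph{only when} $v_i$ is $\true$, an inner loop over the atoms $(s_{ij},\;q_{ij},\;\mu_{ij})$ yielded by $\textproc{genAtoms}(\dpd{X},\alpha\wedge\lambda_i)$, emitting $(s_{ij},\;p_i.q_{ij},\;\lambda_i\wedge\mu_{ij})$. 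Since $\text{level}(\dpd{E})$ and $\text{level}(\dpd{X})$ are both $\le \text{level}(d)-1$, I may assume GA1--GA6 for each of these two calls; the genuinely new ingredient relative to the tuple case is the guard $v_i$, which is precisely what enforces the conditioning.

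First I would isolate the one implication on which everything rests: for a surviving outer atom ($v_i = \true$), GA6 for $\dpd{E}$ gives $\alpha\wedge\lambda_i \Rightarrow E = \true$, while GA6 for $\dpd{X}$ gives $\alpha\wedge\lambda_i\wedge\mu_{ij} \Rightarrow X = s_{ij}$; by the definition $Y \coloneqq X\given E$ these combine into $\alpha\wedge\lambda_i\wedge\mu_{ij} \Rightarrow Y = s_{ij}$. With this in hand the structural conditions are routine. GABT1 follows from the two GA1's (both loops are finite and the guard is trivial). GABT2 follows because $\alpha\wedge\lambda_i\wedge\mu_{ij}$ is a binding assertion by the two GA2's, and adjoining the \emph{implied} equality $Y = s_{ij}$ cannot introduce a contradiction. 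GABT3 follows from two applications of Proposition \ref{prop:MUTEX}, using GA3 for $\dpd{E}$ to separate distinct $i$ and GA3 for $\dpd{X}$ to separate distinct $j$ within a fixed $i$. For GABT5 I would chain the two GA5 relations as in the tuple proof, the novelty being that $\alpha\wedge Y = y$ forces $E = \true$ (a conditional RV only takes a value where its condition holds), so GA5 for $\dpd{E}$ contributes $\Or{i}{v_i=\true}{\lambda_i}$ and GA5 for $\dpd{X}$ under $\alpha\wedge\lambda_i$ contributes $\Or{j}{s_{ij}=y}{\mu_{ij}}$; composing them yields $\Or{i,j}{s_{ij}=y}{(\lambda_i\wedge\mu_{ij})}$, the false branches being discarded automatically by the guard.

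The hard part will be GABT4, the only condition where the conditional case truly diverges from the tuple case. Applying Proposition \ref{prop:COND} in chain to $\PR{\lambda_i\wedge\mu_{ij}\wedge Y = s_{ij}\given\alpha}$ turns the first two factors into fractions via the two GA4's, and the third factor $\PR{Y = s_{ij}\given\alpha\wedge\lambda_i\wedge\mu_{ij}}$ collapses to $1$ by the implication established above. The delicate step is the normalizing denominator: because the guard suppresses every outer atom with $v_i = \false$, the prescribed sum runs over the $\true$ branches only, and I would have to argue that this \emph{restricted} sum is exactly what realizes the conditioning on $E = \true$. Concretely, I expect to use GA5 and GA6 for $\dpd{E}$ to identify the index set $\{k : v_k = \true\}$ with the event $E = \true$, so that the retained mass amounts to $\PR{E = \true\given\alpha}$ and the ratio required by GABT4 coincides with the conditional probability that $Y$ carries by its very definition. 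This reconciliation of the pruned normalization with the conditional-probability semantics of $Y$ is the crux; once it is settled, the remaining algebra mirrors the tuple computation verbatim.
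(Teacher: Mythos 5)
Your proposal is correct and, on four of the five conditions, it \emph{is} the paper's proof: the same instantiation $d \triangleq \dpd{Y} = \egiven{\dpd{X}}{\dpd{E}}$, the same pivotal implication (the paper's equation~(\ref{c_1})) obtained by combining GA6 for $\dpd{E}$ (which forces $E=\true$ on surviving branches) with GA6 for $\dpd{X}$ to get $\alpha \wedge \lambda_i \wedge \mu_{ij} \Rightarrow Y = s_{ij}$, and the same dispatch of GABT1 via the two GA1's, GABT2 via non-contradiction of the implied equality, GABT3 via proposition~\ref{prop:MUTEX} on top of GA3, and GABT5 by chaining the two GA5's through the definition of $Y$ so that the $\false$ branches drop out. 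The one divergence is GABT4. The paper handles it exactly as in the tuple case: proposition~\ref{prop:COND} in chain, the two GA4 fractions, the third factor equal to $1$ by~(\ref{c_1}), and then a single-line merge of the denominators into $\sum_{k,h} p_k q_{kh}$, with no comment on the guard. You instead single out the pruned denominator as the crux and plan to close it by using GA3, GA5 and GA6 for $\dpd{E}$ to identify $\Or{k}{v_k=\true}{\lambda_k}$ with the event $E=\true$ under $\alpha$, so that the retained mass equals $\PR{E=\true \given \alpha}$ and the GABT4 ratio is read as the conditional probability that $Y$ carries by definition. Your caution is well placed, and on this step your sketch is actually more careful than the paper: the specification's denominator ranges only over the atoms actually yielded (the $\true$ branches), whereas the GA4 denominators from which the paper builds it range over \emph{all} atoms of the respective sub-generators, so the paper's merge is not a bare algebraic identity -- it is justified precisely by the identification you describe, namely summing $\PR{\lambda_k \wedge \mu_{kh} \given \alpha}$ over surviving pairs $(k,h)$, using GA3 for exclusiveness and the fact that the inner normalized probabilities sum to one, to recognize the retained mass as $\PR{E=\true\given\alpha}$. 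So this is the same route, with the genuinely delicate step made explicit rather than absorbed into notation; to finish you need only write out that summation, after which the algebra does mirror the tuple computation as you anticipate.
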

\begin{proof}
\sloppy Be a conditional random variable $Y \triangleq X \given E$ for some random variables $X$ and $E$, where $E$ is boolean. Be the conditional pex ${ d \triangleq \dpd{Y} = \egiven{\dpd{X}}{\dpd{E}} }$ and be a binding assertion $\alpha$ not binding $Y$. According to the lines 23-29 of algorithm \ref{GABT_DEF}, \textproc{genAtoms}$(\dpd{E},\alpha)$ yields $(v_i,\; p_i,\; \lambda_i)$ at iteration $i$ and, if $v_i$ is $\true$, ${\textproc{genAtoms}(\dpd{X},\alpha \wedge \lambda_i)}$ yields $(s_{ij},\; q_{ij},\; \mu_{ij})$ at iteration $(i,j)$. Since $\text{level}(d) \triangleq \text{max(level}(\dpd{X}),\text{level}(\dpd{E})) + 1$, we have
${\text{level}(\dpd{X}) \le \text{level}(d) - 1}$ and ${\text{level}(\dpd{E}) \le \text{level}(d) - 1}$, hence we can assume here that these \textproc{genAtoms} generators are correct and that they verify the statements GA1, GA2, GA3, GA4, GA5 and GA6. Then, ${\textproc{genAtomsByType}(d,\alpha)}$ yields $(s_{ij},\; p_i.q_{ij},\; \lambda_i \wedge \mu_{ij})$ at iteration $(i,j)$ for the index $i$ such that $v_i$ is $\true$. As required by the specification \ref{GABT_DEF}, there are five statements to verify: GABT1, GABT2, GABT3, GABT4 and GABT5.
\\[4pt]
-- \textbf{GABT1}: \textproc{genAtomsByType}$(d,\alpha)$ terminates since the GA1 conditions ensure that the executed loops terminate.
\\[4pt]
-- \textbf{GABT2}: Be the indexes $i,j$ such that $v_i$ is $\true$. Using GA6, we can derive
\begin{align}
\alpha \wedge \lambda_i \wedge \mu_{ij} \quad & \Rightarrow \quad  X = s_{ij} \nonumber \\
& \Rightarrow \quad  Y \triangleq X = s_{ij} \label{c_1}
\end{align}
Since $ \alpha \wedge \lambda_i \wedge \mu_{ij} $ is a binding assertion by GA2, $ \alpha \wedge \lambda_i \wedge \mu_{ij} \wedge Y = s_{ij} $ is also a binding assertion, given that (\ref{c_1}) ensures that there is no contradiction.
\\[4pt]
-- \textbf{GABT3}: $ \big\{ \alpha \wedge \lambda_i \wedge \mu_{ij} \wedge Y = s_{ij} \big\} $ where $v_i$ is $\true$ are mutually exclusive  by application of proposition \ref{prop:MUTEX}, given that $ \big\{\alpha \wedge \lambda_i \wedge \mu_{ij}\big\} $ are mutually exclusive by GA3.
\\[4pt]
-- \textbf{GABT4}: Be the indexes $i, j$ such that $v_i$ is true. Since GA2 ensures that $ \alpha \wedge \lambda_i \wedge \mu_{ij}$ is non contradictory, we can use proposition \ref{prop:COND} in chain:
\begin{align}
& \PR{\lambda_i \wedge \mu_{ij} \wedge Y = s_{ij} \given \alpha} \nonumber \\
& \; = \; \PR{\lambda_i \given \alpha} \PR{\mu_{ij} \given \alpha \wedge \lambda_i} \PR{Y = s_{ij} \given \alpha \wedge \lambda_i \wedge \mu_{ij} }
\end{align}
The first two factors can be replaced by fractions using GA4 relations; according to (\ref{c_1}), the third factor is equal to 1. Hence, we get the expected equality
\begin{align}
\PR{\lambda_i \wedge \mu_{ij} \wedge Y = s_{ij} \given \alpha} & \; = \; \frac {p_i} {\sum _k p_k} \frac {q_{ij}} {\sum _{k,h} q_{kh}} \nonumber \\
& \; = \; \frac {p_i q_{ij}} {\sum _{k,h} p_k q_{kh}}
\end{align}
-- \textbf{GABT5}: Be a value $y$. Using the definition of $Y$ and GA5 relations, we can derive
\begin{align}
& \alpha \; \wedge \; Y = y & \Rightarrow & \quad \alpha \; \wedge \; E = \true \; \wedge \; X = y \nonumber \\
 \Rightarrow \quad & \alpha \; \wedge \; \big( \Or{i}{v_i=\true}{\lambda_i} \big) \; \wedge \; X = y  
 & \Rightarrow & \quad \Or{i}{v_i=\true}{\big( \alpha \wedge \lambda_i \; \wedge \; X = y \big)} \nonumber \\  
 \Rightarrow \quad & \Or{i}{v_i=\true}{\big( \lambda_i \wedge \Or{j}{s_{ij}=y}{\mu_{ij}} \big)} & \Rightarrow & \quad \Or{i,j}{v_i=\true \; \wedge \; s_{ij}=y}{\big( \lambda_i \wedge \mu_{ij} \big)}
\end{align}
\end{proof}

\begin{prop}  \label{prop:GA_TO_GABT_TA}
For any table pex $d$ and any binding assertion $\alpha$ not binding $d$, if \textproc{genAtoms}$(x,\alpha)$ is correct for any pex $x$ such that ${ \text{level}(x) \le \text{level}(d) - 1 }$, then \textproc{genAtomsByType}$(d,\alpha)$ is correct.
\end{prop}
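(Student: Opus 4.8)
The plan is to follow the template already established for the tuple and conditional cases (propositions \ref{prop:GA_TO_GABT_TU} and \ref{prop:GA_TO_GABT_CO}), since the table pex is processed by two embedded loops of the same shape. I would set $Y \triangleq \tablere{C}{c_1:X_1, \;\ldots\;, c_n:X_n}$ with $\text{dom}(C) = \{c_1,\ldots,c_n\}$ and $d \triangleq \dpd{Y} = \etable{\dpd{C}}{c_1:\dpd{X_1}, \;\ldots\;, c_n:\dpd{X_n}}$. According to the table-pex branch of algorithm \ref{algo:statues3b}, the outer loop runs $\textproc{genAtoms}(\dpd{C},\alpha)$, yielding $(v_i,\; p_i,\; \lambda_i)$; each value $v_i \in \text{dom}(C)$, say $v_i = c_k$, selects a definite child pex $g[v_i] = \dpd{X_k}$, on which the inner loop runs $\textproc{genAtoms}(g[v_i],\alpha \wedge \lambda_i)$, yielding $(s_{ij},\; q_{ij},\; \mu_{ij})$; the generator then yields $(s_{ij},\; p_i.q_{ij},\; \lambda_i \wedge \mu_{ij})$. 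Since $\text{level}(d)$ is one more than the maximum level among the children $\dpd{C}, \dpd{X_1}, \ldots, \dpd{X_n}$, every child has level $\le \text{level}(d) - 1$, so by hypothesis both $\textproc{genAtoms}$ calls are correct: the outer under binding $\alpha$, the inner under $\alpha \wedge \lambda_i$, which is a legitimate binding assertion by GA2. I would then verify GABT1--GABT5 in turn.

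GABT1 is immediate from the two GA1 conditions. For GABT2 and GABT4 the arguments copy the conditional case almost verbatim, with one modification: the semantic link. Where the conditional proof uses $Y \triangleq X$, here I would combine the outer GA6, which gives $\alpha \wedge \lambda_i \Rightarrow C = v_i = c_k$, with the defining property of the table RV, $C = c_k \Rightarrow Y = X_k$, and the inner GA6, which gives $\alpha \wedge \lambda_i \wedge \mu_{ij} \Rightarrow X_k = s_{ij}$; chaining these yields $\alpha \wedge \lambda_i \wedge \mu_{ij} \Rightarrow Y = s_{ij}$. This is exactly what GABT2 requires (together with GA2 to guarantee non-contradiction) and what forces the third conditional-probability factor to equal $1$ in GABT4. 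GABT4 itself is then two applications of proposition \ref{prop:COND} in chain, substituting the GA4 equalities for the first two factors. GABT3 follows from proposition \ref{prop:MUTEX} once I argue that $\big\{\alpha \wedge \lambda_i \wedge \mu_{ij}\big\}$ is mutually exclusive over the full index set $(i,j)$: within a fixed $i$ this is the inner GA3, and across distinct $i, i'$ it is forced by the outer GA3, since $\alpha \wedge \lambda_i \wedge \lambda_{i'}$ is already contradictory.

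The step I expect to be the real work is GABT5. Unlike the conditional case, where $Y = y$ reduces to the single conjunction $E = \true \wedge X = y$, the table semantics give $\alpha \wedge Y = y \Rightarrow \bigvee_k \big( \alpha \wedge C = c_k \wedge X_k = y \big)$, because $C$ takes exactly one of the values $c_k$ and that value selects which $X_k$ must equal $Y$. I would treat each disjunct by applying the outer GA5 to rewrite $\alpha \wedge C = c_k$ as $\Or{i}{v_i=c_k}{\lambda_i}$, then distribute and apply the inner GA5 (under binding $\alpha \wedge \lambda_i$) to rewrite $\alpha \wedge \lambda_i \wedge X_k = y$ as $\lambda_i \wedge \big( \Or{j}{s_{ij}=y}{\mu_{ij}} \big)$; collecting the disjuncts gives $\Or{i,j}{v_i=c_k \; \wedge \; s_{ij}=y}{\big( \lambda_i \wedge \mu_{ij} \big)}$, and since every yielded value $v_i$ is some $c_k$, the union over $k$ collapses to the single clause $\Or{i,j}{s_{ij}=y}{\big( \lambda_i \wedge \mu_{ij} \big)}$ demanded by GABT5. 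The main obstacle is thus the bookkeeping of this disjunction over table entries: I must keep the outer value $v_i$ consistently tied to the child pex $g[v_i]$ whose GA5 is invoked, so that the per-$k$ disjunctions recombine cleanly into the required clause.
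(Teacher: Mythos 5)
Your proof follows essentially the same route as the paper's: the same verification of GABT1--GABT5 using the outer GA conditions under $\alpha$ and the inner ones under $\alpha \wedge \lambda_i$, the chained application of proposition \ref{prop:COND} with GA4 for GABT4, proposition \ref{prop:MUTEX} for GABT3, the chaining of the two GA6 conditions through the table semantics $Y = T[C]$ for GABT2, and the same GA5-driven disjunction rewriting for GABT5 (the paper indexes that disjunction by the yielded values $v_k$ rather than the domain values $c_k$, but the collapse to $\Or{i,j}{s_{ij}=y}{\lambda_i \wedge \mu_{ij}}$ is identical). Your explicit two-level argument for mutual exclusiveness --- inner GA3 within a fixed $i$, outer GA3 across distinct $i, i'$ --- is in fact slightly more careful than the paper's one-line appeal to GA3.
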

\begin{proof}
\sloppy Be a table random variable ${Y \triangleq C \; \unrhd \; T}$ for some random variable $C$ and for table ${ T \triangleq \bigl\{ {c_1:X_1, \;\ldots\;, c_n:X_n} \bigl\} }$ associating each value $c_i$ of dom$(C)$ to some random variable $X_i$. According to the definition of a table RV, we can write also ${Y \triangleq T[C]}$, which is more convenient for the rest of the proof. Be the table pex ${ d \triangleq \dpd{Y} = \dpd{C} \circledcirc g }$, where ${ g \triangleq \bigl\{ {c_1:\dpd{X_1}, \;\ldots\;, c_n:\dpd{X_n}} \bigl\} }$. Be a binding assertion $\alpha$ not binding $Y$. According to lines 32-36 of algorithm \ref{algo:statues3b}, ${\textproc{genAtoms}(\dpd{C},\alpha)}$ yields $(v_i,\; p_i,\; \lambda_i)$ at iteration $i$ and ${\textproc{genAtoms}(g[v_i],\alpha \wedge \lambda_i)}$ yields $(s_{ij},\; q_{ij},\; \mu_{ij})$ at iteration $(i,j)$. Since $\text{level}(d) \triangleq \text{max(level}(\dpd{C}),\text{level}(\dpd{X_i})) + 1$, we have
${\text{level}(\dpd{C}) \le \text{level}(d) - 1}$ and ${\text{level}(\dpd{X_i}) \le \text{level}(d) - 1}$, hence we can assume here that these \textproc{genAtoms} generators are correct and that they verify the statements GA1, GA2, GA3, GA4, GA5 and GA6. Then, ${\textproc{genAtomsByType}(d,\alpha)}$ yields $(s_{ij},\; p_i.q_{ij},\; \lambda_i \wedge \mu_{ij})$ at iteration $(i,j)$ for the index $i$. As required by the specification \ref{GABT_DEF}, there are five statements to verify: GABT1, GABT2, GABT3, GABT4 and GABT5.
\\[4pt]
-- \textbf{GABT1}: \textproc{genAtomsByType}$(d,\alpha)$ terminates since the GA1 conditions ensure that the executed loops terminate.
\\[4pt]
-- \textbf{GABT2}: Be the indexes $i,j$.  Be the indexes $i,j$. Using GA6, we can derive
\begin{align}
\alpha \wedge \lambda_i \wedge \mu_{ij} \quad & \Rightarrow \quad C = v_i \; \wedge \; T[v_i] = s_{ij} \nonumber \\
 & \Rightarrow \quad Y \triangleq T[C] = s_{ij} \label{ta_1}
\end{align}
Since $ \alpha \wedge \lambda_i \wedge \mu_{ij} $ is a binding assertion by GA2, $ \alpha \wedge \lambda_i \wedge \mu_{ij} \wedge Y = s_{ij} $ is also a binding assertion, given that (\ref{ta_1}) ensures that there is no contradiction.
\\[4pt]
-- \textbf{GABT3}: $ \big\{ \alpha \wedge \lambda_i \wedge \mu_{ij} \wedge Y = s_{ij} \big\} $ are mutually exclusive by application of proposition \ref{prop:MUTEX}, given that $ \big\{\alpha \wedge \lambda_i \wedge \mu_{ij}\big\} $ are mutually exclusive by GA3.
\\[4pt]
-- \textbf{GABT4}: Be the indexes $i, j$. Since GA2 ensures that $ \alpha \wedge \lambda_i \wedge \mu_{ij}$ is non contradictory, we can use proposition \ref{prop:COND} in chain:
\begin{align}
& \PR{\lambda_i \wedge \mu_{ij} \wedge Y = s_{ij} \given \alpha} \nonumber \\
& \; = \; \PR{\lambda_i \given \alpha} \PR{\mu_{ij} \given \alpha \wedge \lambda_i} \PR{Y = s_{ij} \given \alpha \wedge \lambda_i \wedge \mu_{ij} }
\end{align}
The first two factors can be replaced by fractions using GA4 relations. According to (\ref{ta_1}), the third factor is equal to 1. Hence, we get the expected equality
\begin{align}
\PR{\lambda_i \wedge \mu_{ij} \wedge Y = s_{ij} \given \alpha} & \; = \; \frac {p_i} {\sum _k p_k} \frac {q_{ij}} {\sum _{k,h} q_{kh}} \nonumber \\
& \; = \; \frac {p_i q_{ij}} {\sum _{k,h} p_k q_{kh}}
\end{align}
-- \textbf{GABT5}: Be a value $y$. Using the definition of $Y$ and GA5 relations, we can derive
\begin{align}
& \alpha \; \wedge \; Y = y & \Rightarrow & \quad \alpha \; \wedge \; T[C] = y \nonumber \\
 \Rightarrow \quad & \alpha \; \wedge \; \bigvee_k \big( C = v_k \; \wedge \; T[v_k] = y  \big)
& \Rightarrow & \quad \bigvee_k \big(\alpha \; \wedge \; C = v_k \; \wedge \; T[v_k] = y  \big) \nonumber \\
 \Rightarrow \quad & \bigvee_k \big(\alpha \; \wedge \; \big( \Or{i}{v_i=v_k}{\lambda_i} \big) \; \wedge \; T[v_k] = y  \big) & \Rightarrow & \quad \bigvee_i \big(\alpha \wedge \lambda_i \; \wedge \; T[v_i] = y \big) \nonumber \\
 \Rightarrow \quad & \bigvee_i \big(
 \lambda_i \wedge \Or{j}{s_{ij} = y}{\mu_{ij}} \big) & \Rightarrow & \quad \Or{i,j}{s_{ij}=y}{\big( \lambda_i \wedge \mu_{ij} \big)}
\end{align}
\end{proof}

The previous propositions can now be used to prove the unconditional correctness of the \textproc{genAtomsByType} and \textproc{genAtoms} generators.

\begin{prop}  \label{prop:GABT}
For any pex $d$ and any binding assertion $\alpha$ not binding $d$,\\ ${\textproc{genAtomsByType}(d,\alpha)}$ is correct.
\end{prop}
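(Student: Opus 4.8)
The plan is to prove the statement by strong induction on $\text{level}(d)$, treating \textproc{genAtomsByType} and \textproc{genAtoms} together through the bridge supplied by proposition \ref{prop:GABT_TO_GA}. Concretely, I would prove for every natural number $\ell$ the claim $P(\ell)$: for any pex $d$ with $\text{level}(d) = \ell$ and any binding assertion $\alpha$ not binding $d$, \textproc{genAtomsByType}$(d,\alpha)$ is correct. The conclusion of the present proposition is then exactly $\forall \ell : P(\ell)$. The base case $P(0)$ is immediate, since a pex of level $0$ is elementary and proposition \ref{prop:GABT_L0} applies directly.

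For the inductive step, fix $\ell \ge 1$ and assume $P(\ell')$ holds for every $\ell' < \ell$. The key intermediate fact I would first establish is that \textproc{genAtoms}$(x,\beta)$ is correct for every pex $x$ with $\text{level}(x) \le \ell - 1$ and every binding assertion $\beta$. This follows by splitting on whether $\beta$ binds $x$: if it does, proposition \ref{prop:GABT_TO_GA} yields correctness directly through its first disjunct; if it does not, then the induction hypothesis $P(\text{level}(x))$ gives the correctness of \textproc{genAtomsByType}$(x,\beta)$, and proposition \ref{prop:GABT_TO_GA} then yields the correctness of \textproc{genAtoms}$(x,\beta)$ through its second disjunct. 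In either case \textproc{genAtoms}$(x,\beta)$ is correct, which is precisely the hypothesis demanded by the four type-specific propositions.

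With this in hand, since $\ell \ge 1$ forces $d$ to be a derived pex, I would finish by a case analysis on the type of $d$: whichever of functional, tuple, conditional or table it is, the corresponding proposition among \ref{prop:GA_TO_GABT_FU}, \ref{prop:GA_TO_GABT_TU}, \ref{prop:GA_TO_GABT_CO} and \ref{prop:GA_TO_GABT_TA} applies, because each requires only that \textproc{genAtoms}$(x,\alpha)$ be correct for all pex $x$ of level at most $\ell - 1$ — exactly what was just established — and concludes that \textproc{genAtomsByType}$(d,\alpha)$ is correct. This proves $P(\ell)$ and closes the induction.

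The main subtlety to handle carefully is the mutual recursion between the two generators. The type-specific propositions invoke the sub-generators with \emph{enriched} binding assertions such as $\alpha \wedge \lambda_i$, which may or may not bind the sub-pex, so the argument must lean on proposition \ref{prop:GABT_TO_GA} covering \emph{both} the bound and the unbound cases in order to secure \textproc{genAtoms} correctness unconditionally at the lower levels; getting the quantification right (correctness for \emph{all} binding assertions at each level, not just the one passed from above) is the step that makes the induction go through.
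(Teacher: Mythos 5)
Your proof is correct and takes essentially the same route as the paper's: strong induction on $\text{level}(d)$, with proposition \ref{prop:GABT_L0} as the base case, proposition \ref{prop:GABT_TO_GA} as the bridge giving correctness of \textproc{genAtoms} at lower levels, and the four type-specific propositions \ref{prop:GA_TO_GABT_FU}, \ref{prop:GA_TO_GABT_TU}, \ref{prop:GA_TO_GABT_CO} and \ref{prop:GA_TO_GABT_TA} closing the inductive step. If anything, your explicit insistence that \textproc{genAtoms}$(x,\beta)$ be secured for \emph{all} binding assertions $\beta$ at the lower levels (splitting on whether $\beta$ binds $x$, so that both disjuncts of proposition \ref{prop:GABT_TO_GA} are used) is spelled out more carefully than in the paper's own proof, which leaves that quantification implicit even though the type-specific propositions do invoke the sub-generators with enriched assertions such as $\alpha \wedge \lambda_i$.
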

\begin{proof}
\sloppy Be a pex $d$ and a binding assertion $\alpha$. The proof goes by induction, using level$(d)$. If level($d$) = 0, then $d$ is an elementary pex and \textproc{genAtomsByType}$(d,\alpha)$ is correct by proposition \ref{prop:GABT_L0}. Consider now the case where level$(d) \ge 1$. Suppose that ${\textproc{genAtomsByType}(x,\alpha)}$ is correct for any pex $x$ with ${ \text{level}(x) \le \text{level}(d) - 1 }$ (inductive step assumption). Under this assumption, by proposition \ref{prop:GABT_TO_GA}, \textproc{genAtoms}$(x,\alpha)$ is also correct for any pex $x$ with ${ \text{level}(x) \le \text{level}(d) - 1 }$. We have now to prove that \textproc{genAtomsByType}$(d,\alpha)$ is correct. Since level$(d) \ge 1$, $d$ is of one of the following pex type: functional, tuple, conditional or table; the correctness of ${\textproc{genAtomsByType}(d,\alpha)}$ has been proved for any of these type in propositions \ref{prop:GA_TO_GABT_FU}, \ref{prop:GA_TO_GABT_TU}, \ref{prop:GA_TO_GABT_CO} and \ref{prop:GA_TO_GABT_TA}, respectively. This concludes the induction, so \textproc{genAtomsByType}$(d,\alpha)$ is correct whatever the value of $\text{level}(d)$.
\end{proof}

\begin{prop}  \label{prop:GA}
For any pex $d$ and any binding $\alpha$, \textproc{genAtoms}$(d,\alpha)$ is correct.
\end{prop}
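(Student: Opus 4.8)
The plan is to derive this from the two preceding results by a simple case split on whether $\alpha$ binds $d$, with no new induction required. Proposition \ref{prop:GABT} already establishes the unconditional correctness of \textproc{genAtomsByType}$(d,\alpha)$ whenever $\alpha$ does \emph{not} bind $d$; the only gap left for \textproc{genAtoms} is that its specification (Definition \ref{GA_DEF}) must hold for \emph{arbitrary} $\alpha$, including those that bind $d$. Since Proposition \ref{prop:GABT_TO_GA} was deliberately phrased as a disjunctive hypothesis --- ``if $\alpha$ binds $d$ \emph{or} if \textproc{genAtomsByType}$(d,\alpha)$ is correct'' --- both branches of the case split will feed directly into it.

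First I would fix an arbitrary pex $d \triangleq \dpd{Y}$ and an arbitrary binding assertion $\alpha$, and distinguish two cases. In the case where $\alpha$ binds $Y$, the first disjunct of the hypothesis of Proposition \ref{prop:GABT_TO_GA} is satisfied outright, so that proposition immediately yields the correctness of \textproc{genAtoms}$(d,\alpha)$; concretely this is the branch of algorithm \ref{algo:statues2b} (lines 2--3) that yields the single atom $(v,1,\true)$. In the complementary case where $\alpha$ does not bind $Y$, I would invoke Proposition \ref{prop:GABT} to conclude that \textproc{genAtomsByType}$(d,\alpha)$ is correct, which satisfies the second disjunct of the hypothesis of Proposition \ref{prop:GABT_TO_GA}; applying that proposition again gives correctness of \textproc{genAtoms}$(d,\alpha)$. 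Combining the two cases covers every $\alpha$, which is exactly the statement to prove.

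There is no genuine obstacle here, since the inductive heavy lifting was discharged in Proposition \ref{prop:GABT}: the present statement is essentially a corollary that merely reconciles the ``$\alpha$ not binding $d$'' restriction appearing in the \textproc{genAtomsByType} specification with the unrestricted quantification over $\alpha$ in the \textproc{genAtoms} specification. The only point warranting care is to make the logical bookkeeping explicit --- namely that the two cases are exhaustive and that each supplies a \emph{different} disjunct of the hypothesis of Proposition \ref{prop:GABT_TO_GA} --- so that the reader sees why both this proposition and Proposition \ref{prop:GABT} are needed and why neither alone suffices.
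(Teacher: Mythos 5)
Your proof is correct and is exactly the argument the paper intends: its one-line proof (``a direct application of propositions \ref{prop:GABT} and \ref{prop:GABT_TO_GA}'') unpacks precisely to your case split on whether $\alpha$ binds $d$, with each branch supplying one disjunct of the hypothesis of Proposition \ref{prop:GABT_TO_GA}. You have merely made the bookkeeping explicit, which is a faithful (and slightly more readable) rendering of the same approach.
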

\begin{proof}
This is a direct application of propositions \ref{prop:GABT} and \ref{prop:GABT_TO_GA}.
\end{proof}

Now that the correctness of the generator \textproc{genAtoms} has been established according to its specification \ref{GA_DEF}, we are able to conclude the proof of the Statues algorithm, by proving the correctness of the \textproc{marg} subroutine, as specified in \ref{MA_DEF}.

\begin{prop} \label{prop:MA}
For any pex $d$, \textproc{marg}$(d)$ is correct.
\end{prop}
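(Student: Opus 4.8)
The plan is to reduce everything to the already-established correctness of \textproc{genAtoms} (proposition \ref{prop:GA}) instantiated at the empty binding assertion $\alpha = \true$. First I would invoke proposition \ref{prop:GA} on \textproc{genAtoms}$(d,\true)$, which by definition \ref{GA_DEF} yields finitely many atoms $(v_i,\; p_i,\; \lambda_i)$ satisfying GA1 through GA6 with $\alpha$ replaced by $\true$; in particular GA2 reduces to ``$\lambda_i$ is a binding assertion'', GA3 to ``the $\lambda_i$ are mutually exclusive'', GA4 to $\PR{\lambda_i} = p_i / \sum_k p_k$, GA5 to $Y = y \Rightarrow \Or{i}{v_i = y}{\lambda_i}$, and GA6 to $\lambda_i \Rightarrow Y = v_i$. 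Termination of \textproc{marg}$(d)$ is then immediate: GA1 guarantees the collecting loop (line 3 of algorithm \ref{algo:statues1b}) runs over a finite list, and the remaining error-check and normalization steps are manifestly finite.

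Next I would split on whether any atom is produced. If $n = 0$, then $a$ stays empty and \textproc{marg} halts with an error; I would show this matches definition \ref{MA_DEF} by using GA5, whose right-hand side is the empty disjunction $\false$, so that $Y = y \Rightarrow \false$ for every value $y$, forcing $\PR{Y = y} = 0$ and hence $\text{dom}(Y)$ empty. If $n \ge 1$, then $a \neq \{\}$ and the subroutine returns the normalized map $\bp{r}$ with $\bp{r}[v] = a[v]/s$, where after condensation $a[v] = \sum_{i : v_i = v} p_i$ and $s = \sum_{(v,p) \in a} p = \sum_k p_k$.

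The core of the argument is then to identify $\bp{r}[v]$ with $\PR{Y = v}$ for each value $v$ appearing among the $v_i$. Dividing $a[v]$ by $s$ and applying GA4 termwise, I get $\bp{r}[v] = a[v]/s = \sum_{i : v_i = v} \PR{\lambda_i}$. Because the $\lambda_i$ are mutually exclusive (GA3), this sum equals $\PR{\Or{i}{v_i = v}{\lambda_i}}$. It remains to show $\Or{i}{v_i = v}{\lambda_i}$ and $Y = v$ are probabilistically equivalent: GA6 gives $\lambda_i \Rightarrow Y = v_i$, hence $\Or{i}{v_i = v}{\lambda_i} \Rightarrow Y = v$, while GA5 gives the converse $Y = v \Rightarrow \Or{i}{v_i = v}{\lambda_i}$; the two implications force equal probabilities, so $\bp{r}[v] = \PR{Y = v}$. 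For the domain equality $\text{dom}(\bp{r}) = \text{dom}(Y)$ I would argue both inclusions: each $v_i$ has $\PR{\lambda_i} > 0$ (GA2 makes $\lambda_i$ a non-contradictory binding assertion), so $p_i > 0$ by GA4 and $\bp{r}[v_i] = \PR{Y = v_i} > 0$, placing $v_i \in \text{dom}(Y)$; conversely any $y \in \text{dom}(Y)$ has $\PR{Y = y} > 0$, so GA5 forbids the empty disjunction and some $v_i = y$. Finally $\sum_v \bp{r}[v] = s/s = 1$ confirms $\bp{r}$ is a genuine pmf.

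The main obstacle is the probabilistic bookkeeping in this central identification step: one must carefully combine GA5 and GA6 to see that the atoms tagged with $v_i = v$ exactly cover the event $Y = v$ from both sides, and then lean on the mutual exclusivity of GA3 to turn the probability of their disjunction into the condensed sum $a[v]$. Everything else --- termination, the empty-domain error branch, and normalization summing to one --- follows routinely once proposition \ref{prop:GA} is in hand, so the weight of the proof lies entirely in invoking the right specification clause at the right moment rather than in any fresh computation.
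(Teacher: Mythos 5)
Your proof is correct and follows essentially the same route as the paper's: both invoke proposition \ref{prop:GA} at $\alpha = \true$, split on whether any atom is yielded, read the empty case off GA5 to justify the error branch, and combine GA4 (probabilities), GA3 (mutual exclusiveness turning the condensed sum into a probability of a disjunction), and GA5/GA6 (the two-way implication with $Y=y$) to conclude $\bp{r}[y] = \PR{Y=y}$. The only minor difference is that you establish $\text{dom}(\bp{r}) \subseteq \text{dom}(Y)$ via GA2, GA4 and GA6 where the paper cites GA5, which is if anything a slightly more careful justification of the same step.
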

\begin{proof}
Be a pex $d \triangleq \dpd{Y}$ for some random variable $Y$. According to line 3 of algorithm \ref{algo:statues1b}, ${\textproc{genAtoms}(d,\true)}$ yields $(v_i,\; p_i,\; \lambda_i)$ at iteration $i$. By proposition \ref{prop:GA}, since $\true$ is a binding assertion, \textproc{genAtoms}$(d,\true)$ is correct: it yields $n$ atoms $(v_i,\; p_i,\; \lambda_i)$ that verify conditions GA1, GA2, GA3, GA4, GA5 and GA6 (see \ref{GA_DEF}) with $\alpha \triangleq \true$. \textproc{marg}$(d)$ terminates because the sole loop on atoms yielded by \textproc{genAtoms}$(d,\true)$ is ensured to terminate according to GA1. There are two cases to verify, depending whether atoms are yielded or not.
\begin{case}
If no atom is yielded ($n = 0$), then the condition GA5 becomes
\begin{align}
& \forall \; y: \; \true \wedge Y = y \quad \Rightarrow \quad  \false \nonumber \\
\text{or, equivalently: \quad } & \forall \; y: \; Y \neq y
\end{align}
which expresses that dom$(Y)$ is empty. Since $n=0$, the $a$ pmf remains empty: according to algorithm \ref{algo:statues1b} (lines 9-11), the \textproc{marg} function reports an error. This is the correct behavior when dom$(Y)$ is empty, according to definition \ref{MA_DEF}.
\end{case}
\begin{case}
If at least one atom is yielded ($n \ge 1$), then the \textproc{genAtoms} function returns a pmf $\bp{r}$ from the atoms $(v_i, p_i ,\lambda_i)$ yielded by \textproc{genAtoms}. Be a value $y$. We examine two cases, depending whether $y$ belongs or not to dom$(Y)$.
\begin{addmargin}[1em]{0em}
-- If $y \not \in \text{dom}(Y)$, then, according to GA5, no atom is yielded with $v_i = y$; according to the algorithm \ref{algo:statues1b}, ${y \not \in \text{dom}(\bp{r})}$.\\
-- If $y \in \text{dom}(Y)$, then, according to GA5, some atom is yielded with $v_i = y$; 
according to the algorithm \ref{algo:statues1b}, ${y \in \text{dom}(\bp{r})}.$
\end{addmargin}
From these two cases, we have then proved that $\text{dom}(Y) = \text{dom}(\bp{r})$, which is the first condition required for \textproc{marg} to be correct (see \ref{MA_DEF}). 
\\
Let us proof now the second condition, which assumes that $y \in \text{dom}(Y)$. The condensation part of the algorithm \ref{algo:statues1b} (lines 2 to 8) entails
\begin{equation} \bp{r}[y] \; = \; \frac{\sum\limits_{\substack{i\\v_i=y}} p_i}{\;\;\sum\limits_k p_k\;\;} \; = \; \sum\limits_{\substack{i\\v_i=y}} \frac{p_i}{\sum\limits_k p_k} \end{equation}
Using GA4, the $p_i$ in the right-hand part can be replaced by probabilities:
\begin{equation} \bp{r}[y] \; = \; \sum_{\substack{i\\v_i=y}} \PR{\lambda_i} \end{equation}
Using GA3 (mutual exclusiveness) and the third axiom of probability, we can replace the sum of probabilities by a probability of a disjunction:
\begin{equation} \bp{r}[y] \; = \; \PR{ \Or {i}{v_i=y}{\lambda_i} } \end{equation}
Merging GA5 and GA6, we establish the following equivalence:
\begin{equation} \Or{i}{v_i=y}{\lambda_i} \quad \Leftrightarrow \quad Y = y\end{equation}
hence
\begin{equation} \bp{r}[y] \; = \; \PR{Y = y} \end{equation}
which is the second and last condition required for \textproc{marg} to be correct (see \ref{MA_DEF}).
\end{case}
\end{proof}

The proof of the correctness of the \textproc{marg} subroutine in proposition \ref{prop:MA} establishes the correctness of the Statues algorithm.

\end{appendices}

\end{document}